\newcommand{\citep}{\cite}
\newcommand{\citet}{\cite}
\tikzset{
    ->, 
    level distance = 12em,
    minimum size=2em,
    level 1/.style={sibling distance=6em},
    level 2/.style={sibling distance=3em},
    thick/.style = {line width=1.5pt},
    extra thick/.style = {line width=3.5pt},
    red node/.style={shape=circle,draw=red,fill=red!40,thick,inner sep=1.2},
    blue node/.style={shape=circle,draw=blue,fill=blue!40,thick,inner sep=1.2}
}
\tikzstyle{round}=[thick,draw=black,circle]
\theoremstyle{plain}
\newtheorem{theorem}{\textbf{Theorem}}
\newtheorem{lemma}[theorem]{\textbf{Lemma}}
\newtheorem{corollary}[theorem]{Corollary}
\newtheorem{proposition}[theorem]{Proposition}
\newtheorem{definition}{Definition}
\theoremstyle{definition}
\newtheorem{assumption}{Assumption}
\newtheorem{remark}{Remark}
\newcommand{\E}{\mathbb{E}}
\newcommand{\EE}{\mathbb{E}}
\newcommand{\R}{\mathbb{R}}
\newcommand{\RR}{\mathbb{R}}
\newcommand{\Q}{\mathcal{Q}}
\newcommand{\Qcal}{\mathcal{Q}}
\newcommand{\Acal}{\mathcal{A}}
\newcommand{\C}{\mathcal{C}}
\newcommand{\T}{\mathcal{T}}
\newcommand{\Tcal}{\mathcal{T}}
\newcommand{\D}{\mathcal{D}}
\newcommand{\Scal}{\mathcal{S}}
\newcommand{\F}{\mathcal{F}}
\newcommand{\W}{\mathcal{W}}
\newcommand{\Wcal}{\mathcal{W}}
\newcommand{\Pt}{\wt{P}^\pi}
\newcommand{\Ncal}{\mathcal{N}}
\DeclareMathOperator*{\argmin}{arg\,min}
\DeclareMathOperator*{\argmax}{arg\,max}
\newcommand{\wt}{\widetilde}
\newcommand{\wh}{\widehat}
\newcommand{\ol}{\overline}
\newcommand{\pp}{\mathbb{P}}
\newcommand{\lb}{\left[} 
\newcommand{\rb}{\right]}
\newcommand{\lv}{\left\vert} 
\newcommand{\rv}{\right\vert}
\newcommand{\lp}{\left(} 
\newcommand{\rp}{\right)}
\newcommand{\llangle}{\Big\langle}
\newcommand{\rrangle}{\Big\rangle}
\newcommand{\ediv}{/} %
\newcommand{\emult}{\circ}
\newcommand{\diag}{\text{Diag}\xspace}
\newcommand{\dr}{{dr}}
\newcommand{\init}{\mu_0}
\newcommand{\initsa}{\mu_0^\pi}
\newcommand{\para}[1]{\textbf{#1}~}
\title{Beyond the Return: Off-policy Function Estimation under %
User-specified Error-measuring Distributions} %
\author{%
  Audrey Huang \\
  Computer Science\\
  University of Illinois at Urbana-Champaign \\
  \texttt{audreyh5@illinois.edu} \And
  Nan Jiang \\
  Computer Science\\
  University of Illinois at Urbana-Champaign \\
  \texttt{nanjiang@illinois.edu}
}
\newcommand{\kibitz}[2]{\ifnum\Comments=1{\textcolor{#1}{\textsf{\footnotesize #2}}}\fi}
\begin{document}

\maketitle

\begin{abstract}
Off-policy evaluation often refers to two related tasks: estimating the expected return of a policy and estimating its value function (or other functions of interest, such as density ratios). While recent works on marginalized importance sampling (MIS) show that the former can enjoy provable guarantees under realizable function approximation, the latter is only known to be feasible under much stronger assumptions  such as prohibitively expressive discriminators. 
In this work, we provide guarantees for off-policy function estimation under  only realizability, by imposing proper regularization on the MIS objectives. Compared to commonly used regularization in MIS, 
our regularizer is much more flexible and can account for an arbitrary user-specified distribution, under which the learned function will be close to the groundtruth. We  provide  \textit{exact} characterization of the optimal dual solution that needs to be realized by the discriminator class, which determines the data-coverage assumption in the case of value-function learning. As another surprising observation, the regularizer can be altered to relax the data-coverage requirement, and completely eliminate it in the ideal case with strong side information. %
\end{abstract}

\section{Introduction} %
Off-policy evaluation (OPE) often refers to two related tasks in reinforcement learning (RL): estimating the expected return of a target policy using a dataset collected from a different \textit{behavior} policy, versus estimating the policy's value function (or other functions of interest, such as density ratios). The former is crucial to hyperparameter tuning and verifying the performance of a policy before real-world deployment in offline RL \citep{voloshin2019empirical, paine2020hyperparameter, zhang2021towards}. The latter, on the other hand, plays an important role in (both online and offline) training, often as the subroutine of actor-critic-style algorithms \citep{lagoudakis2003least, liu2019off}, but is also generally more difficult than the former: if an accurate value function is available, one could easily estimate the return by plugging in the initial distribution. 

Between the two tasks, the theoretical nature of off-policy return estimation is relatively well understood, especially in terms of the function-approximation assumptions needed for sample-complexity guarantees. Among the available algorithms, importance sampling (IS) and its variants \citep{precup2000eligibility, thomas2015high, jiang2016doubly} do not require any function approximation, but incur \textit{exponential-in-horizon} variance; %
Fitted-Q Evaluation \citep{ernst2005tree,le2019batch} can enjoy polynomial sample complexity under appropriate coverage assumptions, but the guarantee relies on the strong Bellman-completeness assumption on the function class; marginalized importance sampling (MIS) methods, which have gained significant attention recently \citep{liu2018breaking,xie2019towards,uehara2019minimax,nachum2019dualdice}, use two function classes to simultaneously approximate the value and the density-ratio (or weight) function and optimize minimax objectives. Notably, it is the only family of methods known to produce accurate return estimates with a polynomial sample complexity, when the function classes only satisfy the relatively weak realizability assumptions (i.e., they contain the true value and weight functions). %

In comparison, little is known about off-policy function estimation, and the guarantees are generally less desirable. Not only do the limitations of IS and FQE on return estimation carry over to this more challenging task, but MIS also loses its major advantage over FQE: despite the somewhat misleading impression left by many prior works, that MIS can handle function estimation the same way as return estimation,\footnote{For example, \citet{liu2019off} assumed a weight estimation oracle and cited \citet{liu2018breaking} as a possible instance.} %
MIS for function estimation often requires unrealistic assumptions  such as prohibitively expressive discriminators. %
For concreteness, a typical guarantee for function estimation from MIS looks like the following (see e.g., Theorem 4 of \citet{liu2018breaking} and Lemmas 1 and 3 of \citet{uehara2019minimax}): 

\begin{proposition}[Function estimation guarantee for MIS, informal] \label{prop:rich_disc}
Suppose the offline data distribution $d^D$ satisfies $d^D(s,a) > 0, \forall s, a$. Given value-function class $\Qcal$ with $q^\pi \in \Qcal$ and weight class $\Wcal = \RR^{\Scal\times\Acal}$, $q^\pi = \argmin_{q\in\Qcal} \max_{w\in \Wcal} L(w, q)$ for some appropriate population loss function $L$.\footnote{Concretely, one can choose $L$ as Eq.\eqref{eq:lagrangian_reg_q} without the regularization term, which recovers MQL in \citet{uehara2019minimax}.}
\end{proposition}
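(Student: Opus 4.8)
The plan is to take $L$ to be the (unregularized) Lagrangian objective of Eq.~\eqref{eq:lagrangian_reg_q}, i.e.\ the MQL objective of \citet{uehara2019minimax},
\[
L(w,q) \;=\; (1-\gamma)\,\E_{s_0\sim\mu_0,\; a_0\sim\pi}\big[q(s_0,a_0)\big] \;+\; \E_{(s,a,r,s')\sim d^D,\; a'\sim\pi}\Big[w(s,a)\big(r+\gamma q(s',a')-q(s,a)\big)\Big],
\]
and to argue that, under the two stated hypotheses, its minimax solution is exactly $q^\pi$. The first step is the routine population simplification: conditioning the inner expectation on $(s,a)$ and using $\E[\,r+\gamma q(s',a')\mid s,a\,] = (\Tcal^\pi q)(s,a)$ turns the objective into $L(w,q) = (1-\gamma)\E_{\mu_0^\pi}[q] + \E_{d^D}\big[w(s,a)\,(\Tcal^\pi q - q)(s,a)\big]$, which is affine in $w$ with slope given by the $d^D$-reweighted Bellman residual of $q$.

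Next I would compute the inner maximum $\max_{w\in\Wcal}L(w,q)$ over $\Wcal=\RR^{\Scal\times\Acal}$ for each fixed $q$, in two cases. \emph{Case $q=q^\pi$:} since $q^\pi$ satisfies $\Tcal^\pi q^\pi = q^\pi$, the residual vanishes identically, so $L(w,q^\pi)=(1-\gamma)\E_{\mu_0^\pi}[q^\pi]$ for every $w$, and the inner max equals this finite constant. \emph{Case $q\neq q^\pi$:} because $\Tcal^\pi$ is a $\gamma$-contraction in sup-norm ($\gamma<1$) and hence has $q^\pi$ as its \emph{unique} fixed point, $q\neq q^\pi$ forces $(\Tcal^\pi q - q)(\bar s,\bar a)\neq 0$ for some $(\bar s,\bar a)$; the full-coverage hypothesis $d^D(\bar s,\bar a)>0$ is exactly what makes the weight $w = t\cdot\mathrm{sign}\big((\Tcal^\pi q - q)(\bar s,\bar a)\big)\cdot\Ind[(s,a)=(\bar s,\bar a)]$, $t\to\infty$, drive $L(w,q)\to+\infty$, so the inner max is $+\infty$.

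Finally I would assemble the pieces: $\max_{w\in\Wcal}L(w,q)$ equals the finite constant $(1-\gamma)\E_{\mu_0^\pi}[q^\pi]$ at $q=q^\pi$ and equals $+\infty$ at every other $q$, so on any class $\Qcal\ni q^\pi$ the outer minimization $\min_{q\in\Qcal}\max_{w\in\Wcal}L(w,q)$ is attained --- uniquely --- at $q^\pi$, which is the claim. The statement has no real obstacle; the only points worth recording carefully are (i) which conditioning step produces $\Tcal^\pi q - q$, and (ii) that \emph{both} $\Wcal=\RR^{\Scal\times\Acal}$ and $d^D(s,a)>0$ for all $(s,a)$ are used essentially in the divergence argument. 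It is precisely the simultaneous need for a fully expressive discriminator class and a fully supported data distribution that the remainder of the paper is devoted to weakening.
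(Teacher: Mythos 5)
Your proof is correct, and it is worth noting that the paper itself never proves Proposition~\ref{prop:rich_disc}: it is stated informally in the introduction as a summary of results from prior work (Theorem 4 of Liu et al.\ and Lemmas 1 and 3 of Uehara et al.), so there is no in-paper argument to compare against. Your argument is the standard one underlying those cited results and is complete: the population objective is affine in $w$ with slope $d^D \emult (\Tcal^\pi q - q)$, so over $\Wcal = \RR^{\Scal\times\Acal}$ the inner maximum is finite exactly when the Bellman residual vanishes $d^D$-almost everywhere, which under full support $d^D(s,a)>0$ and the uniqueness of the fixed point of the $\gamma$-contraction $\Tcal^\pi$ happens only at $q=q^\pi$. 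This correctly isolates the two hypotheses (fully expressive $\Wcal$ and fully supported $d^D$) that the rest of the paper works to weaken. Two minor remarks: (i) the footnote's choice of $L$ (Eq.\eqref{eq:lagrangian_reg_q} without regularization) does not include the initial-state term $(1-\gamma)\E_{\mu_0^\pi}[q]$ that you added --- that term belongs to the weight-learning objective Eq.\eqref{eq:lagrangian_reg_w} --- but since it is bounded on any bounded $\Qcal$ it does not affect your finite-vs-$+\infty$ dichotomy; (ii) your argument in fact shows the inner max is $+\infty$ off $q^\pi$, so the identification is even point-wise, which is consistent with the paper's remark that such guarantees are ``overly strong'' and only meaningful at the population level.
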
 

To enable the identification of the value function $q^\pi$, the result %
requires the discriminator class $\Wcal$ to be the \textit{space of all possible functions over the state-action space} ($\Wcal= \RR^{\Scal\times\Acal}$). In the finite-sample regime, using such a class incurs a sample complexity that depends on the size of the state-action space, which completely defeats the purpose of function approximation. %

In addition, these results only hold asymptotically, where the function of interest can be exactly identified in a point-wise manner. Such an overly strong guarantee is unrealistic in the finite-sample regime, where one can only hope to approximate the function well in an average sense \textit{under some distribution}, i.e., finite-sample performance guarantees should ideally bound $\|\wh{q} - q^\pi \|_{2, \nu}$ for the learned $\wh{q}$, where $\|\cdot\|_{2, \nu}$ is $\nu$-weighted 2-norm. Such fine-grained analyses are non-existent in MIS. Even in the broader literature, such results not only require Bellman-completeness-type assumptions \citep{uehara2021finite}, they also come with some fixed $\nu$ (which is not necessarily $d^D$; see Section~\ref{sec:related}) and the user has no freedom in choosing $\nu$. This creates a gap in the literature, as downstream learning algorithms that use off-policy function estimation as a subroutine often assume the estimation to be accurate under specific distributions \citep{kakade2002approximately, abbasi2019politex} (see details in Appendix~\ref{app:discuss}). 

To summarize, below are two important open problems on off-policy function estimation:
\begin{enumerate}[leftmargin=*]
\item \textit{Is it possible to obtain polynomial\footnote{By ``polynomial'', we mean polynomial in the horizon, the statistical capacities and the boundedness of the function classes, and the parameter that measures the degree of data coverage.} sample complexity for off-policy function estimation, using function classes that only satisfy realizability-type assumptions?}
\item \textit{Can we specify a distribution $\nu$ to the estimation algorithm, such that the learned function will be close to the groundtruth under $\nu$?}
\end{enumerate}
In this work, we answer both open questions in the positive. By imposing proper regularization on the MIS objectives, we provide off-policy function estimation guarantees under only realizability assumptions on the function classes. Compared to commonly used regularization in MIS \citep{nachum2019dualdice, nachum2020reinforcement, yang2020off}, 
our regularizer is much more flexible and can account for an arbitrary user-specified distribution $\nu$, under which the learned function will be close to the groundtruth. We  provide  \textit{exact} characterization of the optimal dual solution that needs to be realized by the discriminator, which determines the data-coverage assumption in value-function learning. As another surprising observation, the regularizer can be altered to relax the data-coverage requirement, and in the ideal case \textit{completely eliminate} it when strong side information is available. Proof-of-concept experiments are also conducted to validate our theoretical predictions.

\section{Related Works} \label{sec:related}
\para{Regularization in MIS} The use of regularization is very common in the MIS literature, especially in DICE algorithms \citep{nachum2019dualdice,nachum2019algaedice,yang2020off}. However, most prior works that consider regularization use tabular derivations and seldom provide finite-sample function-approximation guarantees on even return estimation, let alone function estimation. (An exception is the work of \citet{uehara2021finite}, who analyze related estimators under Bellman-completeness-type assumptions; see the next paragraph.)  
More importantly, prior works provide very limited understanding in how choice of regularization affects learning guarantees, 
and have considered only na\"ive forms of regularization (state-action-\textit{independent} and typically under $d^D$), 
under which different forms of regularization are essentially treated equally under a coarse-grained theory \citep{yang2020off}.
In contrast, we provide much more fine-grained characterization of the effects of regularization, which leads to novel insights about how to design better regularizers, 
and existing DICE estimators are subsumed as special cases of our method when we choose very simple regularizers (see Remark~\ref{rem:dualdice} in Section~\ref{sec:weight}). 

\para{Fitted-Q Evaluation (FQE)} Outside the MIS literature, one can obtain return \textit{and} value-function estimation guarantees via FQE \citep{duan2020minimax, chen2019information, le2019batch, uehara2021finite}. However, it is well understood that FQE and related approaches require   Bellman-completeness-type assumptions, such as the function class being \textit{closed} under the Bellman operator. %
Even putting aside the difference between completeness vs.~realizability, we allow for a user-specified error-measuring distribution, which is not available in FQE or any other existing method. %
The only distribution these methods are aware of is the data distribution $d^D$, and even so, FQE and variants rarely provide guarantees on $\|\wh{q} - q^\pi\|_{2, d^D}$, but often on the Bellman error (e.g., $\|\wh{q} - \Tcal^\pi \wh{q}\|_{2, d^D}$) instead \citep{uehara2021finite}, and obtaining guarantees on a distribution of interest often requires multiple indirect translations and loose relaxations.

\para{LSTDQ} Our analyses focus on general function approximation. %
When restricted to linear classes, function estimation guarantees for $q^\pi$ under $d^D$ can be obtained by LSTDQ methods \citep{lagoudakis2003least, bertsekas2009projected, dann2014policy} when the function class only satisfies realizability of $q^\pi$ \citep{perdomo2022sharp}. However, this requires an additional matrix invertibility condition (see Assumption 3 of \citet{perdomo2022sharp}), and it is still unclear what this condition corresponds to in general function approximation.\footnote{It is hinted by \citet{uehara2019minimax} that the invertibility is related to a loss minimization condition in MIS, but the connection only holds for return estimation.} Moreover, many general methods---including MIS \citep{uehara2019minimax} and other minimax methods \citep{antos2008learning, xie2021bellman}---coincide with LSTDQ in the linear case, so the aforementioned results can be viewed as a specialized analysis leveraging the properties of linear classes.

\para{PRO-RL \citep{zhan2022offline}} Our key proof techniques are adapted from \citet{zhan2022offline}, whose goal is offline policy learning. They learn the importance weight function $w^\pi$ for a near-optimal $\pi$, and provide $\|\wh{w} - w^\pi\|_{2, d^D}$ guarantees as an intermediate result. Despite using similar technical tools, our most interesting and surprising results  are in the value-function estimation setting, which is not considered by \citet{zhan2022offline}. Our novel algorithmic insights, such as incorporating error-measuring distributions and approximate models in the regularizers, are also potentially useful in \citet{zhan2022offline}'s policy learning setting. %
Our analyses also reveal a number of important differences between OPE and offline policy learning, which will be discussed in Appendix~\ref{app:discuss}.

\section{Preliminaries}
We consider %
off-policy evaluation (OPE) in Markov Decision Processes (MDPs). An MDP is specified by its state space $\Scal$, action space $\Acal$, transition dynamics $P: \Scal\times\Acal\to\Delta(\Scal)$ ($\Delta(\cdot)$ is the probability simplex), reward function $R: \Scal\times\Acal \to \Delta([0, 1])$, discount factor $\gamma \in [0, 1)$, and an initial state distribution $\init \in \Delta(\Scal)$. We assume $\Scal$ and $\Acal$ are finite and discrete, but their cardinalities can be arbitrarily large. %
Given a target policy $\pi: \Scal\to\Delta(\Acal)$,  a random trajectory $s_0, a_0, r_0, s_1, a_1, r_1, \ldots$ can be generated as $s_0 \sim \init, a_t\sim \pi(\cdot|s_t), r_t \sim R(\cdot|s_t, a_t), s_{t+1} \sim P(\cdot|s_t, a_t)$, $\forall t\ge 0$; we use $\EE_\pi$ and $\pp_\pi$ to refer to expectation and probability under such a distribution. The expected discounted return (or simply return) of $\pi$ is $J(\pi):=\EE_\pi[\sum_{t}\gamma^{t}r_t]$. 
The Q-value function of $\pi$ is the unique solution of the Bellman equations $q^\pi = \Tcal^\pi q^\pi$, with the Bellman operator $\Tcal^\pi: \RR^{\Scal\times\Acal} \to \RR^{\Scal\times\Acal}$ defined as $\forall q\in \RR^{\Scal\times\Acal}, (\Tcal^\pi q)(s,a) := \EE_{r \sim R(\cdot|s,a)}[r] + \gamma (P^{\pi} q)(s,a)$. Here $P^\pi \in \RR^{|\Scal\times\Acal|\times |\Scal\times\Acal|}$ is the state-action transition operator of $\pi$, defined as 
$(P^\pi q)(s,a) := \E_{s'  \sim P(\cdot|s,a), a'\sim \pi(\cdot|s')}[q(s',a')]$. Functions over $\Scal\times\Acal$ (such as $q$) are also treated as $|\Scal\times\Acal|$-dimensional vectors interchangeably.%

In OPE, we want to estimate $q^\pi$ and other functions of interest based on a historical dataset collected by a possibly different policy. As a standard simplification, we assume that the offline dataset consisting of $n$ i.i.d.~tuples $\{(s_i,a_i,r_i,s_i')\}_{i=1}^n$ sampled as $(s_i,a_i) \sim d^D$, $r \sim R(\cdot | s_i,a_i)$, and $s_i'\sim P(\cdot|s_i,a_i)$. We call $d^D$ the (offline) data distribution. As another function of interest, the (marginalized importance) weight function $w^\pi$ is defined as $w^\pi(s,a) := d^\pi(s,a)/d^D(s,a)$, where $d^\pi(s,a) = (1-\gamma) \sum_{t=0}^\infty \gamma^t \pp_\pi[s_t = s, a_t = a]$ is the discounted state-action occupancy of $\pi$. For technical convenience we assume $d^D(s,a) >0 ~\forall s, a$, so that quantities like $w^\pi$ are always well defined and finite.\footnote{It will be trivial to remove this assumption at the cost of cumbersome derivations. Also, these density ratios can still take prohibitively large values even if they are finite, and we will need to make additional boundedness assumptions to enable finite-sample guarantees anyway, so their finiteness does not trivialize the analyses.\label{footnote:data_coverage}} Similarly to $q^\pi$, $w^\pi$ also satisfies a recursive equation, inherited from the Bellman flow equation for $d^\pi$: $d^\pi = (1-\gamma)\initsa + \gamma \Pt d^\pi$, where $(s,a)\sim \initsa \Leftrightarrow s\sim \init, a \sim \pi(\cdot|s)$ is the initial state-action distribution, and $\Pt := (P^\pi)^\top$ is the transpose of the transition matrix.

\para{Function Approximation} We will use function classes $\Qcal$ and $\Wcal$ to approximate $q^\pi$ and $w^\pi$, respectively. We assume finite $\Qcal$ and $\Wcal$, and extension to infinite classes under appropriate complexity measures (e.g., covering number) is provided in Appendix \ref{appendix:infinite}. 

\para{Additional Notation} $\|\cdot\|_{2, \nu}:= \sqrt{\EE_{\nu}[(\cdot)^2]}$ is the weighted 2-norm of a function under distribution $\nu$. We also use a standard shorthand $f(s, \pi) := \EE_{a \sim \pi(\cdot|s)}[q(s, a)]$.  Elementwise multiplication between two vectors $u$ and $v$ of the same dimension is $u \emult v$, and elementwise division is $u \ediv v$.  

\section{Value-function Estimation} \label{sec:value}
In this section we show how to estimate $\wh{q} \approx q^\pi$ with guarantees on $\|\wh{q} - q^\pi\|_{2, \nu}$ for a user-specified $\nu$, and identify the assumptions under which provable sample-complexity guarantees can be obtained. 
We begin with the familiar Bellman equations, that $q^\pi$ is the unique solution to: 
\begin{align} \label{eq:q_bellman}
\EE_{r \sim R(\cdot|s,a)}[r] + \gamma \EE_{s'\sim P(\cdot|s,a)}[q(s',\pi)] - q(s,a)  = 0, ~ \forall s, a\in\Scal\times\Acal. 
\end{align}
While the above set of equations uniquely determines $q = q^\pi$, this is only true if we can enforce \textit{all} the $|\Scal\times\Acal|$ constraints, which is intractable in large state-space problems. In fact, even estimating (a candidate $q$'s violation of) a single constraint  is infeasible as that requires sampling from the same state multiple times, which is related to the infamous double-sampling problem \citep{baird1995residual}. 

To overcome this challenge, prior MIS works often relax Eq.\eqref{eq:q_bellman} by taking a \textit{weighted combination} of these equations, e.g.,
\begin{align} \label{eq:mql}
\E_{d^D}\lb w(s,a) \lp r(s,a) + \gamma q(s',\pi) - q(s,a) \rp \rb = 0, ~ \forall w\in \Wcal.
\end{align}
Instead of enforcing $|\Scal\times\Acal|$ equations, we only enforce their linear combinations; the linear coefficients are %
$d^D(s,a) \cdot w(s,a)$, and $w$ belongs to a class $\Wcal$ with limited statistical capacity to enable sample-efficient estimation. While each constraint in Eq.\eqref{eq:mql} can now be efficiently checked on data, this comes with a big cost that a solution to Eq.\eqref{eq:mql} is not necessarily $q^\pi$. Prior works handle this dilemma by aiming lower: instead of learning $\wh{q} \approx q^\pi$, they only learn $\wh{q}$ that can approximate the policy's return, i.e. $\E_{s\sim \init}[\wh{q}(s, \pi)] \approx J(\pi) = \E_{s\sim \init}[q^\pi(s, \pi)]$. While \citet{uehara2019minimax} show that the latter is possible when 
$w^\pi \in \Wcal$, they also show explicit counterexamples   %
where $\wh{q}\neq q^\pi$ even with infinite data. 
As a result, how to estimate $\wh{q} \approx q^\pi$ under comparable realizability assumptions (instead of the prohibitive  $\Wcal=\RR^{\Scal\times\Acal}$ as in Proposition~\ref{prop:rich_disc}) is still an open problem. 

\subsection{Estimator}\label{sec:value_estimator}
We now describe our approach to solving this problem. Recall that the goal is to obtain error bounds for $\|\wh{q} - q^\pi\|_{2, \nu}$ for some distribution $\nu \in \Delta(\Scal\times\Acal)$ specified by the user. Note that we do \textit{not} require information about $r$ and $s'$ that are generated after $(s,a) \sim \nu$ and only care about the  $(s,a)$ marginal itself, so the user can pick $\nu$ in an arbitrary manner without knowing the transition and the reward functions of the MDP. 
We assume that $\nu$ is given in a way that we can take its expectation $\EE_{(s,a)\sim \nu}[(\cdot)]$, and extension to the case where $\nu$ is given via samples is straightforward.

To achieve this goal, we first turn Eq.\eqref{eq:q_bellman} into an equivalent \textit{constrained convex program}: given a collection of \textit{strongly convex}  and differentiable functions $\{f_{s,a}:\RR \to \RR\}_{s,a}$---we will refer to the collection as $f$ and discuss its choice later---consider 
\begin{align}
\min_q \; &\E_{(s,a) \sim \nu}[f_{s,a}(q(s,a))] \label{eq:primal_reg_q} \\ 
\text{s.t.}\; &\EE_{r \sim R(\cdot|s,a)}[r] + \gamma \EE_{s'\sim P(\cdot|s,a)}[q(s',\pi)] - q(s,a)  = 0 , ~\forall s, a \in \Scal\times\Acal. \nonumber
\end{align}
The constraints here are the same as Eq.\eqref{eq:q_bellman}. Since Eq.\eqref{eq:q_bellman} uniquely determines $q = q^\pi$, the feasible space of Eq.\eqref{eq:primal_reg_q} is a \textit{singleton}, so we can impose any objective function on top of these constraints (here we use $\E_{(s,a) \sim \nu}[f_{s,a}(q(s,a))]$)  and it will not change the optimal solution (which is always $q^\pi$, the only feasible point). As we will see, however, $f = \{f_{s,a}:\RR \to \RR\}_{s,a}$ will serve as an important regularizer in the function-approximation setting and is crucial to our estimation guarantees.

\remark[$(s,a)$-dependence of $f$] Regularizers in prior works are  $(s,a)$-\textit{independent} \citep{nachum2019dualdice,yang2020off,zhan2022offline}. As we will see in Section~\ref{sec:coverage}, allowing for $(s,a)$-dependence is very important for designing regularizers with improved guarantees and performances.

We now rewrite \eqref{eq:primal_reg_q} in its Lagrangian form, with $d^D \emult w$ serving the role of dual variables:
\begin{equation}\label{eq:lagrangian_reg_q}
    \min_{q} \max_{w} L_f^q(q,w) := \E_\nu[f_{s,a}(q(s,a))] + \E_{d^D}\lb w(s,a) \lp r(s,a) + \gamma q(s',\pi) - q(s,a) \rp \rb.
\end{equation}
Finally, our actual estimator approximates Eq.\eqref{eq:lagrangian_reg_q} via finite-sample approximation of the population loss $L_f^q$, and searches over restricted function classes $\Qcal$ and $\Wcal$ for $q$ and $w$, respectively:
\begin{equation}\label{eq:sample_lagrangian_q}
    \wh{q} = \argmin_{q \in \Q} \max_{w \in \W} \wh{L}_f^q(q,w),
\end{equation}
where 
$\wh{L}_f^q(q,w) = \E_{\nu}[f_{s,a}(q(s, a))] + \frac{1}{n}\sum_{i=1}^n w(s_i, a_i)\lp r_i + \gamma q(s_i', \pi) - q(s_i,a_i) \rp.$ 

\paragraph{Intuition for identification} Before giving the detailed finite-sample analysis, we  provide some high-level intuitions for why we can obtain the desired guarantee on $\|\wh{q} - q^\pi\|_{2, \nu}$. Note that Eq.\eqref{eq:lagrangian_reg_q} is structurally similar to Eq.\eqref{eq:mql}, and we still cannot verify the Bellman equation for $q^\pi$ in a per-state-action manner, so the caveat of Eq.\eqref{eq:mql} seems to remain; why can we identify $q^\pi$ under $\nu$?

The key here is to show that it suffices to check the loss function $L_f^q$ only under a special choice of $w$ (as opposed to all of $\RR^{\Scal\times\Acal}$). Importantly, this special $w$ is \textit{not} $w=w^\pi$;\footnote{In fact, $w^\pi$ should not appear in our analysis at all: $w^\pi$ is defined w.r.t.~the initial distribution of the MDP, $\init$, which has nothing to do with our goal of bounding $\|\wh{q} - q^\pi\|_{2, \nu}$.} rather, it is the saddle point of our regularized objective $L_f^q$: let $(q^\pi, w_f^*)$ be a saddle point of $L_f^q$ (we will give the closed form of $w_f^*$ later). As long as $w_f^* \in \Wcal$---even if $\Wcal$ is extremely ``simple'' and contains nothing but  $w_f^*$---we can identify $q^\pi$. 

To see that, it is instructive to consider the special case of $\Wcal = \{w_f^*\}$ and the limit of infinite data. In this case, our estimator becomes $\argmin_{q\in\Qcal} L_f^q(q, w_f^*)$. By the definition of saddle point:
$$
L_f^q(q^\pi, w_f^*) \le L_f^q(q, w_f^*), ~\forall q.
$$
While this shows that $q^\pi$ is a minimizer of the loss, it does not imply that it is a unique minimizer. However,  identification immediately follows from the convexity brought by regularization: since $f:\RR \to \RR$ is strongly convex, $q \to \EE_{\nu}[f_{s,a}(q(s,a))]$ as a mapping from $\RR^{\Scal\times\Acal}$ to $\RR$ is strongly convex under $\|\cdot\|_{2, \nu}$ (see Lemma~\ref{lem:strongly_convex} in Appendix~\ref{app:proof_value} for a formal  statement and proof), and $L_f^q(q, w_f^*)$ inherits such convexity since the other terms are affine in $q$. It is then obvious that $q^\pi$ is the unique minimizer of $L_f^q(q^\pi, w_f^*)$ up to $\|\cdot\|_{2, \nu}$, that is, any minimizer of $L_f^q$ must agree with $q^\pi$ on $(s,a)$ pairs supported on $\nu$. Our finite-sample analysis below shows that the above reasoning is robust to finite-sample errors and the inclusion of functions other than $w_f^*$ in $\Wcal$. 

\subsection{Finite-sample Guarantees} \label{sec:q-theorem}
In this subsection we state the formal guarantee of our estimator for $q^\pi$ and the assumptions under which the guarantee holds. We start with the condition on the regularization function $f$:

\begin{assumption}[Strong convexity of $f$]\label{assum:regularizer_q}
Assume $f_{s,a}:\RR\to\RR$ is nonnegative, differentiable, and $M^q$-strongly convex for each $s\in\Scal, a\in\Acal$. In addition, assume both $f_{s,a}$ and its derivative $f_{s,a}'$ take finite values for any finite input. 
\end{assumption}

This assumption can be concretely satisfied by a simple choice of $f_{s,a}(x) = \frac{1}{2} x^2$, which is independent of $(s,a)$ and yields $M^q = 1$. Alternative choices of $f$ will be discussed in Section~\ref{sec:coverage}. 
Next are the realizability and boundedness of $\Wcal$ and $\Qcal$:
\begin{assumption}[Realizability]\label{assum:realizable_q}
Suppose $w^{*}_f \in \W$,  $q^\pi \in \Q$. 
\end{assumption}

\begin{assumption}[Boundedness of $\W$ and $\Q$]\label{assum:bounded_q}
    Suppose $\W$ and $\Q$ are bounded, that is, \\
$        C_\Q^q  := \max_{q \in \Q} \|q\|_\infty < \infty$, ~~
        $C_\W^q  := \max_{w \in \W} \|w\|_\infty < \infty$.
\end{assumption}

As a remark, Assumption~\ref{assum:realizable_q} implicitly assumes the existence of $w_f^*$. As we will see in Section~\ref{sec:coverage}, the existence and finiteness of $w_f^*$ is automatically guaranteed given the finiteness of $f_{s,a}'$ (Assumption~\ref{assum:regularizer_q}) and $d^D(s,a) >0 ~ \forall s,a$. More importantly, Assumptions \ref{assum:realizable_q} and \ref{assum:bounded_q} together imply that $\|q^\pi\|_\infty \le C_\Q^q$ and $\|w_f^*\|_\infty \le C_\W^q$, which puts constraints on how small $C_\Q^q$ and $C_\W^q$ can be. For example, it is common to assume that $C_\Q^q = \tfrac{1}{1-\gamma}$, i.e., the maximum possible return when rewards are bounded in $[0, 1]$, and this way $\|q^\pi\|_\infty \le C_\Q^q$ will hold automatically. The magnitude of $\|w_f^*\|_\infty$ and $C_{\Wcal}^q$, however, is more nuanced and interesting, and we defer the discussion to Section~\ref{sec:coverage}.

Now we are ready to state the main guarantee for identifying $q^\pi$. All proofs of this section can be found in Appendix~\ref{app:proof_value}. %
\begin{theorem}\label{thm:q_bound}
    Suppose Assumptions~\ref{assum:regularizer_q},~\ref{assum:realizable_q},~\ref{assum:bounded_q} hold. Then, with probability at least $1-\delta$, 
    \begin{align*}
        ||\wh{q} - q^\pi||_{2,\nu} \leq 2\sqrt{\frac{\epsilon_{stat}^q}{M^q}}, %
    \end{align*}
    where $\epsilon_{stat}^q = \lp C_\W^q + (1+\gamma) C_\W^q C_\Q^q \rp \sqrt{\frac{2\log \frac{2|\W||\Q|}{\delta}}{n}}$. 
\end{theorem}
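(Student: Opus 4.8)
The plan is to split the bound into a deterministic \emph{identification} step and a statistical \emph{estimation} step, linked through the population loss $L_f^q(\cdot,w_f^*)$. Writing $g(q):=L_f^q(q,w_f^*)$, the target is the two-part chain
\[
\tfrac{M^q}{2}\,\|\wh{q}-q^\pi\|_{2,\nu}^2 \;\le\; g(\wh{q})-g(q^\pi) \;=\; L_f^q(\wh{q},w_f^*)-L_f^q(q^\pi,w_f^*) \;\le\; 2\,\epsilon_{stat}^q ,
\]
after which rearranging gives $\|\wh{q}-q^\pi\|_{2,\nu}\le 2\sqrt{\epsilon_{stat}^q/M^q}$.

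For the identification step (left inequality), note that the regularizer term $q\mapsto \E_\nu[f_{s,a}(q(s,a))]$ is $M^q$-strongly convex with respect to $\|\cdot\|_{2,\nu}$ by Lemma~\ref{lem:strongly_convex} (using Assumption~\ref{assum:regularizer_q}), and the remaining term of $L_f^q(\cdot,w_f^*)$ is affine in $q$ once $w$ is fixed; hence $g$ is $M^q$-strongly convex in $\|\cdot\|_{2,\nu}$. By the saddle-point property of $(q^\pi,w_f^*)$, $q^\pi$ is an unconstrained minimizer of $g$ over $\RR^{\Scal\times\Acal}$ (equivalently, the stationarity condition $\nabla g(q^\pi)=0$ that defines $w_f^*$ holds), so the quadratic-growth consequence of strong convexity yields $g(q)-g(q^\pi)\ge \tfrac{M^q}{2}\|q-q^\pi\|_{2,\nu}^2$ for every $q$, in particular for $q=\wh{q}$.

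For the estimation step (right inequality), first observe that because $\nu$ is assumed known, the term $\E_\nu[f_{s,a}(q(s,a))]$ is identical in $L_f^q$ and $\wh{L}_f^q$, so the deviation $L_f^q(q,w)-\wh{L}_f^q(q,w)$ is exactly the gap between the population and empirical averages of $w(s_i,a_i)(r_i+\gamma q(s_i',\pi)-q(s_i,a_i))$. Under Assumption~\ref{assum:bounded_q} each summand lies in an interval of width $2\big(C_\W^q+(1+\gamma)C_\W^q C_\Q^q\big)$ (using $|w|\le C_\W^q$, $|r_i|\le1$, and $|q(s',\pi)|,|q(s,a)|\le C_\Q^q$), so Hoeffding's inequality together with a union bound over the $|\W||\Q|$ pairs gives, with probability at least $1-\delta$, $\sup_{q\in\Q,w\in\W}|L_f^q(q,w)-\wh{L}_f^q(q,w)|\le\epsilon_{stat}^q$ (the sup is needed because $\wh{q}$ is data-dependent). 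On this event,
\begin{align*}
L_f^q(\wh{q},w_f^*)
&\le \max_{w\in\W} L_f^q(\wh{q},w)
\le \max_{w\in\W}\wh{L}_f^q(\wh{q},w)+\epsilon_{stat}^q \\
&\le \max_{w\in\W}\wh{L}_f^q(q^\pi,w)+\epsilon_{stat}^q
\le \max_{w\in\W} L_f^q(q^\pi,w)+2\epsilon_{stat}^q
= L_f^q(q^\pi,w_f^*)+2\epsilon_{stat}^q ,
\end{align*}
where the first step uses $w_f^*\in\W$ (Assumption~\ref{assum:realizable_q}), the third uses that $\wh{q}$ minimizes $\max_{w\in\W}\wh{L}_f^q(\cdot,w)$ over $\Q\ni q^\pi$, the second and fourth use the uniform deviation bound, and the last equality holds because $q^\pi$ satisfies the Bellman equations~\eqref{eq:q_bellman} exactly, so the linear-in-$w$ term $\E_{d^D}[w(s,a)(r+\gamma q^\pi(s',\pi)-q^\pi(s,a))]$ vanishes for \emph{every} $w$, making $L_f^q(q^\pi,\cdot)$ constant and equal to $L_f^q(q^\pi,w_f^*)$.

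Combining the two steps gives $\|\wh{q}-q^\pi\|_{2,\nu}^2\le 4\epsilon_{stat}^q/M^q$, which is the theorem. Most of the argument is bookkeeping; the one step that deserves genuine care is the final equality of the chain — that replacing the maximization over all of $\RR^{\Scal\times\Acal}$ by the maximization over the (possibly tiny) class $\W$ costs nothing on the $q^\pi$ side — which rests precisely on $q^\pi$ being an exact solution of the Bellman equations, while on the $\wh{q}$ side the corresponding anchoring step relies on the realizability $w_f^*\in\W$; matching the constant in $\epsilon_{stat}^q$ then only requires tracking the $\ell_\infty$ bounds on $w$, $q$, and $r$.
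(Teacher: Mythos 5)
Your proposal is correct and follows essentially the same route as the paper: strong convexity of $q\mapsto L_f^q(q,w_f^*)$ under $\|\cdot\|_{2,\nu}$ (the paper's Lemma~\ref{lem:strongly_convex}) converts a loss gap into the $\nu$-weighted error, and the loss gap is bounded by $2\epsilon_{stat}^q$ via exactly the same ingredients as the paper's Lemma~\ref{lem:q_estimation_error} (Hoeffding plus a union bound over $\Q\times\W$, the empirical minimax optimality of $\wh{q}$ with $q^\pi\in\Q$, and $w_f^*\in\W$). The only cosmetic difference is that at the $q^\pi$ end of the chain you invoke the vanishing of the Bellman residual to make $L_f^q(q^\pi,\cdot)$ constant in $w$, where the paper uses the (weaker but sufficient) saddle-point inequality $L_f^q(q^\pi,w_f^*)\ge L_f^q(q^\pi,\wh{w}(q^\pi))$.
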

Theorem~\ref{thm:q_bound} shows the desired bound on $\|\wh{q} - q^\pi\|_{2, \nu}$, which depends on the magnitude of functions in $\Wcal$ and $\Qcal$ as well as their logarithmic cardinalities, which are standard measures of statistical complexity for finite classes. One notable weakness is the $O(n^{-\nicefrac{1}{4}})$ slow rate; this is due to translating the  $\epsilon_{stat}^q=O(n^{-\nicefrac{1}{2}})$ deviation between $L$ and $\wh{L}$ into $\|\wh{q} - q^\pi\|_{2, \nu}$ via a convexity argument, which takes  square root of the error. The possibility of and obstacles to obtaining an $O(n^{-\nicefrac{1}{2}})$ rate will be discussed in Section~\ref{sec:conclusion}.

\subsection{On the Closed Form of $w_f^*$ and the Data Coverage Assumptions}
\label{sec:coverage}

One unusual aspect of our guarantees in Section~\ref{sec:q-theorem} is that we do not make any explicit data coverage assumptions, yet such assumptions are known to be necessary even for return estimation (typically the boundedness of $w^\pi = d^\pi/d^D$). 
Indeed, our data-coverage assumption is implicit in Assumptions~\ref{assum:realizable_q} and \ref{assum:bounded_q}, which require $\|w_f^*\|_\infty \le C_\W^q < \infty$. 
If data fails to provide sufficient coverage, $\|w_f^*\|_\infty$ will be large and our bound in Theorem~\ref{thm:q_bound} will suffer due to a large value of $C_\W^q$.

To make the data coverage assumption explicit, we provide the closed-form expression of $w_f^*$:
\begin{lemma}\label{lem:wfstar_closedform}
The saddle point of \eqref{eq:lagrangian_reg_q} is $(q^\pi, w_f^*) = \argmin_{q} \argmax_{w} L_f^q(q, w)$, where
\begin{equation}\label{eq:wfstar}
w_f^* =   (I - \gamma \Pt)^{-1} \lp\nu \emult f'(q^\pi) \rp \ediv d^D.
\end{equation}
Here $f'(q^\pi)$ is the shorthand for $[f'_{s,a}(q^\pi(s,a))]_{s,a} \in \RR^{\Scal\times\Acal}$.
\end{lemma}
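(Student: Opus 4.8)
The plan is to verify the two saddle-point inequalities directly, obtaining the formula for $w_f^*$ from the first-order stationarity condition of the (unconstrained) inner minimization over $q$.

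First I would confirm that the right-hand side of Eq.~\eqref{eq:wfstar} is a well-defined finite vector, which simultaneously discharges the existence claim flagged after Assumption~\ref{assum:realizable_q}. Since $P^\pi$ is row-stochastic, $\Pt = (P^\pi)^\top$ has spectral radius $1$, so $\gamma \Pt$ has spectral radius $\gamma < 1$ and the Neumann series $\sum_{t \ge 0} \gamma^t (\Pt)^{t}$ converges to $(I - \gamma \Pt)^{-1}$. By Assumption~\ref{assum:regularizer_q} each $f'_{s,a}$ is finite on finite inputs and $q^\pi$ is bounded, so $f'(q^\pi)$ is finite; multiplying entrywise by $\nu$, applying $(I - \gamma \Pt)^{-1}$, and dividing entrywise by $d^D > 0$ all preserve finiteness, so $w_f^*$ is finite.

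Next I would establish $L_f^q(q^\pi, w) \le L_f^q(q^\pi, w_f^*) \le L_f^q(q, w_f^*)$ for all $q, w \in \RR^{\Scal\times\Acal}$. The left inequality is in fact an equality: at $q = q^\pi$ the Bellman residual $r(s,a) + \gamma (P^\pi q^\pi)(s,a) - q^\pi(s,a)$ vanishes for every $(s,a)$ by Eq.~\eqref{eq:q_bellman}, so the second term of $L_f^q(q^\pi, \cdot)$ is identically zero and $L_f^q(q^\pi, w) = \E_\nu[f_{s,a}(q^\pi(s,a))]$ regardless of $w$. For the right inequality, note that $q \mapsto L_f^q(q, w_f^*)$ is convex on $\RR^{\Scal\times\Acal}$: the term $\E_\nu[f_{s,a}(q(s,a))]$ is (strongly) convex by Assumption~\ref{assum:regularizer_q} and Lemma~\ref{lem:strongly_convex}, while the remaining term is affine in $q$. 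Hence it suffices to check that $q^\pi$ is a stationary point of $L_f^q(\cdot, w_f^*)$.

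The gradient computation is the one place that needs care. Writing $L_f^q$ coordinatewise with $(P^\pi q)(s,a) = \sum_{s',a'} P(s'|s,a)\pi(a'|s') q(s',a')$, the partial derivative of the bilinear term with respect to $q(\bar s,\bar a)$ is $\gamma\,\big(\Pt(d^D \emult w)\big)(\bar s,\bar a) - d^D(\bar s,\bar a) w(\bar s,\bar a)$, the transpose $\Pt$ arising exactly from differentiating a $(d^D\emult w)^\top P^\pi q$-type form in $q$. Adding the derivative $\nu \emult f'(q)$ of the regularizer, the stationarity condition $\nabla_q L_f^q(q, w) = 0$ reads
\begin{equation*}
\nu \emult f'(q) + \gamma \Pt (d^D \emult w) - d^D \emult w = 0 .
\end{equation*}
Setting $q = q^\pi$ and solving, $(I - \gamma \Pt)(d^D \emult w) = \nu \emult f'(q^\pi)$, so $w = (I - \gamma \Pt)^{-1}\big(\nu \emult f'(q^\pi)\big) \ediv d^D = w_f^*$, which is precisely Eq.~\eqref{eq:wfstar}. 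Therefore $q^\pi$ minimizes $L_f^q(\cdot, w_f^*)$ and $w_f^*$ maximizes the (constant-in-$w$) map $L_f^q(q^\pi, \cdot)$, so $(q^\pi, w_f^*)$ is a saddle point. Running this reasoning in reverse gives uniqueness: any saddle point $(q_0, w_0)$ has $\max_w L_f^q(q_0, w) < \infty$, which forces $q_0 = q^\pi$ because $d^D > 0$ makes every Bellman residual detectable, and then stationarity forces $w_0 = w_f^*$; hence $(q^\pi, w_f^*)$ is \emph{the} saddle point. The main obstacle is purely bookkeeping: handling the transpose and the $d^D \emult w$ grouping correctly in the gradient, and noting that Eq.~\eqref{eq:lagrangian_reg_q} minimizes over all of $\RR^{\Scal\times\Acal}$ rather than the restricted class $\Qcal$, so that first-order stationarity is both necessary and sufficient.
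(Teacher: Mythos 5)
Your proposal is correct and rests on exactly the same computation as the paper's proof: the stationarity condition $\nu \emult f'(q^\pi) + \gamma \Pt (d^D \emult w) - d^D \emult w = 0$, solved for $w$ via $(I-\gamma\Pt)^{-1}$. The only difference is presentational --- the paper invokes ``strong duality holds'' and reads this equation off the KKT conditions, whereas you verify the saddle-point inequalities directly (observing that $L_f^q(q^\pi,\cdot)$ is constant in $w$ because the Bellman residual vanishes, and that convexity plus stationarity makes $q^\pi$ the minimizer of $L_f^q(\cdot,w_f^*)$), and you additionally supply the finiteness of $w_f^*$ and the uniqueness of the saddle point, which the paper asserts elsewhere without proof.
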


The closed-form expression in Eq.\eqref{eq:wfstar} looks very much like a density ratio: if we replace $\nu \emult f'(q^\pi)$ with $\initsa$, we have $(I - \gamma \Pt)^{-1} \initsa = d^\pi/(1-\gamma)$, and the expression would be the ratio between $d^\pi$ and $d^D$ (up to a horizon factor). Therefore, $w_f^*$ can be viewed as the density ratio of $\pi$ against $d^D$ when $\pi$ starts from the ``fake'' initial distribution $\nu \emult f'(q^\pi)$. However, $\nu \emult f'(q^\pi)$ is in general not a valid distribution, as it is not necessarily normalized or even non-negative, making $\|w_f^*\|_\infty$ difficult to intuit. Below we give relaxations of $\|w_f^*\|_\infty$, which are more interpretable and give novel insights into how to relax the data-coverage assumption via tweaking $f$. 

\begin{proposition} 
\label{prop:wfstar_bound}
$\|w_f^*\|_\infty \le \tfrac{1}{1-\gamma}\cdot  \|d_\nu^\pi / d^D\|_\infty \cdot \|f'(q^\pi)\|_\infty$, where $d_{\nu}^\pi$ is the discounted state-action occupancy of $\pi$ under $\nu$ as the initial state-action distribution. 
\end{proposition}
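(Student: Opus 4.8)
The plan is to start from the closed-form expression in Lemma~\ref{lem:wfstar_closedform}, namely $w_f^* = (I - \gamma \Pt)^{-1}(\nu \emult f'(q^\pi)) \ediv d^D$, and bound it entrywise. First I would split off the magnitude of the regularizer derivative: write $\nu \emult f'(q^\pi) = \nu \emult g$ where $g := f'(q^\pi)$, and note that $|g(s,a)| \le \|f'(q^\pi)\|_\infty$ for all $(s,a)$. Since $(I-\gamma\Pt)^{-1} = \sum_{t=0}^\infty \gamma^t (\Pt)^t$ is an entrywise-nonnegative matrix (each $\Pt$ is a substochastic/stochastic transition matrix, so all powers have nonnegative entries), applying it to a vector whose entries are bounded in absolute value by $\|g\|_\infty$ times $\nu$ gives, entrywise, something bounded in absolute value by $\|g\|_\infty \cdot (I-\gamma\Pt)^{-1}\nu$. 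This is the key monotonicity step: $|(I-\gamma\Pt)^{-1}(\nu\emult g)| \le \|g\|_\infty \cdot (I-\gamma\Pt)^{-1}\nu$ holds coordinatewise.

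Next I would identify $(I-\gamma\Pt)^{-1}\nu$ with an (unnormalized) occupancy measure. By the same Bellman-flow reasoning recalled in the Preliminaries — where $(I-\gamma\Pt)^{-1}\initsa = d^\pi/(1-\gamma)$ — replacing the initial distribution $\initsa$ by the genuine distribution $\nu$ yields $(I-\gamma\Pt)^{-1}\nu = d_\nu^\pi/(1-\gamma)$, where $d_\nu^\pi$ is precisely the discounted state-action occupancy of $\pi$ started from $\nu$, as defined in the proposition statement. This is legitimate here (unlike for $w_f^*$ itself) because $\nu$ \emph{is} a valid probability distribution, so $d_\nu^\pi$ is a bona fide occupancy measure. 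Combining, $|w_f^*| \le \tfrac{1}{1-\gamma}\|f'(q^\pi)\|_\infty \cdot (d_\nu^\pi \ediv d^D)$ entrywise, and taking the sup over $(s,a)$ gives $\|w_f^*\|_\infty \le \tfrac{1}{1-\gamma}\|d_\nu^\pi/d^D\|_\infty \|f'(q^\pi)\|_\infty$, as claimed. (The division by $d^D$ is harmless since $d^D(s,a)>0$ everywhere by assumption.)

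The only mildly delicate point — and the step I would be most careful about — is the entrywise monotonicity argument: one must confirm that $(I-\gamma\Pt)^{-1}$ has nonnegative entries and that it is legitimate to pull the scalar bound $\|f'(q^\pi)\|_\infty$ outside before applying the inverse. This follows from the Neumann series $\sum_t \gamma^t (\Pt)^t$ converging (since $\gamma<1$ and $\|\Pt\|_\infty \le 1$) with every term entrywise nonnegative, so for any vectors $u,v$ with $|u| \le v$ entrywise we get $|(I-\gamma\Pt)^{-1}u| \le (I-\gamma\Pt)^{-1}v$ entrywise. Applying this with $u = \nu\emult g$ and $v = \|g\|_\infty\,\nu$ finishes it. Everything else is bookkeeping, so I do not expect any real obstacle.
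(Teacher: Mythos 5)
Your proposal is correct and follows essentially the same route as the paper's proof: start from the closed form in Lemma~\ref{lem:wfstar_closedform}, pull out $\|f'(q^\pi)\|_\infty$, identify $(I-\gamma\Pt)^{-1}\nu$ with $d_\nu^\pi/(1-\gamma)$, and divide by $d^D$. In fact you are slightly more careful than the paper, which uses the entrywise monotonicity of $(I-\gamma\Pt)^{-1}$ implicitly in the step $|(I - \gamma \Pt)^{-1} (\nu \emult f'(q^\pi))| \leq \|f'(q^\pi)\|_\infty\, |(I - \gamma \Pt)^{-1} \nu|$ without invoking the nonnegativity of the Neumann series.
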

The proposition states that $\|w_f^*\|_\infty$ can be bounded if data provides sufficient coverage over $d_\nu^\pi$, and if $f'(q^\pi)$ is bounded. The former shows that $d^D$ needs to cover not only $\nu$, but also state-action pairs reachable by $\pi$ starting from $\nu$.
The latter is easily satisfied, and can be bounded again for concrete choices of $f$, e.g. $\|f'(q^\pi)\|_\infty \le \|q^\pi\|_\infty \leq \frac{1}{1-\gamma}$ for $f_{s,a}(x) = \tfrac{1}{2} x^2$. 

\paragraph{Designing $f$ to relax the coverage assumption} Lemma~\ref{lem:wfstar_closedform} shows that the coverage assumption (bounded $\|w_f^*\|_\infty$) depends on $f$ (or rather its derivative $f'$), which opens up the possibility of properly designing $f$ to relax it. In fact, we could completely eliminate the coverage assumption if we could set $f'(q^\pi) = \mathbf{0}$, but that would require unrealistically strong side information. 

As a concrete example, consider $f_{s,a}(x) = \tfrac{1}{2}(x - q^\pi(s,a))^2$, and it is easy to verify that $f'_{s,a}(q^\pi(s,a)) = x - q^\pi(s,a) |_{x = q^\pi(s,a)} = 0$. Compared to $f_{s,a}(x) = \tfrac{1}{2} x^2$, the new $f$ essentially adds a 1st-order term $q^\pi(s,a) \cdot x$ to change $w_f^*$, while leaving the convexity required by Assumption~\ref{assum:regularizer_q} intact, which only depends on the 2nd-order term $\tfrac{1}{2}x^2$. 
Of course, this is not a viable choice of $f$ in practice as it requires knowledge of $q^\pi$, which is precisely our learning target. %

While the reason $f_{s,a}(x) = \tfrac{1}{2}(x - q^\pi(s,a))^2$ can eliminate the coverage requirement is obvious retrospectively ($q^\pi$ already minimizes $\EE_\nu[f(q)]$ even without any data), our analyses apply much more generally and characterize the effects of arbitrary $f$ on the coverage assumption. Inspired by this example, 
we can consider practically feasible choices such as $f_{s,a}(x) = \tfrac{1}{2}(x-\widetilde{q}(s,a))^2$, where $\widetilde{q}$ is an approximation of $q^\pi$ obtained by other means, e.g. a guess based on domain knowledge. If $\widetilde{q} \approx q^\pi$, our estimator enjoys significantly relaxed coverage requirements. But even if $\widetilde{q}$ is a poor approximation of $q^\pi$, it does not affect our estimation guarantees as long as the condition implied by Proposition~\ref{prop:wfstar_bound} is satisfied. (In fact, $f_{s,a}(x) = \tfrac{1}{2}x^2$ is a special case of $\widetilde{q} \equiv 0$.) Such a use of approximate models is similar to how doubly robust estimators \citep{dudik2011doubly, jiang2016doubly,thomas2016data} enjoy reduced variance given an accurate model, and remain unbiased even if the approximate model is arbitrarily poor. We will show in Section~\ref{sec:experiments} that this idea is empirically effective. 

\section{Weight-function Estimation} \label{sec:weight}
Similar to value-function estimation, our methodology can also be applied to estimate the weight function $w^\pi$. Due to the similarity with Section~\ref{sec:value} in the high-level spirit, we will be concise in this section and only explain in detail when there is a conceptual difference from Section~\ref{sec:value}. Some notations (such as the function classes $\Wcal$ and $\Qcal$) will be abused, but we emphasize that this section considers a different learning task than Section~\ref{sec:value}, so they should be viewed as different objects (e.g., the realizability assumptions for $\Wcal$ and $\Qcal$ below will be different from those in Section~\ref{sec:value}). 

As before, we assume that the user provides a distribution\footnote{Recall we assume $d^D(s,a)>0 ~ \forall s, a$ for technical convenience. When this is not the case, $\eta$ should be supported on $d^D$, as the target function $w^\pi$ is only defined on the support of $d^D$.} $\eta\in\Delta(\Scal\times\Acal)$ and our goal is to develop an estimator with guarantees on $\|\wh{w} - w^\pi\|_{2, \eta}$. 
Analogous to Section~\ref{sec:value}, consider %
\begin{align}
    \min_w \; &\E_{(s,a) \sim \eta}[f_{s,a}(w(s,a))] \label{eq:dual_reg_w} \\ 
    \text{s.t.}\; & \textstyle d^D(s,a)w(s,a) = (1-\gamma)\mu_0^\pi(s,a) + \gamma \sum_{s',a'}P^\pi(s,a|s',a')d^D(s',a')w(s,a), ~\forall s,a. \nonumber
\end{align}
Here $f = \{f_{s,a}\}_{s,a}$ will need to satisfy similar %
assumptions as in Section~\ref{sec:value}. 
The constraints are the Bellman flow equations with a change of variable $d(s,a) = d^D(s,a) \cdot w(s,a)$. Their unique solution is $d(s,a) = d^\pi(s,a)$ (and hence $w(s,a) = d^\pi(s,a)/d^D(s,a)$), thus the feasible space is again a singleton, and the objective does not alter the optimal solution. %
We then use dual variables $q$ to rewrite~\eqref{eq:dual_reg_w} in its Lagrangian form: $ \min_{w} \max_{q} L_f^w(q,w) := $ 
\begin{equation}\label{eq:lagrangian_reg_w}
   \E_{\eta}\lb f_{s,a}(w(s,a))\rb + (1-\gamma) \E_{\mu_0}\lb q(s,\pi)\rb + \E_{d^D}\lb  w(s,a)(\gamma  q(s',\pi) - q(s,a)) \rb
\end{equation}
We approximate the saddle-point solutions by optimizing the empirical loss $\wh{L}_f^w$ over restricted function classes $\W, \Q$: %
$\wh{w} = \argmin_{w \in \W} \max_{q \in \Q} \wh{L}_f^w(q,w)$, 
where $\wh{L}_f^w(q,w) := \E_{\eta}\lb f_{s,a}(w(s,a))\rb + (1-\gamma)   \frac{1-\gamma}{n_0}\sum_{j=1}^{n_0} q(s_j,\pi) + \frac{1}{n}\sum_{i=1}^n w(s_i, a_i)\lp \gamma q(s_i', \pi) - q(s_i,a_i) \rp$, and %
$\{s_j\}_{j=1}^{n_0}$ is a separate dataset sampled i.i.d.~from $\init$ to provide information about the initial distribution. %

We provide the closed-form expression for the saddle point of $L_f^w$ below, which resembles the Q-function for a proxy reward function $f'(w^\pi) \emult \eta  \ediv d^D$.
\begin{lemma}\label{lem:qfstar_closedform}
    The closed form solutions of~\eqref{eq:lagrangian_reg_w} are $(w^\pi, q_f^*) = \argmin_w \argmax_{q}L_f^w(q,w)$, where 
    \begin{equation}\label{eq:qfstar}
        q_f^* = (I - \gamma P^\pi)^{-1} ( f'(w^\pi) \emult \eta  \ediv d^D).
    \end{equation}
\end{lemma}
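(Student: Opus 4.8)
The plan is to mirror the proof of Lemma~\ref{lem:wfstar_closedform}, with the roles of primal variable and multiplier interchanged: here $w$ (carrying the objective $\E_\eta[f_{s,a}(w(s,a))]$) is the primal variable of the convex program~\eqref{eq:dual_reg_w}, while $q$ is the Lagrange multiplier attached to the Bellman-flow constraints. Because each $f_{s,a}$ is convex and the constraints are affine in $w$, the map $w\mapsto L_f^w(q,w)$ is convex and $q\mapsto L_f^w(q,w)$ is affine, so it suffices to exhibit a saddle point, i.e., a pair $(w^\pi,q_f^*)$ with $L_f^w(q,w^\pi)\le L_f^w(q_f^*,w^\pi)\le L_f^w(q_f^*,w)$ for all $q$ and all $w$; the convex--concave structure then certifies that this pair is a min--max (and max--min) solution. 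Before anything else I would note that $q_f^*$ is well defined and finite: $P^\pi$ is row-stochastic, so $\gamma P^\pi$ has spectral radius $\le\gamma<1$ and $(I-\gamma P^\pi)^{-1}=\sum_{t\ge 0}\gamma^t (P^\pi)^t$ exists; moreover $f'(w^\pi)\emult\eta\ediv d^D$ is finite because $d^D(s,a)>0$, $w^\pi=d^\pi/d^D$ is finite, and $f'_{s,a}$ is finite on finite inputs (the analog of Assumption~\ref{assum:regularizer_q}).

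For the right-hand inequality, I would collect the $q$-dependent terms of $L_f^w(q,w)$ and, using $\langle d^D\emult w, P^\pi q\rangle=\langle \Pt(d^D\emult w),q\rangle$ and $(1-\gamma)\E_{\mu_0}[q(s,\pi)]=(1-\gamma)\langle\mu_0^\pi,q\rangle$, rewrite them as $\langle q,\,(1-\gamma)\mu_0^\pi+\gamma\Pt(d^D\emult w)-d^D\emult w\rangle$. When $w=w^\pi$ the bracketed vector is exactly the residual of the Bellman-flow equation $d^D\emult w^\pi=(1-\gamma)\mu_0^\pi+\gamma\Pt(d^D\emult w^\pi)$, which vanishes since $w^\pi$ is the unique feasible point of~\eqref{eq:dual_reg_w}; hence $q\mapsto L_f^w(q,w^\pi)$ is constant, so $q_f^*$ (like every $q$) maximizes $L_f^w(\cdot,w^\pi)$.

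For the left-hand inequality, I would differentiate in $w$: $\partial_{w(s,a)}L_f^w(q,w)=\eta(s,a)f'_{s,a}(w(s,a))-d^D(s,a)\big((I-\gamma P^\pi)q\big)(s,a)$. Plugging in $q=q_f^*$ and the defining identity $(I-\gamma P^\pi)q_f^*=f'(w^\pi)\emult\eta\ediv d^D$ collapses this to $\eta(s,a)\big(f'_{s,a}(w(s,a))-f'_{s,a}(w^\pi(s,a))\big)$, which is $0$ at $w=w^\pi$. Since $L_f^w(q_f^*,\cdot)$ is convex in $w$, a stationary point is a global minimizer, so $L_f^w(q_f^*,w^\pi)\le L_f^w(q_f^*,w)$ for all $w$. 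Combining the two inequalities gives the saddle point and hence the claimed closed form.

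I do not anticipate a deep obstacle; the main care is the bookkeeping in the transposition and derivative identities above, together with the conceptual point that, exactly as in Section~\ref{sec:value}, $L_f^w(q,w^\pi)$ is constant in $q$, so $q_f^*$ is not the unique maximizer---the expression in~\eqref{eq:qfstar} is rather the multiplier singled out by the $w$-stationarity (KKT) condition, which is the sense in which it is ``the'' solution. Likewise, if $\eta$ does not have full support then the $w$-minimizer of $L_f^w(q_f^*,\cdot)$ is pinned down only on $\mathrm{supp}(\eta)$, but $w^\pi$ is always one such minimizer; strong convexity of $f_{s,a}$ (needed only for the downstream finite-sample analysis, not for this lemma) upgrades this to uniqueness on $\mathrm{supp}(\eta)$.
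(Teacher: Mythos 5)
Your proposal is correct, and its computational core is identical to the paper's: the identity $\eta \emult f'(w^\pi) = d^D \emult (I-\gamma P^\pi) q_f^*$ obtained by setting $\partial_{w(s,a)} L_f^w = 0$ at $(w^\pi, q_f^*)$ is exactly the stationarity condition the paper writes down. The difference is in the logical direction. The paper argues via \emph{necessity}: it invokes strong duality to assert that a saddle point exists and satisfies the KKT conditions, takes for granted that its primal component is $w^\pi$ (the unique feasible point), and then solves the stationarity equation for $q_f^*$. You instead argue via \emph{sufficiency}: you define $q_f^*$ by the closed form, check that $(I-\gamma P^\pi)^{-1}$ exists via the Neumann series, and then verify both saddle-point inequalities directly --- the $q$-side because the coefficient of $q$ in $L_f^w(\cdot, w^\pi)$ is the Bellman-flow residual, which vanishes at $w^\pi$, and the $w$-side because the $w$-gradient of $L_f^w(q_f^*,\cdot)$ collapses to $\eta \emult (f'(w) - f'(w^\pi))$, which is zero at $w^\pi$, and convexity upgrades stationarity to global minimality. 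Your route is more self-contained (it does not presuppose existence of a saddle point or appeal to a strong-duality theorem) at the cost of slightly more bookkeeping, and it surfaces a subtlety the paper glosses over: since $L_f^w(\cdot, w^\pi)$ is constant in $q$, the dual maximizer is not unique, and Eq.\eqref{eq:qfstar} is ``the'' solution only in the sense of being the multiplier singled out by the stationarity condition --- which is precisely the object the realizability assumption $q_f^* \in \Qcal$ and the finite-sample analysis need. Both arguments are sound; no gap.
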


\remark[Data Coverage Assumption] As we will see, the only data coverage assumption we need is the boundedness of $w^\pi = d^\pi/d^D$. Since $w^\pi$ is the function of interest and practical algorithms can only output functions of well-bounded ranges, such an assumption is an essential part of the learning task itself and hardly an additional requirement. Moreover, unlike Section~\ref{sec:value}, changing $f$ here will not affect the data-coverage assumption, though it still alters $q_f^*$, and a properly chosen $f$ (e.g., with $f'(w^\pi) \approx \mathbf{0}$) can still result in a $q_f^*$ with small magnitude and thus make learning easier. 

\remark[Connection to DualDICE\label{rem:dualdice}] We can recover DualDICE~\citep{nachum2019dualdice} by choosing $f_{s,a}(x) = \frac{1}{2} x^2$ and $\nu = d^D$. %
Despite producing the same estimator, the derivations and assumptions under which the two works analyze the estimator are different. Their Theorem 2 only provides return estimation guarantees, and depends on an implicit assumption of highly expressive function classes\footnote{In our notation, they measure the approximation error of $\Wcal$ as $\max_{w'\in\RR^{\Scal\times\Acal}} \min_{w\in\Wcal} \|w - w'\|$, essentially requiring $\Wcal$ (and similarly $\Qcal$) to closely approximate every function over $\Scal\times\Acal$. However, we suspect that they could have measured realizability errors instead without changing much of their proofs.} similar to Proposition~\ref{prop:rich_disc}. 
Moreover, they do not characterize how the choice of $f$ can affect the learning guarantees (their $f$ is $(s,a)$-independent). This is one of the main insights of our paper and leads to the discovery of more practical regularizers, e.g. $f_{s,a}(x) = \tfrac{1}{2} (x - \tilde{w}(s,a))^2$ with model $\wt{w}$.

Below we present the assumptions, then learning guarantee for $\wh{w}$. 

\begin{assumption}[Strongly Convex  Objective]\label{assum:regularizer_w}
Suppose for all $s,a$, $f_{s,a}$ is differentiable, non-negative, and $M^w$-strongly convex. Further, suppose $f_{s,a}$ and its derivative take finite values on any finite inputs, and let $C_f^w := \max_{w \in \W}||f(w)||_\infty$. 
\end{assumption}

\begin{assumption}[Realizability]\label{assum:realizable_w}
Suppose $w^{\pi} \in \W$, $q_f^* \in \Q$. 
\end{assumption}

\begin{assumption}[Bounded $\W$ and $\Q$]\label{assum:bounded_w}
    Let $C_\W^w  := \max_{w \in \W} ||w||_\infty$ and $C_\Q^w  := \max_{q \in \Q} ||q||_\infty$. Suppose $\W$ and $\Q$ are bounded function classes, that is, $C_\W^w < \infty$ and $C_\Q^w < \infty$. 
\end{assumption}

\begin{theorem}\label{thm:w_bound}
    Suppose Assumptions~\ref{assum:regularizer_w},~\ref{assum:realizable_w},~\ref{assum:bounded_w}, hold. Then w.p. $\geq 1-\delta$, 
$        \|\wh{w} - w^\pi\|_{2,\eta} \leq 2\sqrt{\frac{\epsilon_{stat}^w}{M^w}}, %
$
    where $\epsilon_{stat}^w = \lp C^w_{f} + (1+\gamma) C_\W^w C_\Q^w \rp \sqrt\frac{2\log \frac{4|\Q||\W|}{\delta}}{n} + (1-\gamma)C_\Q^w \sqrt\frac{2\log \frac{4|\Q|}{\delta}}{n_0}$. 
\end{theorem}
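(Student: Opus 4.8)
The plan is to mirror the proof of Theorem~\ref{thm:q_bound}, which the excerpt tells us to reuse wherever there is no ``conceptual difference.'' The skeleton has three stages: (i) a uniform-deviation bound $|\wh{L}_f^w(q,w) - L_f^w(q,w)| \le \tfrac12\epsilon_{stat}^w$ simultaneously over all $q\in\Qcal, w\in\Wcal$, w.p.~$\ge 1-\delta$; (ii) a deterministic ``optimization'' step showing that any approximate saddle point of $L_f^w$ is close to $w^\pi$ in $\|\cdot\|_{2,\eta}$, using the strong convexity supplied by $f$ and the realizability $w^\pi\in\Wcal$, $q_f^*\in\Qcal$; and (iii) combining the two, with a factor-$2$ and a square root, to get $\|\wh w - w^\pi\|_{2,\eta} \le 2\sqrt{\epsilon_{stat}^w/M^w}$.

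For step (i), I would split $\wh{L}_f^w - L_f^w$ into two independent pieces: the initial-distribution term $(1-\gamma)\big(\tfrac1{n_0}\sum_j q(s_j,\pi) - \E_{\mu_0}[q(s,\pi)]\big)$, controlled by a Hoeffding bound on a sum of $n_0$ i.i.d.\ terms each bounded by $C_\Qcal^w$ in magnitude (giving the $(1-\gamma)C_\Qcal^w\sqrt{2\log(4|\Qcal|/\delta)/n_0}$ summand after a union bound over $\Qcal$), and the transition term $\tfrac1n\sum_i w(s_i,a_i)(\gamma q(s_i',\pi)-q(s_i,a_i)) - \E_{d^D}[\cdots]$, whose summands are bounded by $(1+\gamma)C_\Wcal^w C_\Qcal^w$ in magnitude; a Hoeffding bound plus a union bound over $\Qcal\times\Wcal$ gives the first summand. (The $f$-term $\E_\eta[f_{s,a}(w(s,a))]$ is a population quantity with no sampling error, hence absent from $\epsilon_{stat}^w$; $C_f^w$ enters only through the optimization step, not here.) Splitting the failure probability as $\delta/2$ and $\delta/2$ yields the $\log(4|\Qcal||\Wcal|/\delta)$ and $\log(4|\Qcal|/\delta)$ denominators.

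For step (ii), the argument is the same convexity trick sketched in Section~\ref{sec:value_estimator} after Eq.\eqref{eq:sample_lagrangian_q}, transposed to the $w$-objective. Because $(w^\pi, q_f^*)$ is a saddle point of $L_f^w$ (Lemma~\ref{lem:qfstar_closedform}) and because $w\mapsto \E_\eta[f_{s,a}(w(s,a))]$ is $M^w$-strongly convex w.r.t.\ $\|\cdot\|_{2,\eta}$ (the analogue of Lemma~\ref{lem:strongly_convex}, with the remaining terms of $L_f^w$ affine in $w$), one gets, for the learned $\wh w$,
\begin{align*}
\tfrac{M^w}{2}\|\wh w - w^\pi\|_{2,\eta}^2 \le L_f^w(q_f^*, \wh w) - L_f^w(q_f^*, w^\pi) \le L_f^w(q_f^*,\wh w) - \max_q L_f^w(q, w^\pi) + (\text{0 by saddle}),
\end{align*}
and then a standard chain relating $L_f^w(q_f^*,\wh w)$ to $\wh L_f^w(q_f^*, \wh w) \le \wh L_f^w(\wh q, \wh w) = \min_w \max_q \wh L_f^w$ (using $q_f^*\in\Qcal$), to $\min_w \wh L_f^w(q_f^*, w) \le \wh L_f^w(q_f^*, w^\pi)$ (using $w^\pi\in\Wcal$), to $L_f^w(q_f^*, w^\pi)$, incurring $\epsilon_{stat}^w$ in total across the $\le$ steps via step (i). This bounds $\tfrac{M^w}{2}\|\wh w - w^\pi\|_{2,\eta}^2 \le \epsilon_{stat}^w$, i.e., the claimed rate.

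The main obstacle is step (ii): verifying that the value on the saddle-point dual slice coincides with the constrained-program optimum, i.e., that $\max_q L_f^w(q, w^\pi) = \E_\eta[f(w^\pi)]$ with the maximizer attained at $q_f^*\in\Qcal$, and that $\max_q L_f^w(q,\wh w)$ stays controlled even though $\wh w$ need not be feasible for the Bellman-flow constraints — here one must be careful that the cross terms, though affine in $w$, are \emph{not} affine in $q$ jointly, so the interchange of the inner $\max_q$ with finite-sample and class-restriction steps has to be ordered exactly as above (evaluate at the fixed $q_f^*$ rather than re-optimizing). Everything else — the Hoeffding applications, the union bounds, the bookkeeping of constants $C_f^w, C_\Wcal^w, C_\Qcal^w$ — is routine and parallels the $q$-case almost verbatim.
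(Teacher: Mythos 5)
Your proposal matches the paper's proof essentially step for step: the same Hoeffding-plus-union-bound treatment of the two empirical terms with the failure probability split $\delta/2$ each, the same strong-convexity argument for $w\mapsto L_f^w(q_f^*,w)$ under $\|\cdot\|_{2,\eta}$ (via the analogue of Lemma~\ref{lem:strongly_convex}), and the same five-step telescoping chain through $\wh{L}_f^w$ using $q_f^*\in\Qcal$, $w^\pi\in\Wcal$, the definition of $\wh{w}$, and the saddle-point property --- exactly as in Lemmas~\ref{lem:w_statistical_error} and~\ref{lem:w_estimation_error}. Your observation that $C_f^w$ should not appear in $\epsilon_{stat}^w$ when $\E_\eta[f_{s,a}(w(s,a))]$ is computed exactly is also consistent with the paper, which flags precisely this in a remark in Appendix~\ref{app:proof_weight}; the only discrepancy is a harmless factor-of-2 in your intermediate bookkeeping of the deviation bound, which does not affect the stated conclusion.
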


\section{Experiments}\label{sec:experiments}
We now provide experimental results to verify our theoretical predictions and insights. As \citet{yang2020off} have performed extensive experiments on return estimation with simple regularization ($f_{s,a}(x) = \tfrac{1}{2}x^2$), we focus on the task of $q^\pi$ estimation, and the following two questions unique to our work: \\[0.25em]
\textbf{Q1.}~ When the goal is to minimize $\|\wh{q}-q^\pi\|_{2, \nu}$, how much benefit does regularizing with $\nu$ bring in practice, compared to regularizing with other distributions (or no regularization at all)?\\[0.25em]
\textbf{Q2.}~ Can incorporating (even relatively poor) models in regularization (e.g., $f_{s,a}(x) = \tfrac{1}{2}(x - \widetilde{q}(s,a))^2$ from Section~\ref{sec:coverage}) improve estimation?

\para{Setup} We study these questions in a large tabular Gridwalk environment  \cite{nachum2019dualdice, yang2020off}, with a deterministic target policy $\pi$ that is optimal, and a behavior policy that provides limited coverage over the target policy; see Appendix~\ref{appendix:experiments} for further details. To mimic the identification challenges associated with restricted function classes, we use a linear function class $\Q = \{\Phi^\top \alpha : \alpha \in \R^d\}$ and discriminator class $\W = \{\wt\Phi^\top \beta : \beta \in \R^k\}$, where $k < d \ll |\Scal\times \Acal|$. The features $\Phi \in \RR^{|\Scal\times\Acal|\times d}$, $\wt\Phi \in \RR^{|\Scal\times\Acal|\times k}$ are chosen to satisfy the realizability assumptions of all estimators. Under linear classes, our estimator (Eq.\eqref{eq:sample_lagrangian_q}) becomes a convex optimization problem with $d$ variables and $k$ linear constraints, and can be solved by standard packages. %
This allows us to avoid difficult minimax optimization---which is still an open problem in the MIS literature---and focus on the statistical behaviors of our estimators, which is what our theoretical predictions are about. %

\remark When no regularization is used, our linear estimator coincides with MQL~\cite{uehara2019minimax}. If we further had $\wt\Phi = \Phi$, the estimator would coincide with LSTDQ. While Section~\ref{sec:related} mentioned that LSTDQ enjoys function-estimation guarantees \citep{perdomo2022sharp} (and folklore suggests they extend to $\wt\Phi \ne \Phi$), the guarantee only holds in the regime of $k\ge d$, i.e., the $k$ linear constraints are \textit{over-determined}. In our case, however, we have \textit{under-determined} constraints ($k < d$), creating a more challenging learning task (which our theory can handle) where LSTDQ's guarantees do not apply.

\begin{figure}[t]
    \includegraphics[width=\textwidth]{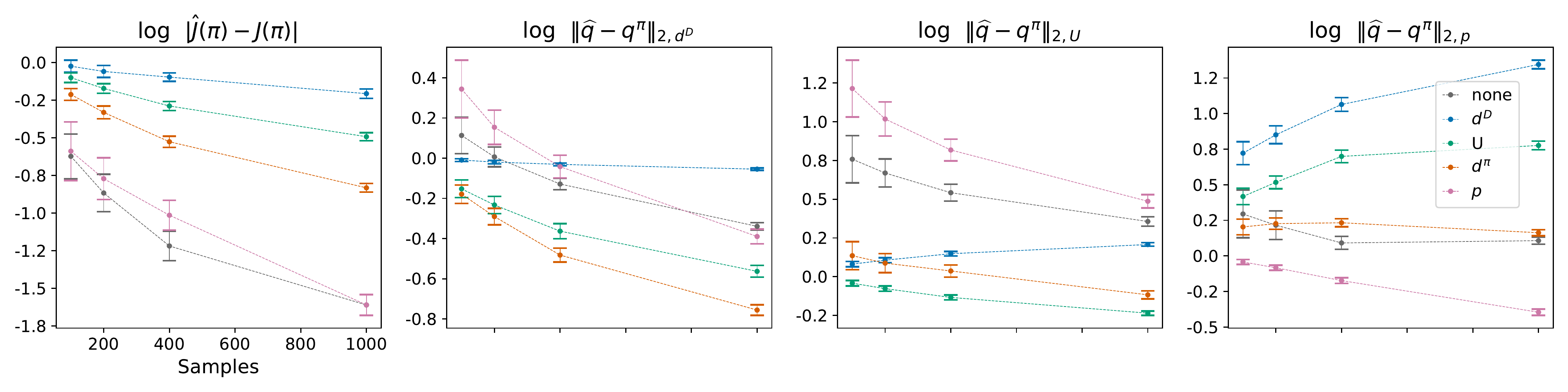} \vspace{-1.5em}
    \caption{Error of off-policy return and function estimation as a function of sample size. Legend shows regularizing distribution $\nu$ and header shows error-measuring distribution $\nu'$ (see text). Error bars show 95\% confidence intervals calculated from 1000 runs.
    \label{fig:sample_error}}
\end{figure}

\begin{figure}[t]
    \centering
    \includegraphics[width=\textwidth]{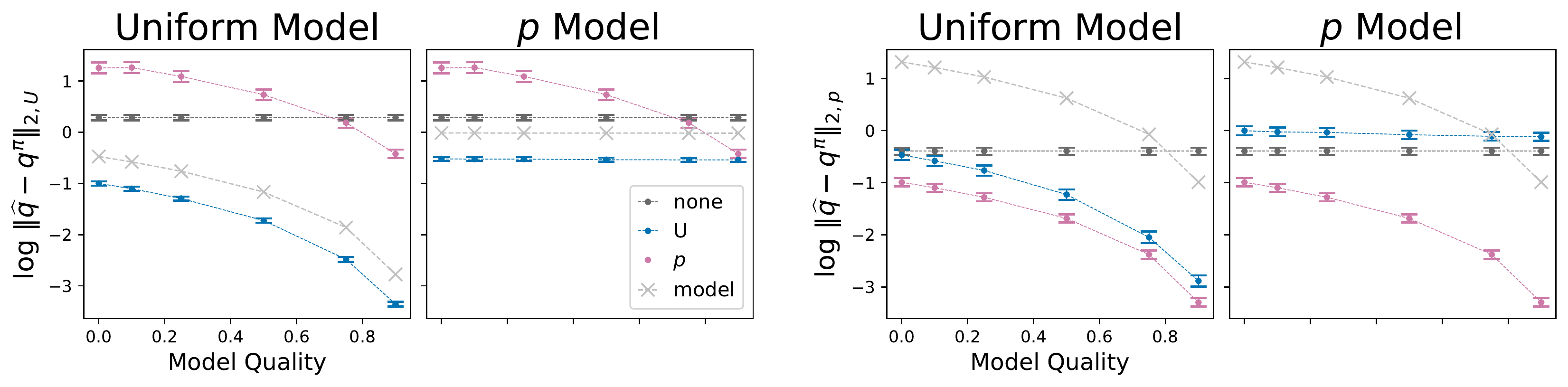}
    \vspace{-1.5em}
    \caption{Estimation error when the regularizer incorporates a model $\wt q$, where x-axes represent the parameter $m$ that controls the quality of $\wt q$. 
    The "model" line shows performance of $\wt{q}$. 
    Sample size is 500 and the results are from 500 runs.     \label{fig:model_error}}
\end{figure}

\para{Choice of Distributions} We consider a set of diverse distributions $\mathcal{V} = \{d^D, \mu_0^\pi, d^\pi, U, p\}$, where $U$ is uniform over $\Scal\times\Acal$ and $p \propto (d^\pi \emult \mathbb{I}[w^\pi > 50])$. The distribution $p$ isolates the least-covered states reached by $\pi$, which makes learning an accurate Q-function on $\nu$ a harder task. 

\para{Results for Q1} We use a default regularizer $f = \tfrac{1}{2} x^2$ with different regularizing distributions $\nu \in \mathcal{V}$, and measure $\|\wh{q} - q^\pi\|_{2, \nu'}$ for different $\nu'$. The results are shown in Figure~\ref{fig:sample_error}, which exhibit the expected trend: for example, regularizing with $\nu=p$ performs poorly when the error is measured under $\nu' = d^D$ and $U$ due to the large mismatch between $\nu$ and $\nu'$. However, when $\nu' = p$ (rightmost panel), regularizing with $\nu=p$ significantly outperforms others. Similar behaviors can also be observed on $U$, though they are certainly not absolute (e.g., $\nu = d^D$ does not do very well on $\nu' = d^D$), which suggests potential directions for more refined and accurate theory. Moreover, %
using no regularization (``none'') generally does not perform well for any $\nu'$, but still manages to achieve a high accuracy for return estimation $J(\pi)$, which is consistent with prior theory \citep[e.g.,][]{uehara2019minimax} that return estimation does not require regularization.

\para{Results for Q2} We now use $f_{s,a}(x) = \tfrac{1}{2}(x - \wt q(s,a))^2$ with different $\widetilde{q}$ to verify how the quality of $\widetilde{q}$ affect estimation accuracy. We first consider a ``uniform model'' $\wt{q} = mq^\pi + (1-m)\overline{q}$, where $\overline{q}$ is a constant and $m \in [0, 1]$ controls the quality $\wt{q}$. As shown from Panels 1 \& 3 in Figure~\ref{fig:model_error}, our estimator's accuracy generally improves with a better $\wt q$ (i.e., as $m$ increases). Moreover, equipping $\wt q$ with an appropriate regularizing distribution $\nu$ (e.g., $\nu = U$ for both panels) can significantly outperform no regularization, even with a very poor $\wt q$ (e.g., $m=0.1$). It also outperforms the model prediction itself (i.e., $\wh q = \wt q$), showing that the improvement is not due to our estimator simply taking predictions from $\wh q$, but using the regularization to better assist the identification of $q^\pi$ from data. 

The previous model's quality is uniform across $\Scal\times\Acal$. We then consider a scenario where $\wt{q}$ is zeroed out outside $p$'s support, making it only a good approximation of $q^\pi$ on $p$. In this case, we see that regularization cannot benefit much from the model when the error is measured on $\nu' = U$ (Panel 2), but when $\nu'=\nu=p$ (Panel 4), regularization can still bring benefits, as expected from our theory.

\section{Discussion and Conclusion} \label{sec:conclusion}
In this paper we showed that proper regularization can yield function-estimation guarantees for MIS methods under only realizable function approximation. Compared to prior works, our regularizer is more flexible and can accommodate a user-specified error-measuring distribution. Further theoretical investigation provides fine-grained characterization of how the choice of regularization affects learning guarantees, which leads to the discovery of regularizers that incorporate approximate models (such as $\wt q$). While the superiority of such regularizers is perhaps obvious retrospectively, it is not allowed in the prior works' derivation that assumes $(s,a)$-independent regularization, and our theoretical results provide a deep understanding for even more general regularization schemes. 
In Appendix~\ref{app:discuss}, we provide further discussions on two topics: (1) the barriers to obtaining a faster $O(n^{-\nicefrac{1}{2}})$ rate, and (2) comparison to \citet{zhan2022offline} reveals interesting differences between off-policy function estimation and policy learning, and insights in this paper may also be useful for the policy learning task.

\begin{ack}
The authors thank Jinglin Chen, Wenhao Zhan, and Jason Lee for valuable discussions during the early phase of the project, and Anonymous Reviewer e62P for insightful feedback that helped improve this paper during the review process. NJ acknowledges funding support from ARL Cooperative Agreement W911NF-17-2-0196, NSF IIS-2112471, NSF CAREER IIS-2141781, and Adobe Data Science Research Award.
\end{ack}

\printbibliography 
\section*{Checklist}

\begin{enumerate}

\item For all authors...
\begin{enumerate}
  \item Do the main claims made in the abstract and introduction accurately reflect the paper's contributions and scope?
    \answerYes{}{}
  \item Did you describe the limitations of your work?
    \answerYes{We discuss the slow rate of our guarantee in Appendix~\ref{app:discuss}.}
  \item Did you discuss any potential negative societal impacts of your work?
    \answerNA{}
  \item Have you read the ethics review guidelines and ensured that your paper conforms to them?
    \answerYes{}
\end{enumerate}

\item If you are including theoretical results...
\begin{enumerate}
  \item Did you state the full set of assumptions of all theoretical results?
    \answerYes{}
        \item Did you include complete proofs of all theoretical results?
    \answerYes{}
\end{enumerate}

\item If you ran experiments...
\begin{enumerate}
  \item Did you include the code, data, and instructions needed to reproduce the main experimental results (either in the supplemental material or as a URL)?
    \answerYes{}
  \item Did you specify all the training details (e.g., data splits, hyperparameters, how they were chosen)?
    \answerYes{}
        \item Did you report error bars (e.g., with respect to the random seed after running experiments multiple times)?
    \answerYes{}
        \item Did you include the total amount of compute and the type of resources used (e.g., type of GPUs, internal cluster, or cloud provider)?
    \answerNo{Our experiments are on relatively simple synthetic environments, and we did not particularly track the use of computation.}
\end{enumerate}

\item If you are using existing assets (e.g., code, data, models) or curating/releasing new assets...
\begin{enumerate}
  \item If your work uses existing assets, did you cite the creators?
    \answerNA{}
  \item Did you mention the license of the assets?
    \answerNA{}
  \item Did you include any new assets either in the supplemental material or as a URL?
    \answerNA{}
  \item Did you discuss whether and how consent was obtained from people whose data you're using/curating?
    \answerNA{}
  \item Did you discuss whether the data you are using/curating contains personally identifiable information or offensive content?
    \answerNA{}
\end{enumerate}

\item If you used crowdsourcing or conducted research with human subjects...
\begin{enumerate}
  \item Did you include the full text of instructions given to participants and screenshots, if applicable?
    \answerNA{}
  \item Did you describe any potential participant risks, with links to Institutional Review Board (IRB) approvals, if applicable?
    \answerNA{}
  \item Did you include the estimated hourly wage paid to participants and the total amount spent on participant compensation?
    \answerNA{}
\end{enumerate}

\end{enumerate}

\newpage

\appendix

\section{Further Discussions} \label{app:discuss}

\paragraph{Motivating Examples} 
Below, we give a thorough discussion, through four examples of how off-policy function estimation used in downstream learning algorithms. We highlight the discrepancies between what these algorithms assume about the function estimates and what existing work is able to achieve, and demonstrate how our work closes these gaps. 

\underline{Batch Learning:} \cite{liu2019off} design an off-policy policy gradient algorithm that requires estimating the density-ratio $w^\pi$ to correct the offline data distribution to the on-policy distribution. In their convergence analysis, they assume access to a blackbox $w^\pi$ estimator that is accurate under $d^D$, and refer to \cite{liu2019off} as a possible method. However, as per Proposition~\ref{prop:rich_disc}, \cite{liu2018breaking} and existing works do not provide desirable guarantees for such a task.

\underline{Online Reinforcement Learning:} The seminal paper of \cite{kakade2002approximately} designs the CPI algorithm for on-policy policy improvement, which inspired popular empirical algorithms such as TRPO and PPO. CPI requires an oracle for estimating the advantage function ($\approx$ value function up to offset) accurately under the on-policy distribution, i.e., distribution induced by the current policy (see their Sec 7.1). While this is easy to do by simple squared-loss regression onto on-policy trajectories, it can be sample-inefficient as it fails to leverage off-policy data collected by previous policies. On the other hand running something like TD on all data considers a distribution different from the on-policy one. Our method offers a direct solution: use all data in the Bellman error part of the objective, and only use on-policy trajectories in the regularizer. 

\underline{Online Reinforcement Learning:} \cite{abbasi2019politex} designs a no-regret policy optimization algorithm assuming access to value-function estimation oracles. In their Theorem 5.1, they assume that the oracle outputs an estimate of $q^\pi$ that is accurate under $\nu = d^{\pi^*}$. While $d^{\pi^*}$ is obviously not accessible to us and our method does not apply as-is, %
one might use our theoretical insights to design heuristics, such as up-weighting high-reward states in the offline distribution, as a way to mimic $d^{\pi^*}$.

\underline{Model selection in Offline Return Estimation:} Model selection in offline return estimation: Hyperparameter tuning is a huge practical hurdle in offline return estimation \citep{paine2020hyperparameter}, i.e., all OPE estimators for return estimation (except for importance sampling which has exponential variance) require some form of function approximation, and it is hard to choose the right function class with offline data alone. To address this issue, \cite{zhang2021towards} proposes a model selection process over candidate function estimates of $q^\pi$, which must be provided by base algorithms that perform function estimation.

\paragraph{Function Estimation \& Downstream Tasks} Online algorithms using off-policy function estimation as a subroutine, such as  \citet{kakade2002approximately,abbasi2019politex}, may require the estimates to be accurate on unknown distributions such as $d^\pi$ or $d^{\pi^*}$ (where $\pi^*$ is the optimal policy), which may not be immediately accessible to the user. 
The user may be able to use domain knowledge to ``guess" a distribution $\nu$ close to or covering the unknown distribution of interest. 
Then our guarantees for function estimation over $\nu$ could similarly, with a change of distribution, be converted to guarantees on the true distribution of interest.   
To this end, an important avenue of future work involves a thorough investigation of how our off-policy function estimation method interacts with such downstream learning algorithms, their assumptions, and their guarantees, as well as how our method can tailored to improve downstream tasks. 

\paragraph{Faster rate} One weakness of our result is the $O(n^{-\nicefrac{1}{4}})$ slow rate of estimation. While $O(n^{-\nicefrac{1}{2}})$ generalization error bounds for related stochastic saddle point exist \citep{zhang2021generalization}, they only apply to strongly-convex-strongly-concave problems, whereas our problem is strongly-convex-non-strongly-concave ($L_f^q$ is affine in $w$ and $L_f^w$ is affine in $q$), making the result not directly applicable. One immediate idea is to introduce dual regularization to make our objectives also strongly concave in the discriminator. However, while primal regularization does not change the feasible space and guarantees that the learned function will be $q^\pi$ (or $w^\pi$, respectively), dual regularization \textit{does} change the optimal solution, introducing a bias. This leads to a trade-off between the improvement in error bounds due to strong concavity and the additional bias, and our preliminary investigation shows that an optimal trade-off between the two sources of errors still leads to an $O(n^{-\nicefrac{1}{4}})$ rate. Therefore, improving the rate (if it is possible at all) will  require novel technical tools for the generalization analyses of strongly-convex-non-strongly-concave stochastic saddle point problems, which will be an interesting future direction. 

On a related note, while the rate for estimating $q^\pi$ and $w^\pi$ is only $O(n^{-\nicefrac{1}{4}})$, we can combine them in a doubly robust form to get $O(n^{-\nicefrac{1}{2}})$ rate for return estimation by careful choices of the regularizing distributions $\nu$ and $\eta$; see Appendix~\ref{sec:ope} for details.

\paragraph{Comparison to off-policy learning} As mentioned earlier, our results are enabled by technical tools adapted from \citet{zhan2022offline}, whose work focuses on off-policy policy learning and learns $w^\pi$ for a near-optimal $\pi$ that is accurate under $d^D$ as an intermediate step. While most of our surprising observations are in the value-function learning scenario (Section~\ref{sec:value}), comparing our guarantee for learning $w^\pi$ (Section~\ref{sec:weight}) to that of \citet{zhan2022offline} still yields interesting observations about the difference between off-policy evaluation and learning. Most notably, we do not need to control the strength of regularization in Eq.\eqref{eq:primal_reg_q}, since the feasible space is a singleton and there is no objective before we introduce $\EE_{\nu}[f(q)]$. In contrast, the feasible space is not a singleton in \citet{zhan2022offline} (it is the space of all possible occupancies) and there is already a return optimization objective, so \citet{zhan2022offline} need to carefully control the strength of their regularization. %
As a consequence, \citet{zhan2022offline} obtain $O(n^{-\nicefrac{1}{6}})$ rate, showing how off-policy learning is potentially more difficult than off-policy function estimation. Another interesting difference is related to our exact characterization of $w_f^*$ and $q_f^*$: \citet{zhan2022offline} do not have a closed-form expression for their optimal dual solution. Such a lack of direct characterization leads to requiring additional assumptions to guarantee the boundedness of such variables (see their Assumptions 11 and 12), which is not a problem in our setting. Finally, our analyses lead to novel algorithmic ideas such as using state-action-dependent regularizers and incorporating approximate models in the regularizers, which are potentially also useful for policy learning.

\section{Proofs for Section~\ref{sec:value}}
\label{app:proof_value}
\subsection{Proof of Theorem~\ref{thm:q_bound}}\label{proof:q_bound}
From Assumption~\ref{assum:regularizer_q} and Lemma~\ref{lem:strongly_convex}, we know that the regularization function $\E_\nu[f_{s,a}(q(s,a))]$ is an $M$-strongly convex function in $q$ on the $\|\cdot\|_{2,\nu}$ norm. 
Now consider $L_f^q(q,w_f^*)$, the Lagrangian function~\eqref{eq:lagrangian_reg_q} at the optimal discriminator  $w_f^*$. 
Since $L_f^q(q,w_f^*)$ is composed of the regularization function plus terms that are linear in $q$, $L_f^q(q,w_f^*)$ is also an $M$-strongly convex function in $q$. 

As $(q^\pi, w_f^*)$ is the saddle point solution of $L_f^q$, we know $q^\pi = \argmin_{q}L_f^q(q,w_f^*)$. Then from the strong convexity of $L_f^q$, 
\begin{align*}
    ||\wh{q} - q^\pi||_{2,\nu} &\leq \sqrt{\frac{2\lp L_f^q(\wh{q},w_f^*) - L_f^q(q^\pi,w_f^*)\rp}{  M^q}} &&  \\
    &\leq \sqrt{\frac{4  \epsilon_{stat}^q}{  M^q}},  && \text{(Lemma ~\ref{lem:q_estimation_error})} 
\end{align*}
where $\epsilon_{stat}^q$ is given in Lemma~\ref{lem:q_statistical_error}.

We provide the helper lemmas and their proofs below: 
\begin{lemma}\label{lem:strongly_convex}
Suppose $f_{s,a} : \R \rightarrow \R$ is $M$-strongly convex. Then $\E_{\nu}[f_{s,a}(q(s,a))] : \R^{|SA|} \rightarrow \R$ is $M$-strongly convex on $\| \cdot \|_\nu$. 
\end{lemma}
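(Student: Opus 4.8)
The plan is to reduce the statement to the pointwise $M$-strong convexity of each $f_{s,a}$ by simply applying $\E_\nu[\cdot]$ to the defining inequality, term by term. Recall that, in its zeroth-order form, a function $h:\R\to\R$ is $M$-strongly convex iff for all $x,y\in\R$ and $\lambda\in[0,1]$,
$$h(\lambda x + (1-\lambda)y) \le \lambda h(x) + (1-\lambda) h(y) - \tfrac{M}{2}\lambda(1-\lambda)(x-y)^2.$$
I will use this characterization rather than the first-order one, so that differentiability of $f_{s,a}$ is not needed (though it is available via Assumption~\ref{assum:regularizer_q}); note also that ``strongly convex on $\|\cdot\|_\nu$'' in the statement refers to the $\nu$-weighted $2$-norm $\|\cdot\|_{2,\nu}$ from the Additional Notation paragraph.

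First I would fix arbitrary $q_1,q_2\in\R^{|\Scal\times\Acal|}$ and $\lambda\in[0,1]$, and apply the displayed inequality coordinatewise, taking $x=q_1(s,a)$ and $y=q_2(s,a)$ for each $(s,a)$:
\begin{align*}
f_{s,a}\big(\lambda q_1(s,a) + (1-\lambda) q_2(s,a)\big)
&\le \lambda f_{s,a}(q_1(s,a)) + (1-\lambda) f_{s,a}(q_2(s,a)) \\
&\quad - \tfrac{M}{2}\lambda(1-\lambda)\big(q_1(s,a) - q_2(s,a)\big)^2 .
\end{align*}
Then I would take the expectation $\E_{(s,a)\sim\nu}[\cdot]$ of both sides. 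By linearity and monotonicity of expectation (all terms are finite since $\nu$ is a probability distribution and $f_{s,a}$ takes finite values on finite inputs), the left side becomes $g(\lambda q_1 + (1-\lambda)q_2)$ where $g(q):=\E_\nu[f_{s,a}(q(s,a))]$, the first two terms on the right become $\lambda g(q_1)+(1-\lambda)g(q_2)$, and the last term becomes $\tfrac{M}{2}\lambda(1-\lambda)\,\E_\nu[(q_1(s,a)-q_2(s,a))^2] = \tfrac{M}{2}\lambda(1-\lambda)\|q_1-q_2\|_{2,\nu}^2$ by the very definition of $\|\cdot\|_{2,\nu}$. This is exactly the definition of $M$-strong convexity of $g$ with respect to $\|\cdot\|_{2,\nu}$, which finishes the proof.

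There is essentially no obstacle here; the only point requiring care is the bookkeeping between the pointwise modulus and the norm, i.e.\ recognizing that $\E_\nu[(\cdot)^2]$ is precisely $\|\cdot\|_{2,\nu}^2$, so the modulus $M$ transfers verbatim with no horizon- or dimension-dependent loss. One should also remark that when $\nu$ does not have full support, $\|\cdot\|_{2,\nu}$ is only a seminorm (it ignores coordinates outside the support of $\nu$), so the conclusion is strong convexity ``up to $\|\cdot\|_{2,\nu}$''; this is harmless and is consistent with how the lemma is invoked in the proof of Theorem~\ref{thm:q_bound}, which only yields a bound on $\|\wh q - q^\pi\|_{2,\nu}$ rather than pointwise equality.
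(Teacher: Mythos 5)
Your proof is correct and follows essentially the same route as the paper's: both arguments apply the pointwise $M$-strong-convexity inequality of $f_{s,a}$ at each coordinate $(s,a)$ and then take $\E_{(s,a)\sim\nu}$, recognizing $\E_\nu[(q_1-q_2)^2]$ as $\|q_1-q_2\|_{2,\nu}^2$. The only (immaterial) difference is that the paper uses the first-order characterization $f_{s,a}(x)-f_{s,a}(y)\le f_{s,a}'(x)(x-y)-\tfrac{M}{2}(x-y)^2$ and identifies the gradient $\nabla_q\,\E_\nu[f_{s,a}(q)]=\nu\emult f'(q)$, whereas you use the zeroth-order (secant) characterization, which avoids invoking differentiability; your closing remark about $\|\cdot\|_{2,\nu}$ being a seminorm off the support of $\nu$ is also consistent with how the lemma is used in Theorem~\ref{thm:q_bound}.
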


\begin{proof}
From the strong convexity of $f_{s,a}$, for any $x, y \in \R$, 
$$f_{s,a}(x) - f_{s,a}(y) \leq f'_{s,a}(x)(x-y) - \frac{M}{2}(x-y)^2$$
Then for $q, q' \in \R^{|SA|}$, 
\begin{align*}
    \E_{\nu}&[f_{s,a}(q(s,a))] - \E_{\nu}[f_{s,a}(q'(s,a))] \\ &\leq \E_{\nu}[f'_{s,a}(q(s,a))(q(s,a) - q'(s,a))] - \E_{\nu}[\frac{M}{2}(q(s,a) - q'(s,a))^2] \\ 
    &\leq \E_{\nu}[f'_{s,a}(q(s,a))(q(s,a) - q'(s,a))] - \lp \min_{s,a} \frac{M}{2}\rp \E_\nu[(q(s,a) - q'(s,a))^2] \\ 
    &= \langle \nabla_q \E_{\nu}[f_{s,a}(q(s,a))], q - q'\rangle - \frac{M}{2} \E_\nu[(q(s,a) - q'(s,a))^2]
\end{align*}
since $\nabla_q \E_{\nu}[f_{s,a}(q(s,a))] = \nu \emult f'_{s,a}(q)$, which gives our result. 
\end{proof}

\begin{lemma}\label{lem:q_statistical_error}
    Suppose Assumption~\ref{assum:bounded_q} holds. 
    Then for all $(q,w) \in \Q \times \W$, w.p. $\geq 1-\delta$, 
    \begin{align*}
        |\wh{L}^q_f(q,w) - L^q_f(q,w)| \leq   \epsilon_{stat}^q, 
    \end{align*}
    where $\epsilon_{stat}^q = \lp C_\W^q + (1+\gamma) C_\W^q C_\Q^q \rp \sqrt{\frac{2\log \frac{2|\W||\Q|}{\delta}}{n}}$. 
\end{lemma}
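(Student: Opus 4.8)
The plan is to observe that $\wh{L}^q_f$ and $L^q_f$ differ only in their second term: the regularizer $\E_\nu[f_{s,a}(q(s,a))]$ is evaluated as an exact expectation under the user-specified $\nu$ and carries no sampling error. Hence, writing $X_i(q,w) := w(s_i,a_i)\lp r_i + \gamma q(s_i',\pi) - q(s_i,a_i)\rp$, where the tuples $(s_i,a_i,r_i,s_i')$ are i.i.d.\ with $(s_i,a_i)\sim d^D$, we have
\begin{align*}
\wh{L}^q_f(q,w) - L^q_f(q,w) = \frac{1}{n}\sum_{i=1}^n X_i(q,w) - \E_{d^D}\lb X_i(q,w) \rb.
\end{align*}
So the claim reduces to a standard uniform concentration bound for an empirical average of bounded i.i.d.\ random variables.

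First I would bound the range of $X_i(q,w)$ using Assumption~\ref{assum:bounded_q}: $|w(s_i,a_i)| \le C_\W^q$ and $|q(s_i,a_i)| \le C_\Q^q$; since $q(s_i',\pi) = \E_{a'\sim\pi(\cdot|s_i')}[q(s_i',a')]$ is an average of values each bounded by $C_\Q^q$, also $|q(s_i',\pi)| \le C_\Q^q$; and $r_i \in [0,1]$. Therefore $|X_i(q,w)| \le C_\W^q\lp 1 + \gamma C_\Q^q + C_\Q^q \rp = C_\W^q + (1+\gamma) C_\W^q C_\Q^q =: B$, so each $X_i(q,w)$ lies in $[-B, B]$.

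Next, for a fixed pair $(q,w) \in \Q\times\W$, I would apply Hoeffding's inequality to the i.i.d.\ variables $X_1(q,w),\dots,X_n(q,w)$, which have range $2B$: for any $t > 0$,
\begin{align*}
\Prob\lp \lv \wh{L}^q_f(q,w) - L^q_f(q,w) \rv > t \rp \le 2\exp\lp -\frac{n t^2}{2 B^2} \rp.
\end{align*}
Taking a union bound over all $|\Q||\W|$ pairs and setting the right-hand side to $\delta/(|\Q||\W|)$, solving for $t$ gives $t = B\sqrt{\tfrac{2\log(2|\Q||\W|/\delta)}{n}} = \epsilon_{stat}^q$, which proves the lemma.

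There is no genuine obstacle here---this is a routine uniform-convergence step. The only points requiring care are (i) recognizing that the $\E_\nu[f_{s,a}(q)]$ term is noiseless, so no covering/complexity argument is needed for the regularizer, and (ii) correctly tracking the boundedness constant $B$ through the product $w \cdot (r + \gamma q(s',\pi) - q(s,a))$. For infinite $\Qcal,\Wcal$ one would replace the union bound by a covering-number argument (Appendix~\ref{appendix:infinite}), but for finite classes the above suffices.
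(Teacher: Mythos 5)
Your proof is correct and follows essentially the same route as the paper's: both isolate the empirical Bellman-residual term (noting the regularizer is noiseless), bound each summand by $C_\W^q + (1+\gamma)C_\W^q C_\Q^q$ using Assumption~\ref{assum:bounded_q} and $r\in[0,1]$, and apply Hoeffding's inequality with a union bound over $\Q\times\W$. Your write-up is if anything slightly more explicit about the constant-tracking and the choice of $t$.
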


\begin{proof}
From the linearity of the expectation, it is clear that $L_f^q(q,w) = \E[\wh{L}_f^q]$. 
Let $l_i = w(s_i, a_i)\lp r(s_i, a_i) + \gamma q(s_i', \pi) - q(s_i, a_i)\rp$. From Assumption~\ref{assum:bounded_q}, 
\begin{align*}
    |l_i| &\leq \|w\|_\infty + (1+\gamma)\|w\|_\infty \|q\|_\infty \\
    &\leq C_\W^q + (1+\gamma) C_\W^q C_\Q^q
\end{align*}

Then using Hoeffding's inequality with union bound, for all $q,w \in \Q \times \W$, w.p. $\geq 1-\delta$, 
\begin{align*}
    \lv \frac{1}{n} \sum_{i=1}^n l_i - \E_{d^D}[l_i] \rv \leq \lp C_\W^q + (1+\gamma) C_\W^q C_\Q^q \rp \sqrt{\frac{2\log \frac{2|\W||\Q|}{\delta}}{n}} = \epsilon_{stat}^q
\end{align*}
\end{proof}

\begin{lemma}\label{lem:q_estimation_error}
    Under Assumptions~\ref{assum:regularizer_q}, ~\ref{assum:realizable_q},~\ref{assum:bounded_q}, w.p. $\geq 1-\delta$, 
    \begin{align*}
        L_f^q(\wh{q},w_f^*) - L_f^q(q^\pi,w_f^*) \leq 2  \epsilon_{stat}^q. 
    \end{align*}
    where $\epsilon_{stat}^q$ is given in Lemma~\ref{lem:q_statistical_error}. 
\end{lemma}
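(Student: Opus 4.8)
The plan is to bound $L_f^q(\wh{q}, w_f^*) - L_f^q(q^\pi, w_f^*)$ by inserting the empirical loss $\wh{L}_f^q$ and the saddle-point/optimizer relations, and then controlling each gap by the uniform deviation bound of Lemma~\ref{lem:q_statistical_error}. Concretely, I would chain the following inequalities. First, since $w_f^* \in \Wcal$ by Assumption~\ref{assum:realizable_q}, and $\wh{q}$ is a minimizer of $q \mapsto \max_{w\in\Wcal}\wh{L}_f^q(q,w)$, we have
\[
L_f^q(\wh{q}, w_f^*) \le \wh{L}_f^q(\wh{q}, w_f^*) + \epsilon_{stat}^q \le \max_{w\in\Wcal}\wh{L}_f^q(\wh{q}, w) + \epsilon_{stat}^q \le \max_{w\in\Wcal}\wh{L}_f^q(q^\pi, w) + \epsilon_{stat}^q,
\]
where the first step uses Lemma~\ref{lem:q_statistical_error} applied to the pair $(\wh q, w_f^*)$ and the last step uses that $\wh q$ minimizes the empirical minimax objective over $\Qcal$ (and $q^\pi\in\Qcal$ by Assumption~\ref{assum:realizable_q}).

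Next I would handle the term $\max_{w\in\Wcal}\wh{L}_f^q(q^\pi, w)$. Applying Lemma~\ref{lem:q_statistical_error} uniformly over $w\in\Wcal$ (with $q = q^\pi$ fixed), this is at most $\max_{w\in\Wcal} L_f^q(q^\pi, w) + \epsilon_{stat}^q$. Now the crucial observation is that $(q^\pi, w_f^*)$ is a saddle point of $L_f^q$ (Lemma~\ref{lem:wfstar_closedform}), so $\max_{w} L_f^q(q^\pi, w) = L_f^q(q^\pi, w_f^*)$ — equivalently, $w_f^*$ is the inner maximizer at $q^\pi$. In fact this is even simpler here: $L_f^q(q^\pi, w)$ is affine in $w$ with the coefficient of $w(s,a)$ being $d^D(s,a)\big(\EE[r|s,a] + \gamma(P^\pi q^\pi)(s,a) - q^\pi(s,a)\big) = 0$ by the Bellman equation for $q^\pi$, so $L_f^q(q^\pi, w) = \EE_\nu[f_{s,a}(q^\pi(s,a))]$ is constant in $w$ and equals $L_f^q(q^\pi, w_f^*)$. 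Combining the two chains gives $L_f^q(\wh q, w_f^*) \le L_f^q(q^\pi, w_f^*) + 2\epsilon_{stat}^q$, which is the claim.

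I should be slightly careful about how the high-probability event is managed: Lemma~\ref{lem:q_statistical_error} already gives a single event of probability $\ge 1-\delta$ on which $|\wh L_f^q(q,w) - L_f^q(q,w)| \le \epsilon_{stat}^q$ holds \emph{simultaneously} for all $(q,w)\in\Qcal\times\Wcal$, so all the applications above (to $(\wh q, w_f^*)$, to $(q^\pi, w)$ for every $w$, etc.) are valid on that one event without any additional union bound. The main (and really only) subtlety is recognizing that we never need to evaluate $\wh L_f^q$ at $w_f^*$ on the ``wrong'' side — we only need $w_f^*\in\Wcal$ so that it is an admissible choice inside $\max_{w\in\Wcal}$, together with the fact that the population inner maximum at $q^\pi$ is attained (trivially) at value $L_f^q(q^\pi, w_f^*)$. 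Everything else is bookkeeping with the definition of $\wh q$ as the empirical minimax solution and three invocations of the uniform deviation bound.
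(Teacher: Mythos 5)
Your proposal is correct and is essentially the paper's own argument: your chain of inequalities reproduces, term by term, the paper's five-part telescoping decomposition (realizability of $w_f^*$ and $q^\pi$, optimality of $\wh q$ for the empirical minimax objective, two invocations of the uniform deviation bound, and the saddle-point property at $(q^\pi, w_f^*)$). The only cosmetic difference is your observation that $L_f^q(q^\pi,\cdot)$ is constant in $w$ because the Bellman residual of $q^\pi$ vanishes, which the paper subsumes under the saddle-point property.
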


\begin{proof}
Let $\widehat{w}(q) := \argmax_{w \in \Wcal} \widehat{L}_f^q (q,w)$. We decompose the error as follows:
\begin{align*}
    L_f^q(q^\pi,w_f^{*}) - L_f^q(\wh{q},w_f^{*}) &= L_f^q(q^\pi,w_f^{*}) - L_f^q(q^\pi,\wh{w}(q^\pi)) &&\text{(1)  } \geq 0 \\
    &+ L_f^q(q^\pi,\wh{w}(q^\pi)) - \wh{L}_f^q(q^\pi,\wh{w}(q^\pi))  &&\text{(2)  }\geq -\epsilon_{stat}^q \\ 
    &+ \wh{L}^q_f(q^\pi,\wh{w}(q^\pi)) -  \wh{L}^q_f(\wh{q},\wh{w}(\wh{q}))  &&\text{(3)  }\geq 0 \\ 
    &+ \wh{L}^q_f(\wh{q},\wh{w}(\wh{q})) - \wh{L}^q_f(\wh{q},w_f^*)  &&\text{(4)  }\geq 0 \\
    &+ \wh{L}^q_f(\wh{q},w_f^*)  - L^q_f(\wh{q},w_f^*)  &&\text{(5)  }\geq -\epsilon_{stat}^q 
\end{align*}
Combining the terms gives the result, and we provide a brief justification for each inequality below. Terms (2) and (5) follow from Lemma~\ref{lem:q_statistical_error}.

Term (1) $\geq 0$ since $(q^\pi,w_f^{*})$ is the saddlepoint solution. 

Term (3) $\geq 0$, since $\wh{q} = \argmin_{q \in \Q} \wh{L}_f^q(q,\wh{w}(q))$, and $q^\pi \in \Q$. 

Term (4) $\geq 0$ because $w_f^* \in \W$. 
\end{proof}

\subsection{Proof of Lemma~\ref{lem:wfstar_closedform}}
Since strong duality holds, 
the saddle point $(q^\pi, w_f^*)$ satisfies the KKT conditions. 
Then from stationarity, for all $(s,a)$, 
\begin{align*}
    0 &=   \nu(s,a) f'_{s,a}(q^\pi(s,a)) + \gamma \sum_{s',a'}P^\pi(s,a|s',a')d^D(s',a')w_f^*(s',a') - d^D(s,a)w_f^*(s,a).
\end{align*}
Writing this in matrix form, letting $f'(q^\pi)$ be shorthand for $[f'_{s,a}(q^\pi(s,a))]_{s,a} \in \RR^{\Scal\times\Acal}$, $w_f^*$ must satisfy the equality:
\begin{align*}
    (I - \gamma \Pt) (d^D \emult w_f^*) &=   \nu \emult f'(q^\pi) \quad\Longrightarrow\quad d^D \emult w_f^* =   (I - \gamma \Pt)^{-1} \lp\nu \emult f'(q^\pi) \rp. %
\end{align*}

\subsection{Proof of Proposition~\ref{prop:wfstar_bound}} 
Rearranging the closed form of $w_f^*$ from Lemma~\ref{lem:wfstar_closedform} and taking the absolute value of both sides, 
\begin{align*}
    d^D \emult |w_f^*| &= |(I - \gamma \widetilde{P}^{\pi})^{-1} \lp\nu \emult f'(q^\pi)\rp| \\ 
    &\leq \|f'(q^\pi)\|_\infty |(I - \gamma \widetilde{P}^{\pi})^{-1} \nu| \\
    &= \frac{1}{1-\gamma}\|f'(q^\pi)\|_\infty \cdot d^\pi_\nu
\end{align*}

Then dividing both sides by $d^D$ element-wise, this implies
\begin{align*}
    |w_f^*| &\leq \frac{1}{1-\gamma}\|f'(q^\pi)\|_\infty \cdot (d^\pi_\nu \ediv d^D) \\ 
    &\leq  \frac{1}{1-\gamma}\|f'(q^\pi)\|_\infty \cdot \|d^\pi_\nu \ediv d^D\|_\infty
\end{align*}

As the above inequality holds for all $(s,a)$, 
\begin{align*}
    ||w_f^*||_\infty \leq \frac{1}{1-\gamma}
\|f'(q^\pi)\|_\infty \cdot \|d^\pi_\nu / d^D\|_\infty. 
\end{align*}

\section{Proofs for Section~\ref{sec:weight}}
\label{app:proof_weight}
\subsection{Proof of Lemma~\ref{lem:qfstar_closedform}} 
From the KKT stationarity conditions: 
\begin{align*}
    0 &= d^D(s,a)\lp \gamma \E_{s' \sim P(\cdot|s,a)}\lb q_f^*(s',\pi)\rb - q_f^*(s,a)\rp - \nu(s,a) f'_{s,a}(w^\pi(s,a))
\end{align*}
or in matrix form, letting $f'(w^\pi)$ be shorthand for $[f'_{s,a}(w^\pi(s,a))]_{s,a} \in \RR^{\Scal\times\Acal}$, 
\begin{align*}
    \eta \emult f'(w^\pi) = d^D \emult (I - \gamma P^\pi) q_f^*
\end{align*}
Then $q_f^*$ must satisfy 
$$ (I - \gamma P^\pi) q_f^* = f'(w^\pi) \emult \eta \ediv d^D \quad\Longrightarrow\quad q_f^*  = (I - \gamma P^\pi)^{-1}(f'(w^\pi) \emult \eta \ediv d^D)$$ 

\subsection{Proof of Theorem~\ref{thm:w_bound}}
The proof is of a similar nature as the proof of Theorem~\ref{thm:q_bound} (Appendix~\ref{proof:q_bound}). 
From Assumption~\ref{assum:regularizer_w} and Lemma~\ref{lem:strongly_convex}, we know that that $L_f^w(w,q_f^*)$ is an $  M$-strongly convex function in $w$ on the $||\cdot||_{2,\eta}$ norm. Since $(w^\pi, q_f^*)$ is the saddle point solution of $L_f^w$, from strong convexity we know that the error of $\wh{w}$ is bounded as 
\begin{align*}
    ||\wh{w} - w^\pi||_{2,d^D} &\leq \sqrt{\frac{2\lp L_f^w(w^\pi, q_f^*) - L_f^w(\wh{w}, q_f^*)\rp}{M^w}} \\
    &\leq \sqrt{\frac{4\epsilon_{stat}^w}{M^w}} &&\text{(Lemma ~\ref{lem:w_estimation_error})}, 
\end{align*}
where $\epsilon_{stat}^w$ is given in Lemma~\ref{lem:w_statistical_error}. 

\begin{remark}
    In Theorem~\ref{thm:w_bound} of the main text, there is an additional $O(\nicefrac{C_f^w}{\sqrt{n}})$ term in the statistical error $\epsilon_{stat}^w$, which would arise if the regularization function $\E_\eta[f_{s,a}(w(s,a))]$ were to be estimated from samples. 
    However, we state early on in the paper that we assume the regularizer can be calculated exactly, as sampling is a trivial extension. Correspondingly, the correct expression for the statistical error is:
    $$ \epsilon_{stat}^w = (1+\gamma) C_\W^w C_\Q^w  \sqrt{\nicefrac{2\log \frac{4|\Q||\W|}{\delta}}{n}} + (1-\gamma)C_\Q^w \sqrt{\nicefrac{2\log \frac{4|\Q|}{\delta}}{n_0}},$$
    and, to remain consistent with the rest of the paper, we provide the proof and lemma for this $\epsilon_{stat}^w$ below. 
\end{remark}

\begin{lemma}\label{lem:w_statistical_error}
    Suppose Assumption~\ref{assum:bounded_w} holds. 
    Then for all $(w,q) \in \W \times \Q$, w.p. $\geq 1-\delta$, 
    \begin{align*}
        |\wh{L}^w_f(w,q) - L^w_f(w,q)| \leq \epsilon_{stat}^w, 
    \end{align*}
    where $\epsilon_{stat}^w =  (1+\gamma) C_\W^w C_\Q^w  \sqrt\frac{2\log \frac{4|\Q||\W|}{\delta}}{n} + (1-\gamma)C_\Q^w \sqrt\frac{2\log \frac{4|\Q|}{\delta}}{n_0}$. 
\end{lemma}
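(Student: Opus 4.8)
\textbf{Proof proposal for Lemma~\ref{lem:w_statistical_error}.}

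The plan is to mimic the structure of the proof of Lemma~\ref{lem:q_statistical_error}, but now the empirical loss $\wh{L}_f^w$ is a sum of two sample averages over two independent datasets (the $n$ tuples for the Bellman-error part and the $n_0$ initial states for the initial-distribution part), and the regularizer term $\E_\eta[f_{s,a}(w(s,a))]$ is computed exactly and hence contributes zero deviation. So first I would write
\[
\wh{L}_f^w(w,q) - L_f^w(w,q) = \Big(\tfrac1n\sum_{i=1}^n l_i - \E_{d^D}[l_i]\Big) + (1-\gamma)\Big(\tfrac{1}{n_0}\sum_{j=1}^{n_0} q(s_j,\pi) - \E_{\mu_0}[q(s,\pi)]\Big),
\]
where $l_i := w(s_i,a_i)(\gamma q(s_i',\pi) - q(s_i,a_i))$, and note that $\E[\wh{L}_f^w] = L_f^w$ by linearity of expectation.

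Next I would bound the ranges of the two summands using Assumption~\ref{assum:bounded_w}. For the first, $|l_i| \le \|w\|_\infty(\gamma\|q\|_\infty + \|q\|_\infty) = (1+\gamma)\|w\|_\infty\|q\|_\infty \le (1+\gamma)C_\W^w C_\Q^w$; for the second, $|q(s_j,\pi)| \le \|q\|_\infty \le C_\Q^w$ (since $q(s,\pi)$ is an average of $q$-values under $\pi(\cdot|s)$). Then apply Hoeffding's inequality separately to each empirical average — with a union bound over $(w,q)\in\W\times\Q$ for the first term (giving $\log(2|\W||\Q|/\delta')$ with the $2$ from the two-sided bound) and a union bound over $q\in\Q$ for the second (giving $\log(2|\Q|/\delta')$) — and combine the two failure probabilities via another union bound, splitting $\delta$ as $\delta/2$ and $\delta/2$, which produces the stated $\log(4|\Q||\W|/\delta)$ and $\log(4|\Q|/\delta)$ factors. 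Adding the two deviation bounds with the triangle inequality and using that the $n_0$ term carries the prefactor $(1-\gamma)$ yields exactly $\epsilon_{stat}^w =  (1+\gamma) C_\W^w C_\Q^w  \sqrt{2\log(4|\Q||\W|/\delta)/n} + (1-\gamma)C_\Q^w \sqrt{2\log(4|\Q|/\delta)/n_0}$.

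There is no real obstacle here; the only points requiring a little care are (i) getting the bookkeeping on the union bounds right so that the constants inside the logs come out as $4|\Q||\W|$ and $4|\Q|$ rather than $2|\Q||\W|$ and $2|\Q|$ (this is exactly why the two datasets and two-sided Hoeffding each contribute a factor of $2$), and (ii) being explicit that the regularizer term drops out because it is evaluated exactly rather than estimated — which is the discrepancy flagged in the preceding remark and the reason this $\epsilon_{stat}^w$ omits the $C_f^w/\sqrt n$ term appearing in the main-text statement of Theorem~\ref{thm:w_bound}. Once Lemma~\ref{lem:w_statistical_error} is in hand, Theorem~\ref{thm:w_bound} follows from the $M^w$-strong convexity of $w\mapsto L_f^w(w,q_f^*)$ (Lemma~\ref{lem:strongly_convex}) together with a five-term telescoping decomposition of $L_f^w(w^\pi,q_f^*) - L_f^w(\wh w,q_f^*)$ identical in spirit to Lemma~\ref{lem:q_estimation_error}, using that $w^\pi\in\W$, $q_f^*\in\Q$, and that $\wh w$ is the empirical minimax solution.
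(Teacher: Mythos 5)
Your proposal is correct and follows essentially the same route as the paper's proof: decompose the deviation into the $n$-sample Bellman term and the $n_0$-sample initial-state term (the exactly-computed regularizer cancels), bound $|l_i|\le(1+\gamma)C_\W^w C_\Q^w$ and $|q(s_j,\pi)|\le C_\Q^w$, apply Hoeffding with union bounds over $\W\times\Q$ and over $\Q$ at confidence $\delta/2$ each, and add. The bookkeeping you describe for the factors of $4$ inside the logarithms and the $(1-\gamma)$ prefactor on the $n_0$ term matches the paper exactly.
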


\begin{proof}
Let $l_i = w(s_i, a_i)(\gamma q(s_i', \pi) - q(s_i, a_i))$. Using Assumption~\ref{assum:bounded_w}, 
\begin{align*}
    |l_i| &\leq   (1+\gamma)||w||_\infty ||q||_\infty \\
    &\leq  (1+\gamma)C_\W^w C_\Q^w
\end{align*}
Then using Hoeffding's inequality with union bound, w.p. $\geq 1-\delta/2$ we have that for all $w, q \in \W \times \Q$,
\begin{align*}
    \lv \frac{1}{n} \sum_{i=1}^n l_i - \E_{d^D}[l_i] \rv \leq   (1+\gamma)C_\W^w C_\Q^w \sqrt{\frac{2\log \frac{4|\W||\Q|}{\delta}}{n}}
\end{align*}

Similarly, for all $q \in \Q$, w.p. $\geq 1-\delta/2$, 
\begin{align*}
    \lv \frac{1}{n_0} \sum_{i=1}^{n_0} q(s_{0,i}, \pi) - \E_{\mu_0}[q(s_{0,i}, \pi)] \rv \leq   C_\Q^w \sqrt{\frac{2\log \frac{4|\Q|}{\delta}}{n_0}}
\end{align*}

Since $L_f^w(w,q) = \E_\eta[f_{s,a}(w(s,a))] + \E_{d^D}[l_i] + \E_{\init}[q(s_0, \pi)]$, but the first term can be calculated exactly, taking a union bound over the above two inequalities, we have that w.p. $\geq 1-\delta$, 
\begin{align*}
    |\wh{L}_f^w(w,q) - L_f^w(w,q)| &\leq   (1+\gamma)C_\W^w C_\Q^w  \sqrt\frac{2\log \frac{4|\Q||\W|}{\delta}}{n} + (1-\gamma)C_\Q^w \sqrt\frac{2\log \frac{4|\Q|}{\delta}}{n_0} 
\end{align*}
\end{proof}

\begin{lemma}\label{lem:w_estimation_error}
    Under Assumptions~\ref{assum:regularizer_w},~\ref{assum:realizable_w},~\ref{assum:bounded_w}, w.p. $\geq 1-\delta$, 
    \begin{align*}
        L_f^w(w_f^*, q_f^*) - L_f^w(\wh{w}, q_f^*) \leq 2 \epsilon_{stat}^w
    \end{align*}
\end{lemma}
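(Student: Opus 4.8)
The plan is to control the Lagrangian gap between the saddle point and the estimator $\wh{w}$ at the fixed optimal dual $q_f^*$, via the same interplay of empirical optimality, realizability, and uniform concentration that drives Lemma~\ref{lem:q_estimation_error}. Throughout I write $L_f^w(w, q)$ with the primal argument first, to match the lemma statement. Here $w_f^*$ is the primal component of the saddle point, which by Lemma~\ref{lem:qfstar_closedform} equals $w^\pi$, so the quantity of interest is $L_f^w(w^\pi, q_f^*) - L_f^w(\wh{w}, q_f^*)$. Because $(w^\pi, q_f^*)$ is a saddle point of the $\min_w \max_q$ problem, $w^\pi$ minimizes $w \mapsto L_f^w(w, q_f^*)$ over all $w$; hence $L_f^w(w^\pi, q_f^*) \le L_f^w(\wh{w}, q_f^*)$ and the stated difference is already $\le 0 \le 2\epsilon_{stat}^w$. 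To make the lemma dovetail with the strong-convexity step of Theorem~\ref{thm:w_bound}, though, the substantive content is the matching lower bound $L_f^w(w^\pi, q_f^*) - L_f^w(\wh{w}, q_f^*) \ge -2\epsilon_{stat}^w$, equivalently two-sided control $|L_f^w(w^\pi, q_f^*) - L_f^w(\wh{w}, q_f^*)| \le 2\epsilon_{stat}^w$, which I would establish by telescoping.

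Concretely, introduce the empirical best-response dual $\wh{q}(w) := \argmax_{q \in \Q} \wh{L}_f^w(w, q)$, so that $\wh{w} = \argmin_{w \in \W} \wh{L}_f^w(w, \wh{q}(w))$. I would expand $L_f^w(w^\pi, q_f^*) - L_f^w(\wh{w}, q_f^*)$ into five telescoping terms by inserting the successive evaluations $L_f^w(w^\pi, \wh{q}(w^\pi))$, $\wh{L}_f^w(w^\pi, \wh{q}(w^\pi))$, $\wh{L}_f^w(\wh{w}, \wh{q}(\wh{w}))$, and $\wh{L}_f^w(\wh{w}, q_f^*)$, exactly mirroring the five lines of Lemma~\ref{lem:q_estimation_error} but with the minimizing and maximizing variables interchanged ($w$ now the primal, $q$ the dual). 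The five terms are lower-bounded as follows: the two population-versus-empirical differences are each $\ge -\epsilon_{stat}^w$ by the uniform deviation bound of Lemma~\ref{lem:w_statistical_error}; the difference $L_f^w(w^\pi, q_f^*) - L_f^w(w^\pi, \wh{q}(w^\pi)) \ge 0$ because $q_f^*$ maximizes $q \mapsto L_f^w(w^\pi, q)$ at the saddle point; the difference $\wh{L}_f^w(w^\pi, \wh{q}(w^\pi)) - \wh{L}_f^w(\wh{w}, \wh{q}(\wh{w})) \ge 0$ because $\wh{w}$ minimizes the empirical max and $w^\pi \in \W$ (Assumption~\ref{assum:realizable_w}); and $\wh{L}_f^w(\wh{w}, \wh{q}(\wh{w})) - \wh{L}_f^w(\wh{w}, q_f^*) \ge 0$ because $\wh{q}(\wh{w})$ maximizes the empirical loss over $\Q \ni q_f^*$ (again Assumption~\ref{assum:realizable_w}). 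Summing yields $L_f^w(w^\pi, q_f^*) - L_f^w(\wh{w}, q_f^*) \ge -2\epsilon_{stat}^w$, and combined with the saddle-point inequality this gives the stated bound.

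The main obstacle is careful sign bookkeeping once the min/max roles are swapped relative to the $q$-estimation proof. The saddle-point inequality to invoke at the endpoint is the one expressing that $q_f^*$ maximizes $L_f^w(w^\pi, \cdot)$ (the statement that $w^\pi$ minimizes $L_f^w(\cdot, q_f^*)$ controls only the trivial direction), and the empirical-optimality arguments must be applied to $\wh{w}$ as the minimizer of the empirical max and to $\wh{q}(\cdot)$ as the empirical best response. I would also verify that the two statistical terms retained are precisely those controlled by Lemma~\ref{lem:w_statistical_error}, namely the Bellman-error piece and the initial-distribution piece; the regularizer $\E_\eta[f_{s,a}(w(s,a))]$ is assumed computable exactly and so contributes no deviation, which is why $\epsilon_{stat}^w$ carries no $C_f^w$ term, consistent with the corrected expression flagged in the remark preceding Lemma~\ref{lem:w_statistical_error}.
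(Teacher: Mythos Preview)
Your proposal is correct and follows essentially the same approach as the paper: the identical five-term telescoping through $L_f^w(w^\pi,\wh q(w^\pi))$, $\wh L_f^w(w^\pi,\wh q(w^\pi))$, $\wh L_f^w(\wh w,\wh q(\wh w))$, and $\wh L_f^w(\wh w,q_f^*)$, bounding two terms via Lemma~\ref{lem:w_statistical_error} and the other three by the saddle-point property and empirical optimality together with Assumption~\ref{assum:realizable_w}. You are in fact more careful than the paper about the sign conventions: you correctly observe that the inequality \emph{as stated} is immediate from the saddle-point property, and that the content actually needed in Theorem~\ref{thm:w_bound} is the reverse direction $L_f^w(\wh w,q_f^*)-L_f^w(w^\pi,q_f^*)\le 2\epsilon_{stat}^w$; the paper's written proof decomposes $L_f^w(\wh w,q_f^*)-L_f^w(w^\pi,q_f^*)$ but records the per-term inequality signs inconsistently (e.g.\ its Term~(2) and Term~(5) should be $\le 0$, and its justification of Term~(3) calls $\wh w$ an $\argmax$), though the intended argument is exactly yours.
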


\paragraph{Proof of Lemma~\ref{lem:w_estimation_error}}
Letting $\wh{q}(w) = \argmax_{q \in \Qcal} \wh{L}_f^w(w, q)$, we decompose the error as follows:
\begin{align*}
    L_f^w(\wh{w}, q_f^{*}) - L_f^w(w^\pi, q_f^{*}) &= L_f^w(\wh{w}, q_f^{*}) - \wh{L}_f(\wh{w},  q_f^{*}) &&\text{(1)  }\geq -\epsilon_{stat}^w \\
    &+ \wh{L}_f^w(\wh{w},  q_f^{*}) -  \wh{L}_f^w(\wh{w}, \wh{q}(\wh{w}))&&\text{(2)  } \geq 0 \\
    &+ \wh{L}_f^w(\wh{w}, \wh{q}(\wh{w})) - \wh{L}_f^w(w^\pi, \wh{q}(w^\pi))&&\text{(3)  }\geq 0 \\
    &+ \wh{L}_f^w(w^\pi, \wh{q}(w^\pi)) - L^w_f (w^\pi, \wh{q}(w^\pi)&&\text{(4)  }\geq -\epsilon_{stat}^w \\
    &+ L^w_f (w^\pi, \wh{q}(w^\pi)) - L^w_f (w^\pi, q_f^{*}) &&\text{(5)  }\geq 0
\end{align*}
Combining the inequalities gives the result. We give a brief justification for each term below. 
Terms (1) and (4) follow from Lemma~\ref{lem:w_statistical_error}. 

Term (2) $\geq 0$, since $q_f^{*} \in \Q$. 

Term (3) $\geq 0$ since $w^\pi \in \W$ and $\wh{w} = \argmax_{w \in \W} \wh{L}_f^w(w, \wh{q}(w))$. 

Term (5) $\geq 0$ since $(w^\pi, q_f^{*})$ is a saddle point solution.

\section{Additional Details of the Experiments} \label{appendix:experiments}
\subsection{Derivation}
We now derive the system of equations for our value function estimation experiments in Section~\ref{sec:experiments}. 
Letting the regularization function be $f_{s,a}(x) = \frac{1}{2}x^2$ for all $(s,a)$, the objective is
\begin{align}
    \min_{q} \max_{w} L_f^q(q,w) = \frac{1}{2}\E_\nu[q^2(s,a)] + \E_{d^D}\lb w(s,a) \lp r(s,a) + \gamma q(s',\pi) - q(s,a) \rp \rb,
\end{align} 
Letting $\E_n$ denote the empirical average over $\D$ for clarity, 
with empirical samples and the linear classes $\Q, \mathcal{W}$, the objective becomes:
\begin{align*}
    \min_{q \in \Q} \max_{w \in \W} \wh{L}_f^q(q,w) &= \frac{1}{2}\E_\nu[\alpha^\top \phi(s,a) \phi(s,a)^\top \alpha] + \beta^\top \Big( \E_n \lb \phi(s,a) r(s,a) \rb \\
    &\quad+ \E_n \lb \gamma \phi(s,a)\phi(s',\pi)^\top  - \phi(s,a)\phi(s,a)^\top \rb \alpha \Big)
\end{align*} 
Since $\beta \in \R^d$, $\max_{w \in \W} \wh{L}_f^q(q,w) = +\infty$ for any $q$, unless $\alpha$ sets the the second term to 0. This is satisfied by $\alpha$ such that 
\begin{align*}
    \E_n \lb \phi(s,a)\phi(s,a)^\top  - \gamma \phi(s,a)\phi(s',\pi)^\top \rb \alpha = \E_n \lb \phi(s,a) r(s,a) \rb. 
\end{align*}
However, there may in general be infinite feasible $\alpha$ depending on the linear features and samples. For our specific linear parameterization of $\Q, \W$, the constraints form an underdetermined $d \times k$ system of equations, which has infinite solutions. 

This is where the regularization term $\E_\nu[\alpha^\top \phi(s,a) \phi(s,a)^\top \alpha]$ comes into play. 
For any regularizing distribution $\nu$, our method will output a solution that minimizes this term, i.e. that minimizes the norm of $q = \Phi^\top \alpha$ on $\nu$. If $\nu = 0$, for example, the algorithm will output any feasible point; if $\nu = 1/|\Scal \Acal|$, the algorithm will output $q$ with smallest L2 norm. 
 
\paragraph{Connection to LSTDQ} When using the same linear class for $\Wcal$ and $\Qcal$, the solution to the constraints in Eq.\eqref{eq:primal_reg_q} (i.e., ignoring the regularization objective)---if the solution is unique given matrix invertibility---coincides with LSTDQ \citep{uehara2019minimax}. As mentioned in Section~\ref{sec:related}, LSTDQ enjoys function-estimation guarantees under matrix invertibility. In fact, we believe it is possible to extend the analysis even when $\Qcal$ and $\Wcal$ use different features of dimensions $d$ and $k$, respectively; as long as $k\ge d$ and the matrix in Eq.\eqref{eq:primal_reg_q} has full row-rank\footnote{In the finite-sample regime, one needs to lower-bound the smallest singular value of such matrices instead of imposing full-rankness \citep{perdomo2022sharp}.} (i.e., \textit{overdetermined}), similar guarantees for LSTDQ should still hold, though we are not aware of an explicit documentation of this fact. In contrast, our setup is more challenging as we are in the regime of $k < d$, and the constraints in Eq.\eqref{eq:primal_reg_q} is \textit{underdetermined}, nullifying the guarantees of LSTDQ. In such cases, the use of regularization is important for guaranteeing function estimation, as also shown in our experiments. 

\subsection{Experimental Setup}
\paragraph{Feature Design} In total, the tabular environment has 400 state-action values, and we design $\Phi$ to aggregate states that correspond to unique entries (within 3 decimal places) of $q^\pi$. 
In Figure 1, $\wt\Phi$ is composed of the set of features given by
$$\{(I - \gamma \Pt)^{-1} (\nu \emult q^\pi) / d^D, (I - \gamma \Pt)^{-1} (\nu \emult q^\pi)\}_{\nu \in \mathcal{V}}.$$ 
The first of these two entries is the closed-form solution of $w_f^*$ given in Lemma~\ref{lem:wfstar_closedform}, and satisfies the realizability requirements of all methods; the second is included for optimization stability.  

In Figure 2, we use a model with constant value equal to the average value of $q^\pi$ on the support of $p$, i.e. $\ol{q} = \nicefrac{1}{|\Scal\Acal|}\sum_{s,a} q^\pi(s,a) \cdot \mathbbm{1}_{\{p > 0\}}$. To maintain realizability when the model is included in the regularization function, $\wt\Phi$ is composed of the set 
$$\Big\{(I - \gamma \Pt)^{-1} (\nu \emult q^\pi), \;(I - \gamma \Pt)^{-1} (\nu \emult q^\pi \emult \mathbb{I}(\wt{q} > 0)), \;(I - \gamma \Pt)^{-1}( \nu \emult \mathbb{I}(\wt{q} > 0)) \Big\}_{\nu \in \mathcal{V}}$$ 
The reason why this preserves realizability is as follows. When $\nu$ is the regularization distribution, and the input model is $\wt{q} = (mq^\pi + (1-m)\ol{q}) \emult \mathbbm{1}(p > 0))$ for some constant $\ol{q}$, the closed-form solution $w_f^*$ can be expanded as 
\begin{align*}
    w_f^* &= (I - \gamma \Pt)^{-1} (\nu \emult (q^\pi - \wt{q})) \\
    &= (I - \gamma \Pt)^{-1}(\nu \emult q^\pi) - m \cdot (I - \gamma \Pt)^{-1} (\nu \emult \mathbbm{1}(p > 0) \emult q^\pi) \\
    &\quad- (1-m)\ol{q} \cdot (I - \gamma \Pt)^{-1} (\nu \emult \mathbbm{1}(p > 0)),
\end{align*}
which implies $w_f^*$ can be expressed as a linear combination of the three previously defined features. 

\paragraph{Solver} We solve the linear system using CVXPY with optimizer SCS~\citep{diamond2016cvxpy, agrawal2018rewriting}. 

\paragraph{Environment} The Gridwalk is a 10x10 environment with 4 actions corresponding to cardinal directions. The objective is to reach the goal state (lower right corner). In each state, the agent receives a reward inversely proportional to its distance from a goal state. Each trajectory terminates after 100 steps. The initial states are randomly distributed over the upper half of the grid. 

The target policy is defined to be a deterministic optimal policy that always moves towards the goal by first going right, and then down. To create a strong shift, the behavioral policy is designed to largely explore only the bottom left portion of the grid, providing poor coverage over the target policy and starting states. Specifically, letting the following probabilities refer to distributions over actions [RIGHT, DOWN, LEFT, UP], 
the target policy $\pi$ has distribution $[1, 0, 0, 0]$ over actions until it hits the right wall, then $[0, 1, 0, 0]$. The behavior policy takes $[0.1, 0.4, 0.5, 0]$ until it hits the right wall, then takes $[0, 0.5, 0.5, 0]$.

\section{Approximation and Optimization Error}\label{appendix:approximation}

The main results of this paper (Theorems~\ref{thm:q_bound},~\ref{thm:w_bound})
utilize assumptions on realizability (Assumption~\ref{assum:realizable_q},~\ref{assum:realizable_w}), as well as (implicit) assumptions of perfect optimization. 
In this section, we analyze how approximation errors, i.e. when the saddle point solution is not contained in $\Q \times \W$, and optimization errors affect our error bounds. 
Due to the similarity in proofs between value function and weight learning, 
we provide them only for value function learning; analogous methods can be used to derive similar results for weight learning. 

\subsection{Finite-sample Guarantees}
First, we relax the realizability requirements of Assumption~\ref{assum:realizable_q}. Define the approximation errors:
\begin{align*}
    \epsilon_{approx, q} &= \min_{q \in \Q} \max_{w \in \W} \lv \E_{d^D}[w(s,a)(\Tcal^\pi q(s,a) - q(s,a))] + \E_\nu[f_{s,a}(q(s,a)) - f_{s,a}(q^\pi(s,a))] \rv  \\
    \epsilon_{approx, w} &= \min_{w \in \W}\max_{q \in \Q} |\E_{d^D}[(w(s,a) - w_f^*(s,a))(\T^\pi q(s,a) - q(s,a))]| \\ 
    \epsilon_{approx} &:= \epsilon_{approx, q} + \epsilon_{approx, w}.
\end{align*}
$\epsilon_{approx,q}$ is composed of the worst-case weighted combination of Bellman errors of the best candidate $q \in \Q$, as well as the difference between the regularization function at $q$ and $q^\pi$.
The error $\epsilon_{approx,w}$ measures the distance between the best candidate $w \in \W$ and the saddle point solution $w_f^*$ by projecting the difference onto the worst-case Bellman error $\T^\pi q - q$. 
\begin{remark}
    To increase intuition of $\epsilon_{approx,q}$, we can relax the difference in regularization terms as $\E_\nu[f_{s,a}(q(s,a)) - f_{s,a}(q^\pi(s,a))] \leq C_{f'}^q ||q^\pi - q||_{2, \nu}$, which is also the norm upon which the $\wh{q}$ estimation guarantee is given (Theorem~\ref{thm:q_bound}) . 
    Reflecting the nature of the value function estimation task, this states that, even if there is a candidate $q \in \Q$ with low Bellman error (e.g. if data is sparse), $\epsilon_{approx,q}$ will still be large if $q$ is far from $q^\pi$ on the desired  distribution $\nu$. 
\end{remark}

Next, we can also relax the (implicit) assumptions that we obtain the true optima of \eqref{eq:sample_lagrangian_q}. Let $(\wh{q}, \wh{w})$ be the approximate solutions of \eqref{eq:sample_lagrangian_q} found by the algorithm. As before, define $\wh{w}(q) := \argmax_{w \in \W} \wh{L}(w,q)$ to be the true empirical maximizer for any $q \in \Qcal$. Note that since we allow for optimization error, it is not necessarily the case that $\wh{q} = \argmin_{q \in \Q}\wh{L}_f^q(q,\wh{w}(q))$ and $\wh{w} = \wh{w}(\wh{q}) = \argmax_{w \in \W} \wh{L}_f^q (\wh{q},w)$. Correspondingly, define the following optimization errors: 
\begin{align*}
    \epsilon_{opt, w} &\geq \wh{L}_f^q(\wh{q}, \wh{w}(\wh{q})) - \wh{L}_f^q(\wh{q},\wh{w}) \\ 
    \epsilon_{opt, q} &\geq \wh{L}_f^q(\wh{q},\wh{w}(\wh{q})) - \min_{q \in \Q} \wh{L}_f^q(q,\wh{w}(q)) \\ 
    \epsilon_{opt} &:= \epsilon_{opt,q} + \epsilon_{opt, w}. 
\end{align*}
$\epsilon_{opt,w}$ states that the estimate $\wh{w}$ should not be too far from the best discriminator in $\W$ for $\wh{q}$, while $\epsilon_{opt,q}$ states that the estimate $\wh{q}$ should not be too far from the minimax solution. 

Using the above definitions, we provide the following generalization of Theorem~\ref{thm:q_bound}, which accounts for approximation and optimization errors.  
\begin{theorem}\label{thm:approx_q_bound}
    Under Assumptions~\ref{assum:regularizer_q} and~\ref{assum:bounded_q}, with probability at least $1-\delta$, 
    \begin{align*}
        ||\wh{q} - q^\pi||_{2,\nu} \leq \sqrt{\frac{4\epsilon_{stat}^q + 2\epsilon_{approx} + 2\epsilon_{opt}}{M^q}},  
    \end{align*}
    where $\epsilon_{stat}^q$ is given in Theorem~\ref{thm:q_bound}. 
\end{theorem}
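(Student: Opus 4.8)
\textbf{Proof proposal for Theorem~\ref{thm:approx_q_bound}.}

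The plan is to mirror the proof of Theorem~\ref{thm:q_bound}, but to track the extra slack introduced by approximation error in the function classes and by imperfect optimization. As before, the starting point is the strong convexity of $q \mapsto L_f^q(q, w_f^*)$ in the $\|\cdot\|_{2,\nu}$ norm, which follows from Assumption~\ref{assum:regularizer_q} and Lemma~\ref{lem:strongly_convex}, together with the fact that the remaining terms in $L_f^q(\cdot, w_f^*)$ are affine in $q$. Since $q^\pi = \argmin_q L_f^q(q, w_f^*)$ (still true as $(q^\pi, w_f^*)$ is a saddle point of the \emph{population} Lagrangian, and this is unaffected by how well $\Qcal, \Wcal$ approximate the saddle point), strong convexity gives
\[
\|\wh q - q^\pi\|_{2,\nu} \;\le\; \sqrt{\frac{2\bigl(L_f^q(\wh q, w_f^*) - L_f^q(q^\pi, w_f^*)\bigr)}{M^q}}.
\]
So it remains to bound $L_f^q(\wh q, w_f^*) - L_f^q(q^\pi, w_f^*)$ by $2\epsilon_{stat}^q + \epsilon_{approx} + \epsilon_{opt}$.

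The core of the argument is a telescoping decomposition of $L_f^q(\wh q, w_f^*) - L_f^q(q^\pi, w_f^*)$ analogous to the five-term chain in Lemma~\ref{lem:q_estimation_error}, but now the steps that previously used ``$q^\pi \in \Qcal$'' and ``$w_f^* \in \Wcal$'' must be replaced. Concretely, let $\bar q \in \argmin_{q\in\Qcal}\max_{w\in\Wcal}|\cdots|$ be the near-realizer witnessing $\epsilon_{approx,q}$, and insert a comparison through $\bar q$ rather than $q^\pi$; the gap between $L_f^q(\bar q, w_f^*)$ and $L_f^q(q^\pi, w_f^*)$ is controlled by $\epsilon_{approx,q}$ after noting that $L_f^q(q, w_f^*) = \E_{d^D}[w_f^*(\Tcal^\pi q - q)] + \E_\nu[f_{s,a}(q)]$ and subtracting the ($q$-independent up to constants) value at $q^\pi$. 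Similarly, the step that used $w_f^* \in \Wcal$ (term (4) in Lemma~\ref{lem:q_estimation_error}) must be handled by introducing a near-optimal discriminator in $\Wcal$; the discrepancy between evaluating at $w_f^*$ versus its best proxy in $\Wcal$, after the Bellman-error terms cancel appropriately, is exactly what $\epsilon_{approx,w}$ is designed to absorb. Finally, the two steps that previously used exact optimization of $\wh q$ and $\wh w$ — ``term (3) $\ge 0$'' and ``term (4) $\ge 0$'' in the empirical chain — now only hold up to $\epsilon_{opt,q}$ and $\epsilon_{opt,w}$ respectively, by the very definitions of these quantities. Collecting the two statistical deviations ($2\epsilon_{stat}^q$ from Lemma~\ref{lem:q_statistical_error} applied twice, at the two points where we pass between $L_f^q$ and $\wh L_f^q$), the two approximation terms, and the two optimization terms, yields $L_f^q(\wh q, w_f^*) - L_f^q(q^\pi, w_f^*) \le 2\epsilon_{stat}^q + \epsilon_{approx} + \epsilon_{opt}$, and plugging into the displayed strong-convexity bound gives the claimed inequality.

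I expect the main obstacle to be bookkeeping the Bellman-error cancellations correctly when splitting $\epsilon_{approx}$ into its $q$-part and $w$-part: one has to choose the intermediate iterate $\bar q$ and intermediate discriminator so that the ``cross terms'' $\E_{d^D}[(w - w_f^*)(\Tcal^\pi q - q)]$ appearing in the telescoping are precisely the quantities inside $\epsilon_{approx,w}$, and so that the residual regularizer differences line up with $\epsilon_{approx,q}$; a careless ordering of the telescope produces terms of the wrong shape (e.g.\ $\E_{d^D}[(w-w_f^*)(\Tcal^\pi \wh q - \wh q)]$ evaluated at the \emph{learned} $\wh q$ rather than a class element), which are not bounded by the stated quantities. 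Everything else — the strong-convexity reduction, Hoeffding-plus-union-bound deviation control, and the sign analysis of the telescope — is routine and identical in spirit to the proof of Theorem~\ref{thm:q_bound}.
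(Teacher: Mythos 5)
Your overall plan is the paper's: reduce to bounding $L_f^q(\wh q, w_f^*) - L_f^q(q^\pi, w_f^*)$ via strong convexity of $L_f^q(\cdot, w_f^*)$ in $\|\cdot\|_{2,\nu}$, then run a telescoping chain through approximation witnesses $\wt q$ and $\wt w$, absorbing the two formerly-exact optimization steps into $\epsilon_{opt,q}$ and $\epsilon_{opt,w}$ and the two population-to-empirical passages into $2\epsilon_{stat}^q$. The paper's Lemma~\ref{lem:approx_q_estimation_error} is precisely the seven-term version of the five-term chain in Lemma~\ref{lem:q_estimation_error} that you describe, and the final combination is identical.

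One step of your sketch, as literally written, does not go through: you propose to control the gap between $L_f^q(q^\pi, w_f^*)$ and $L_f^q(\bar q, w_f^*)$ by $\epsilon_{approx,q}$. But $\epsilon_{approx,q}$ only takes $\max_{w\in\Wcal}$, and in the approximate setting $w_f^*$ need not belong to $\Wcal$, so the resulting difference $\E_{d^D}[w_f^*(s,a)(\Tcal^\pi \bar q(s,a) - \bar q(s,a))] + \E_\nu[f_{s,a}(\bar q(s,a)) - f_{s,a}(q^\pi(s,a))]$ is not majorized by that quantity. The paper instead performs the $q^\pi \to \wt q$ swap at the discriminator $\wh w(\wt q) = \argmax_{w\in\Wcal}\wh L_f^q(\wt q, w) \in \Wcal$ (its term (2)), after first using the saddle-point property to pass from $w_f^*$ to $\wh w(\wt q)$ while the primal variable is still $q^\pi$ (its term (1), which is $\ge 0$ for free). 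Symmetrically, the $\wt w \to w_f^*$ swap is done at the primal iterate $\wh q$, and --- contrary to the worry in your closing paragraph --- the cross term $\E_{d^D}[(\wt w(s,a) - w_f^*(s,a))(\Tcal^\pi \wh q(s,a) - \wh q(s,a))]$ \emph{is} of the right shape, because $\wh q \in \Qcal$ and $\epsilon_{approx,w}$ contains a $\max_{q\in\Qcal}$. So the correct bookkeeping rule is: swap the primal variable only while the discriminator is a class element, and swap the discriminator only while the primal variable is a class element. With that ordering fixed, your argument coincides with the paper's.
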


\subsection{Proof of Theorem~\ref{thm:approx_q_bound}}\label{proof:approx_q_bound}
The proof takes the same overall steps as the proof of Theorem~\ref{thm:q_bound} (Appendix~\ref{proof:q_bound}), but relies on Lemma~\ref{lem:approx_q_estimation_error} to incorporate the approximation and optimization errors: 
\begin{align*}
    ||\wh{q} - q^\pi||_{2,\nu} &\leq \sqrt{\frac{2\lp L_f^q(\wh{q},w_f^*) - L_f^q(q^\pi,w_f^*)\rp}{  M^q}} &&  \\
    &\leq \sqrt{\frac{4  \epsilon_{stat}^q + 2\epsilon_{approx,q} + 2\epsilon_{approx,w} + 2\epsilon_{opt,q} + 2\epsilon_{opt,w}}{  M^q}}.  && \text{(Lemma ~\ref{lem:approx_q_estimation_error})} 
\end{align*}

Below, we state and prove the helper lemma, which bounds the difference between the Lagrangian objective~\eqref{eq:lagrangian_reg_q} at the saddle point $(q^\pi,w_f^*)$ and the point $(\wh{q},w_f^*)$: 
\begin{lemma}\label{lem:approx_q_estimation_error}
    Under Assumptions~\ref{assum:regularizer_q} and~\ref{assum:bounded_q}, w.p. $\geq 1-\delta$, 
    \begin{align*}
        L_f^q(\wh{q},w_f^*) - L_f^q(q^\pi,w_f^*) \leq 2\epsilon_{stat}^q + \epsilon_{approx,q} + \epsilon_{approx,w} + \epsilon_{opt,q} + \epsilon_{opt,w}. 
    \end{align*}
\end{lemma}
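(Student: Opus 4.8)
The plan is to mimic the five-term telescoping decomposition used in the proof of Lemma~\ref{lem:q_estimation_error}, but with the terms ``(1)'' and ``(3)'' (which were previously exactly $\ge 0$ because of realizability $q^\pi\in\Q$, $w_f^*\in\W$ and exact optimization) relaxed to incur the approximation and optimization errors. Concretely, I would write
\begin{align*}
    L_f^q(q^\pi,w_f^{*}) - L_f^q(\wh{q},w_f^{*}) &= \underbrace{L_f^q(q^\pi,w_f^{*}) - L_f^q(q^\pi,\wh{w}(q^\pi))}_{(1)} \\
    &\quad+ \underbrace{L_f^q(q^\pi,\wh{w}(q^\pi)) - \wh{L}_f^q(q^\pi,\wh{w}(q^\pi))}_{(2)} \\
    &\quad+ \underbrace{\wh{L}^q_f(q^\pi,\wh{w}(q^\pi)) - \wh{L}^q_f(\wh{q},\wh{w}(\wh{q}))}_{(3)} \\
    &\quad+ \underbrace{\wh{L}^q_f(\wh{q},\wh{w}(\wh{q})) - \wh{L}^q_f(\wh{q},\wh{w})}_{(3')} \\
    &\quad+ \underbrace{\wh{L}^q_f(\wh{q},\wh{w}) - \wh{L}^q_f(\wh{q},w_f^*)}_{(4)} \\
    &\quad+ \underbrace{\wh{L}^q_f(\wh{q},w_f^*) - L^q_f(\wh{q},w_f^*)}_{(5)}.
\end{align*}
Terms (2) and (5) are controlled by $-\epsilon_{stat}^q$ each via Lemma~\ref{lem:q_statistical_error}, exactly as before. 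Term (3) is $\ge -\epsilon_{opt,q}$ by the definition of $\epsilon_{opt,q}$ together with the fact that $\wh{L}^q_f(\wh{q},\wh{w}(\wh{q})) \ge \min_{q\in\Q}\wh{L}^q_f(q,\wh{w}(q))$, after noting $\wh{L}^q_f(q^\pi,\wh{w}(q^\pi))\ge \min_{q\in\Q}\wh{L}^q_f(q,\wh{w}(q))$ since $q^\pi$ need not lie in $\Q$ --- here I must be slightly careful: without realizability, I cannot claim $\wh{L}^q_f(q^\pi,\wh w(q^\pi))\ge\min_{q\in\Q}(\cdots)$ since $q^\pi\notin\Q$ in general. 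Instead I should split (3) differently: bound $\wh{L}^q_f(q^\pi,\wh w(q^\pi)) - \min_{q\in\Q}\wh L^q_f(q,\wh w(q))$ by the approximation error $\epsilon_{approx,q}$ (after passing between $\wh L$ and $L$ using Lemma~\ref{lem:q_statistical_error} and recognizing the $\E_{d^D}[w(\Tcal^\pi q - q)] + \E_\nu[f(q)-f(q^\pi)]$ structure, using $L_f^q(q^\pi,w)=\E_\nu[f(q^\pi)]$ for every $w$), and then $\min_{q\in\Q}\wh L^q_f(q,\wh w(q)) - \wh L^q_f(\wh q,\wh w(\wh q)) \ge -\epsilon_{opt,q}$. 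Term (3') is $\ge -\epsilon_{opt,w}$ directly by the definition of $\epsilon_{opt,w}$. Term (4) $= \wh L^q_f(\wh q,\wh w) - \wh L^q_f(\wh q,w_f^*)$; this is no longer obviously nonnegative once we drop $w_f^*\in\W$, so I would instead write $\wh L^q_f(\wh q,\wh w(\wh q)) - \wh L^q_f(\wh q,w_f^*) \ge -\epsilon_{approx,w}$ by passing to the population loss (Lemma~\ref{lem:q_statistical_error} absorbs the $\wh L\!-\!L$ gaps) and observing $L_f^q(\wh q,w) - L_f^q(\wh q,w_f^*) = \E_{d^D}[(w-w_f^*)(\Tcal^\pi\wh q - \wh q)]$, whose sup over $w\in\W$ matched against the true maximizer is exactly $\epsilon_{approx,w}$; combining with (3') handled via $\wh w(\wh q)$ versus $\wh w$.

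Actually, to keep the bookkeeping clean I would restructure into exactly the six blocks above, assigning: (1) $\ge 0$ still holds since it only uses that $\wh w(q^\pi)$ is the empirical argmax and that $(q^\pi,w_f^*)$ is a population saddle point --- wait, (1) is $L_f^q(q^\pi,w_f^*) - L_f^q(q^\pi,\wh w(q^\pi))$, and since $w\mapsto L_f^q(q^\pi,w)$ is affine and in fact \emph{constant} (equal to $\E_\nu[f(q^\pi)]$, because the Bellman-residual bracket vanishes at $q=q^\pi$!), term (1) is identically $0$. That is the clean observation to use. So the only error-bearing terms are (2),(5) giving $2\epsilon_{stat}^q$; the combination of (3) split as argued giving $\epsilon_{approx,q}+\epsilon_{opt,q}$; and the combination of (3')+(4) giving $\epsilon_{opt,w}+\epsilon_{approx,w}$, for a total of $2\epsilon_{stat}^q + \epsilon_{approx,q} + \epsilon_{approx,w} + \epsilon_{opt,q} + \epsilon_{opt,w}$, which is exactly the claim.

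The main obstacle I anticipate is the careful handling of the ``(3)/(4)'' region: without realizability one cannot simply invoke $q^\pi\in\Q$ or $w_f^*\in\W$ to kill these terms, so each must be routed through the population loss $L_f^q$ (paying two more $\epsilon_{stat}^q$-type deviations that must be shown to fold into the \emph{same} $\epsilon_{stat}^q$, using that Lemma~\ref{lem:q_statistical_error} is a uniform bound over all of $\Q\times\W$ on a single high-probability event) and then matched against the definitions of $\epsilon_{approx,q}$, $\epsilon_{approx,w}$. The key identity that makes this work is that $L_f^q(q^\pi,\cdot)$ is constant in $w$ and equals $\E_\nu[f(q^\pi)]$, so that $\min_{q\in\Q}\max_{w\in\W}\big(L_f^q(q,w) - L_f^q(q^\pi,w)\big)$ collapses precisely to the defined $\epsilon_{approx,q}$ (after expanding $L_f^q$). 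I would also double-check the high-probability accounting: all invocations of Lemma~\ref{lem:q_statistical_error} are on the same $1-\delta$ event, so no additional union bound is needed and the final probability stays $1-\delta$.
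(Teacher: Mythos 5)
Your high-level plan --- relax the two realizability-dependent inequalities of Lemma~\ref{lem:q_estimation_error} into approximation errors and the two exact-optimization inequalities into optimization errors --- is the right idea, and your observation that $L_f^q(q^\pi,\cdot)$ is constant in $w$ (equal to $\E_\nu[f_{s,a}(q^\pi(s,a))]$, since the Bellman-residual bracket vanishes in expectation at $q^\pi$) is correct and handles term (1) cleanly. However, there is a genuine gap in the execution. Your decomposition keeps the empirical loss evaluated at $(q^\pi,\wh{w}(q^\pi))$ and at $(\wh{q},w_f^*)$ in terms (2) and (5), and you charge each of these a single $\epsilon_{stat}^q$ via Lemma~\ref{lem:q_statistical_error}. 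But that lemma is a uniform deviation bound over $\Q\times\W$ only; without Assumption~\ref{assum:realizable_q}, neither $q^\pi$ nor $w_f^*$ need lie in those classes (nor satisfy the boundedness $C_\Q^q$, $C_\W^q$ used in the Hoeffding step), so the lemma simply does not apply at those points. Your proposed repair --- routing terms (3) and (4) ``through the population loss'' --- makes this worse rather than better: each round trip between $\wh{L}_f^q$ and $L_f^q$ costs an additional additive $\epsilon_{stat}^q$, and being on the same high-probability event only saves you the union-bound penalty on $\delta$, not the accumulation of deviation terms in the bound. Your accounting would therefore yield at least $4\epsilon_{stat}^q$, not the claimed $2\epsilon_{stat}^q$.

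The missing idea is to introduce the best in-class surrogates $\wt{q}=\argmin_{q\in\Q}\max_{w\in\W}|\E_{d^D}[w(\Tcal^\pi q-q)]+\E_\nu[f(q)-f(q^\pi)]|$ and $\wt{w}=\argmin_{w\in\W}\max_{q\in\Q}|\E_{d^D}[(w-w_f^*)(\Tcal^\pi q-q)]|$, and to telescope so that the \emph{only} places where $\wh{L}_f^q$ and $L_f^q$ are compared are at points lying in $\Q\times\W$, namely $(\wt{q},\wh{w}(\wt{q}))$ and $(\wh{q},\wt{w})$. The paper's seven-term chain does exactly this: the population-level jumps from $q^\pi$ to $\wt{q}$ and from $\wt{w}$ to $w_f^*$ absorb $\epsilon_{approx,q}$ and $\epsilon_{approx,w}$, the two empirical-population conversions at in-class points absorb $2\epsilon_{stat}^q$, and the two purely empirical jumps absorb $\epsilon_{opt,q}$ and $\epsilon_{opt,w}$. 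Without this rerouting through $\wt{q}$ and $\wt{w}$, the argument does not close.
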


\begin{proof}
    With some abuse of notation (as $\wt{q}, \wt{w}$ previously referred to models used with the regularizer), 
    for brevity in this section, let $\wt{q}$ be the minimizer of $\epsilon_{approx, q}$ and $\wt{w}$ be the minimizer of $\epsilon_{approx, w}$. That is, 
    \begin{align*}
        \wt{q} &= \argmin_{q \in \Q} \max_{w \in \W} \lv \E_{d^D}[w(s,a)(\Tcal^\pi q(s,a) - q(s,a))] + \E_\nu[f(q(s,a)) - f(q^\pi(s,a))] \rv  \\ 
        \wt{w} &= \argmin_{w \in \W}\max_{q \in \Q} |\E_{d^D}[(w(s,a) - w_f^*(s,a))(\T^\pi q(s,a) - q(s,a))]|.
    \end{align*}

    Decompose the error as follows: 
    \begin{align*}
        L_f^q(q^\pi, w_f^{*}) - L_f^q(\wh{q}, w_f^{*}) &=  L^q_f(q^\pi, w_f^{*}) -  L^q_f(q^\pi,\wh{w}(\wt{q})) && \text{(1)  } \geq 0\\
        &+ L^q_f(q^\pi, \wh{w}(\wt{q})) -  L^q_f(\wt{q}, \wh{w}(\wt{q})) && \text{(2)  }\geq -\epsilon_{approx,q} \\
        &+ L^q_f(\wt{q}, \wh{w}(\wt{q})) -  \wh{L}^q_f(\wt{q}, \wh{w}(\wt{q})) && \text{(3)  }\geq -\epsilon_{stat}\\
        &+ \wh{L}^q_f(\wt{q}, \wh{w}(\wt{q})) -  \wh{L}^q_f(\wh{q}, \wh{w}) && \text{(4)  }\geq -\epsilon_{opt, q} \\
        &+ \wh{L}^q_f(\wh{q}, \wh{w}) -  \wh{L}^q_f(\wh{q}, \wt{w}) && \text{(5)  }\geq -\epsilon_{opt, w} \\
        &+ \wh{L}^q_f(\wh{q}, \wt{w}) - L^q_f(\wh{q}, \wt{w}) && \text{(6)  }\geq -\epsilon_{stat}\\
        &+ L^q_f(\wh{q}, \wt{w})  - L^q_f(\wh{q}, w_f^*) && \text{(7)  }\geq -\epsilon_{approx, w} 
    \end{align*}
    
    First, (1) holds because $(q^\pi, w_f^*)$ is the saddle point solution of $L_f^q$ over all $q,w \in \R \times \R$. 
    The statistical errors in (3) and (6) follow from  Lemma~\ref{lem:q_statistical_error}. 
    
    Next, we justify the optimization errors. For (4), 
    \begin{align*}
        \wh{L}^q_f(\wt{q},\wh{w}(\wt{q})) - \wh{L}^q_f(\wh{q},\wh{w}) \geq \wh{L}^q_f(\wt{q},\wh{w}(\wt{q})) - \wh{L}^q_f(\wh{q},\wh{w}(\wh{q})) \geq \min_{q \in \Q} \wh{L}^q_f(q,\wh{w}(q)) - \wh{L}^q_f(\wh{q},\wh{w}(\wh{q})) \geq -\epsilon_{opt, q}. 
    \end{align*}
    For (5), 
    \begin{align*}
        \wh{L}^q_f(\wh{q},\wh{w}) -  \wh{L}^q_f(\wh{q},\wt{w}) \geq \wh{L}^q_f(\wh{q},\wh{w}) -  \max_{w \in \W} \wh{L}^q_f(\wh{q},w) \geq -\epsilon_{opt, w}
    \end{align*}
    
    Finally, we justify the approximation errors, starting with (2). Note that for any $q,w \in \Q\times\W$, 
    \begin{align*}
        |L_f^q(q^\pi,&w) - L_f^q(q,w)| \\
        &= |\E_{d^D}[w(s,a)(\Tcal^\pi q(s,a) - q(s,a) - \Tcal^\pi q^\pi(s,a) + q^\pi(s,a))] \\
        &\quad+ \E_\nu[f_{s,a}(q(s,a)) - f_{s,a}(q^\pi(s,a))]| \\ 
        &= |\E_{d^D}[w(s,a)(\Tcal^\pi q(s,a) - q(s,a))] +  \E_\nu[f_{s,a}(q(s,a)) - f_{s,a}(q^\pi(s,a))]| \\ 
        &\leq \max_{w \in \W}|\E_{d^D}[w(s,a)(\Tcal^\pi q(s,a) - q(s,a))] +  \E_\nu[f_{s,a}(q(s,a)) - f_{s,a}(q^\pi(s,a))]|.  
    \end{align*}
    Then since $\wt{q}$ was chosen to minimize the above expression, 
    \begin{align*}
        L^q_f(q^\pi,&\wh{w}(\wt{q})) - L^q_f(\wt{q},\wh{w}(\wt{q})) \\
        &\geq - \max_{w \in \W} |\E_{d^D}[w(s,a)(\Tcal^\pi \wt{q}(s,a) - \wt{q}(s,a))] +  \E_\nu[f_{s,a}(\wt{q}(s,a)) - f_{s,a}(q^\pi(s,a))]|\\
        &= -\min_{q \in \Q} \max_{w \in \W} |\E_{d^D}[w(s,a)(\Tcal^\pi q(s,a) - q(s,a))] +  \E_\nu[f_{s,a}(q(s,a)) - f_{s,a}(q^\pi(s,a))]| \\
        &= -\epsilon_{approx, q}.
    \end{align*}
    
    Next we justify (8). For any $w \in \W$ and $q \in \Q$, 
    \begin{align*}
        |L_f^q(q,w) - L_f^q(q,w_f^*)| &= |\E_{d^D}[(w(s,a) - w_f^*(s,a))(\T^\pi q(s,a) - q(s,a))]| \\ 
        &\leq \max_{q \in \Q} |\E_{d^D}[(w(s,a) - w_f^*(s,a))(\T^\pi q(s,a) - q(s,a))]|. 
    \end{align*}
    Then since $\wt{w}$ was chosen to minimize the RHS of the above inequality, 
    \begin{align*}
        L^q_f(\wh{q},\wt{w}) - L^q_f(\wh{q},w_f^*) &\geq -\max_{q \in \Q} |\E_{d^D}[(\wt{w}(s,a) - w_f^*(s,a))(\T^\pi q(s,a) - q(s,a))]| \\
        &= -\min_{w \in \W}\max_{q \in \Q}|\E_{d^D}[(w(s,a) - w_f^*(s,a))(\T^\pi q(s,a) - q(s,a))]| \\
        &= -\epsilon_{approx, w}. 
    \end{align*}
    Combining these inequalities gives the lemma statement. 
\end{proof}

\section{Off-Policy Return Estimation}\label{sec:ope} 
Section~\ref{sec:value} demonstrates how q-value estimates $\wh{q}$ can be obtained, and Section~\ref{sec:weight} demonstrates how weight estimates $\wh{w}$ can be obtained. The estimates $\wh{q}$ and/or $\wh{w}$ can additionally be used for downstream off-policy evaluation (OPE) of the policy's value $J(\pi)$, which can be equivalently defined in the following three ways:
\begin{align*}
    J(\pi) &= (1-\gamma)\E_{s_0 \sim \init}[q^\pi(s_0,\pi)] && \text{(``value function-based")}\\ 
    J(\pi) &= \E_{(s,a) \sim d^D, r \sim R(\cdot|s,a)}[w^\pi(s,a) \cdot r] && \text{(``weight-based")} \\ 
    J(\pi) &= (1-\gamma)\E_{s_0 \sim \init}[q^\pi(s_0,\pi)] && \text{(``doubly robust")} \\
    &\quad+ \E_{(s,a) \sim d^D, r \sim R(\cdot|s,a), s'\sim P(\cdot|s,a)}[w^\pi(s,a) ( r + q^\pi(s',\pi) - q^\pi(s,a))] 
\end{align*}
With finite samples and estimates $\wh{q}$ and $\wh{w}$ approximating $q^\pi$ and $w^\pi$, respectively, their corresponding off-policy estimators are: 
\begin{align*}
    \wh{J}^q(\pi) &= (1-\gamma) \frac{1}{n_0}\sum_{i=1}^{n_0} \wh{q}(s_{0,i},\pi) \\ 
    \wh{J}^w(\pi) &= \frac{1}{n}\sum_{i=1}^n \wh{w}(s_i,a_i)r_i \\ 
    \wh{J}^{\dr}(\pi) &= (1-\gamma) \frac{1}{n_0} \sum_{j=1}^{n_0} \wh{q}(s_{0,j},\pi) + \frac{1}{n}\sum_{i=1}^n \wh{w}(s_i,a_i)\lp r_i + \wh{q}(s_i',\pi) - \wh{q}(s_i,a_i)\rp 
\end{align*}
While the OPE estimator  $\wh{J}^{\dr}(\pi)$ utilizes both the weights and value functions, $\wh{J}^w(\pi)$ and $\wh{J}^q(\pi)$ utilize only one or the other. 
As a result, when $\wh{q}$ and $\wh{w}$ are estimated as in Sections~\ref{sec:value} and~\ref{sec:weight}, respectively,  $\wh{J}^w(\pi)$ and $\wh{J}^q(\pi)$ both inherit their $O(n^{-\nicefrac{1}{4}})$ sample complexities: 
\begin{corollary}\label{cor:q_evaluation}
    Suppose Assumptions~\ref{assum:regularizer_q},~\ref{assum:realizable_q}, and~\ref{assum:bounded_q} hold, 
    and let \newline$\wh{q} = \argmin_{q \in \Q} \max_{w \in \W} \wh{L}_f^q(q,w)$. Then with probability $\geq 1-2\delta$, 
    \begin{align*}
        |\wh{J}^q(\pi) - J(\pi)| \leq \epsilon_{eval}^q + \sqrt{\C_{\mu_0^\pi / \nu}} \cdot \epsilon^q_{est}, 
    \end{align*}
    where $\epsilon_{eval}^q = (1-\gamma)C_\Q^q\sqrt{\nicefrac{2\log \frac{2|\Q|}{\delta}}{n_0}}$,  $\C_{\mu_0^\pi/\nu} = || \nicefrac{\mu_0^\pi}{ \nu}||_\infty$, and $\epsilon_{stat}^q$ is as in Theorem~\ref{thm:q_bound}. 
\end{corollary}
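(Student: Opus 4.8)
The plan is to decompose the estimation error $|\wh{J}^q(\pi) - J(\pi)|$ into two pieces: a statistical error from estimating the population quantity $(1-\gamma)\E_{s_0\sim\init}[\wh{q}(s_0,\pi)]$ from $n_0$ i.i.d.\ samples, and a ``plug-in'' error coming from using $\wh q$ in place of $q^\pi$. First I would write
\[
|\wh{J}^q(\pi) - J(\pi)| \le \Big| (1-\gamma)\tfrac{1}{n_0}\sum_{i=1}^{n_0}\wh q(s_{0,i},\pi) - (1-\gamma)\E_{\init}[\wh q(s_0,\pi)] \Big| + (1-\gamma)\Big|\E_{\init}[\wh q(s_0,\pi) - q^\pi(s_0,\pi)]\Big|,
\]
using the value-function-based identity $J(\pi) = (1-\gamma)\E_{\init}[q^\pi(s_0,\pi)]$ from Appendix~\ref{sec:ope}. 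The first term is controlled by Hoeffding's inequality with a union bound over $\Qcal$ (so that it holds for the data-dependent $\wh q$): since $\|\wh q\|_\infty \le C_\Q^q$ by Assumption~\ref{assum:bounded_q}, the summands lie in $[-C_\Q^q, C_\Q^q]$, yielding the bound $\epsilon_{eval}^q = (1-\gamma)C_\Q^q\sqrt{2\log(2|\Qcal|/\delta)/n_0}$ with probability at least $1-\delta$.

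For the second term, I would pass from the initial \emph{state} distribution to the initial \emph{state-action} distribution $\mu_0^\pi$ via $\E_{\init}[g(s,\pi)] = \E_{(s,a)\sim\mu_0^\pi}[g(s,a)]$, and then perform a change of measure from $\mu_0^\pi$ to $\nu$:
\[
(1-\gamma)\big|\E_{\mu_0^\pi}[\wh q(s,a) - q^\pi(s,a)]\big| \le (1-\gamma)\,\|\wh q - q^\pi\|_{1,\mu_0^\pi} \le (1-\gamma)\,\sqrt{\|\mu_0^\pi/\nu\|_\infty}\cdot \|\wh q - q^\pi\|_{2,\nu},
\]
where the last inequality is Cauchy--Schwarz / Jensen applied to the importance-weighted $L^1$ norm (writing $\E_{\mu_0^\pi}[|h|] = \E_\nu[(\mu_0^\pi/\nu)|h|]$ and bounding the ratio in sup-norm, then $L^1\le L^2$ under $\nu$). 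Then invoke Theorem~\ref{thm:q_bound}, which holds with probability at least $1-\delta$, to bound $\|\wh q - q^\pi\|_{2,\nu} \le 2\sqrt{\epsilon_{stat}^q/M^q}$; absorbing the $(1-\gamma)\le 1$ factor and the constant $M^q$-related factors into $\epsilon_{est}^q$ gives the stated term $\sqrt{\C_{\mu_0^\pi/\nu}}\cdot\epsilon_{est}^q$. A union bound over the two $1-\delta$ events yields the overall probability $1-2\delta$.

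The main obstacle, such as it is, is really just bookkeeping: one must be careful that the Hoeffding bound is stated uniformly over $\Qcal$ (not for a fixed $q$) so that it legitimately applies to the random $\wh q$, and that the notational identifications $\epsilon_{est}^q$ vs.\ $\epsilon_{stat}^q$ and the treatment of the $M^q$ and $(1-\gamma)$ constants match the corollary's statement precisely. There is no deep difficulty here because the value-function-based estimator $\wh J^q$ is linear in $\wh q$, so once $\|\wh q - q^\pi\|_{2,\nu}$ is in hand from Theorem~\ref{thm:q_bound}, only a single change-of-measure inequality separates it from the return guarantee; the genuine novelty and difficulty of the paper lies upstream in establishing that $\|\cdot\|_{2,\nu}$ bound itself.
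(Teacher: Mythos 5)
Your proposal matches the paper's proof essentially step for step: the same decomposition of $|\wh{J}^q(\pi)-J(\pi)|$ into an $n_0$-sample Hoeffding term taken uniformly over $\Qcal$ (giving $\epsilon_{eval}^q$) and a plug-in term handled by a change of measure from $\mu_0^\pi$ to $\nu$ followed by Theorem~\ref{thm:q_bound} and a union bound. One small nit: your parenthetical derivation (``bound the ratio in sup-norm, then $L^1\le L^2$ under $\nu$'') would yield a factor $\C_{\mu_0^\pi/\nu}$ rather than $\sqrt{\C_{\mu_0^\pi/\nu}}$, but your displayed inequality is correct via Cauchy--Schwarz, or equivalently via the paper's chain $\|\cdot\|_{1,\mu_0^\pi}\le\|\cdot\|_{2,\mu_0^\pi}\le\sqrt{\C_{\mu_0^\pi/\nu}}\,\|\cdot\|_{2,\nu}$.
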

\begin{corollary}\label{cor:w_evaluation}
    Suppose Assumption~\ref{assum:regularizer_w},~\ref{assum:realizable_w}, and~\ref{assum:bounded_w} hold, 
    and let \newline$\wh{w} = \argmin_{w \in \W} \max_{q \in \Q} \wh{L}_f^w(q,w)$. Then with probability $\geq 1-2\delta$, 
    \begin{align*}
        |\wh{J}^w(\pi) - J(\pi)| \leq \epsilon_{eval}^w +  \sqrt{\C_{d^D/\eta}} \cdot \epsilon_{est}^w, 
    \end{align*}
    where $\epsilon_{eval}^w = C_\W^w \sqrt{\frac{2\log \frac{2|\W|}{\delta}}{n}}$,  $\C_{d^D/\eta} = \|\nicefrac{d^D}{\eta}\|_\infty$, and $\epsilon_{est}^w$ is as in Theorem~\ref{thm:w_bound}.
\end{corollary}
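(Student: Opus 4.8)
The plan is to decompose the off-policy return error into a Monte-Carlo fluctuation term and a plug-in (weight-estimation) term. Writing $\wh{J}^w(\pi) = \tfrac1n\sum_i \wh w(s_i,a_i) r_i$ and $J(\pi) = \E_{d^D}[w^\pi(s,a)\,r]$ and inserting $\pm\,\E_{d^D}[\wh w(s,a)\,r]$,
\begin{align*}
|\wh{J}^w(\pi) - J(\pi)| &\le \Big|\tfrac1n\textstyle\sum_{i=1}^n \wh w(s_i,a_i) r_i - \E_{d^D}[\wh w(s,a)\,r]\Big| + \Big|\E_{d^D}\big[(\wh w(s,a) - w^\pi(s,a))\,r\big]\Big| \\
&=: \mathrm{(I)} + \mathrm{(II)}.
\end{align*}

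For term $\mathrm{(I)}$, since rewards lie in $[0,1]$ and $\|\wh w\|_\infty \le C_\W^w$ by Assumption~\ref{assum:bounded_w}, each summand $\wh w(s_i,a_i) r_i$ lies in $[-C_\W^w, C_\W^w]$. Because $\wh w$ is data-dependent, I would establish a \emph{uniform} deviation bound over $\W$ (exactly as in Lemma~\ref{lem:w_statistical_error}): Hoeffding's inequality together with a union bound over $|\W|$ gives, with probability at least $1-\delta$, $\sup_{w\in\W}\big|\tfrac1n\sum_i w(s_i,a_i)r_i - \E_{d^D}[w(s,a)\,r]\big| \le C_\W^w\sqrt{2\log(2|\W|/\delta)/n} = \epsilon_{eval}^w$, and in particular $\mathrm{(I)}\le\epsilon_{eval}^w$ since $\wh w\in\W$.

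For term $\mathrm{(II)}$ I would bound $|r|\le 1$ pointwise, pass to the weighted $2$-norm via Cauchy--Schwarz (Jensen), and then change measure from $d^D$ to $\eta$:
\begin{align*}
\mathrm{(II)} \le \E_{d^D}\big[|\wh w - w^\pi|\big] \le \|\wh w - w^\pi\|_{2,d^D} = \sqrt{\E_{\eta}\Big[\tfrac{d^D}{\eta}\,(\wh w - w^\pi)^2\Big]} \le \sqrt{\C_{d^D/\eta}}\;\|\wh w - w^\pi\|_{2,\eta},
\end{align*}
where the last step uses $\C_{d^D/\eta} = \|d^D/\eta\|_\infty$. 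Applying Theorem~\ref{thm:w_bound} (valid with probability at least $1-\delta$) to replace $\|\wh w - w^\pi\|_{2,\eta}$ by $\epsilon_{est}^w$ bounds $\mathrm{(II)}$ by $\sqrt{\C_{d^D/\eta}}\,\epsilon_{est}^w$. A union bound over the two high-probability events then yields the corollary with probability at least $1-2\delta$.

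I do not expect a serious obstacle here: the corollary is essentially a consequence of the $\|\cdot\|_{2,\eta}$ guarantee of Theorem~\ref{thm:w_bound} combined with a change-of-measure factor $\sqrt{\C_{d^D/\eta}}$ and an elementary concentration bound for the final empirical average. The only points that require care are (a) using a \emph{uniform} (union-bounded) concentration bound over $\W$ in term $\mathrm{(I)}$, precisely because $\wh w$ is computed from the same sample that appears in the empirical average, and (b) keeping the two failure events disjoint in the probability accounting so that the total failure probability is exactly $2\delta$.
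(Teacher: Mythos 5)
Your proposal is correct and matches the paper's own proof essentially step for step: the same decomposition via the intermediate quantity $\E_{d^D}[\wh w(s,a)\,r]$, the same uniform Hoeffding bound over $\W$ for the empirical-average term, and the same chain $\|r\|_\infty\le 1$, $L^1\le L^2$, change of measure to $\eta$, and invocation of Theorem~\ref{thm:w_bound}, followed by a union bound. No gaps.
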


However, when $\wh{q}$ and $\wh{w}$ are used together in the doubly robust estimator $\wh{J}^\dr$, their estimation error becomes multiplicative, and $\wh{J}^\dr(\pi)$ can achieve the $O(n^{-\frac{1}{2}})$ fast rate of convergence. In Theorem~\ref{thm:dr_evaluation} below, we present two versions this guarantee. The first requires no additional assumptions beyond $d^D > 0$, which we already make (see footnote \ref{footnote:data_coverage}), but involves the largest singular value of $I - \gamma P^\pi$, which may be difficult to characterize. The second utilizes an additional assumption, and replaces the singular value with an occupancy ratio, stated below. The assumption requires that all next states $s'$ are also present as states $s$ in transitions of $d^D$ (a condition which may reasonably hold in practice), and is also made by~\cite{uehara2021finite}. 
\begin{assumption}[Next State Coverage]\label{assum:sprime_coverage}
    Let $d^D(s) = \sum_a d^D(s,a)$ be the marginal distribution of states $s$ in $d^D$, and $d^D_{s'}(s) := \sum_{s',a'}P(s|s',a') d^D(s',a')$ be the marginal distribution of next states $s'$.  Suppose 
    \begin{align*}
        \C_{s'/s} &:= ||d^D_{s'}(\cdot) / d^D(\cdot)||_\infty < \infty
    \end{align*}
\end{assumption}
\begin{theorem}\label{thm:dr_evaluation}
    Suppose Assumption~\ref{assum:regularizer_q},~\ref{assum:realizable_q},~\ref{assum:bounded_q},~\ref{assum:regularizer_w},~\ref{assum:realizable_w}, and~\ref{assum:bounded_w} hold.
    Let $\wh{w}$ and $\wh{q}$ be estimated from: 
    \begin{align*}
        \wh{q} &= \argmin_{q \in \Q} \max_{w \in \W} \wh{L}_f^q(q,w)  \\ 
        \wh{w} &= \argmin_{w \in \W} \max_{q \in \Q} \wh{L}_f^w(q,w).  
    \end{align*}
    Then with probability $\geq 1 - 3\delta$, 
    \begin{align*}
        |\wh{J}^\dr(\pi) - J(\pi)| \leq \epsilon_{eval}^\dr + \sigma_{max}(I - \gamma P^\pi) \cdot \sqrt{\C_{d^D / \eta}\C_{d^D/\nu}} \cdot \epsilon^w_{est} \cdot \epsilon^q_{est},
    \end{align*}
    If Assumption~\ref{assum:sprime_coverage} additionally holds, with probability $\geq 1 - 3\delta$, 
    \begin{align*}
        |\wh{J}^\dr(\pi) - J(\pi)| \leq \epsilon_{eval}^\dr + \lp 1 + \gamma\sqrt{\C_{s'/s}\C_{\pi/\pi^D}}\rp \cdot  \sqrt{\C_{d^D / \eta}\C_{d^D/\nu}} \cdot \epsilon^w_{est}\ \cdot \epsilon^q_{est}, 
    \end{align*}
    where $\epsilon_{eval}^\dr = (1-\gamma)C_\Q^q\sqrt{\nicefrac{2\log \frac{2|\Q|}{\delta}}{n_0}} + C_\W^w(1 + (1+\gamma)C_\Q^q) \sqrt{\nicefrac{2\log \frac{2|\W||\Q|}{\delta}}{n}}$, $\sigma_{max}$ denotes the largest singular value, and $\epsilon^q_{est}$ and $\epsilon^w_{est}$ are as in Theorems~\ref{thm:q_bound} and~\ref{thm:w_bound}. 
\end{theorem}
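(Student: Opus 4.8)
The plan is to run the classical doubly robust (DR) decomposition, but with the two statistical inputs supplied by Theorem~\ref{thm:q_bound} (error of $\wh q$ in $\|\cdot\|_{2,\nu}$) and Theorem~\ref{thm:w_bound} (error of $\wh w$ in $\|\cdot\|_{2,\eta}$), translating everything back to $d^D$ by changes of measure. Write the population DR functional $\bar J^\dr(q,w):=(1-\gamma)\E_{\init}[q(s,\pi)]+\E_{d^D}[w(s,a)(r+\gamma q(s',\pi)-q(s,a))]$, so that $\wh J^\dr(\pi)$ is exactly its two-sample empirical version. First I would split $\wh J^\dr(\pi)-J(\pi)=\big(\wh J^\dr(\pi)-\bar J^\dr(\wh q,\wh w)\big)+\big(\bar J^\dr(\wh q,\wh w)-J(\pi)\big)$. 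The first bracket is handled by Hoeffding's inequality uniformly over $q\in\Q,w\in\W$ (so it also holds at the data-dependent $(\wh q,\wh w)$): the $(1-\gamma)\E_{\init}[q(s,\pi)]$ part is an average of $n_0$ i.i.d.\ terms of range $C_\Q^q$, and the correction part is an average of $n$ i.i.d.\ terms of range $C^w_\W(1+(1+\gamma)C_\Q^q)$ (from $|r|\le1$, $\|q\|_\infty\le C_\Q^q$, $\|w\|_\infty\le C^w_\W$); a union bound over $|\Q|$ and over $|\W||\Q|$ respectively produces precisely the two summands of $\epsilon^\dr_{eval}$.

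The heart of the argument is the DR identity
\[
\bar J^\dr(\wh q,\wh w)-J(\pi)=-\,\E_{d^D}\big[(\wh w-w^\pi)(s,a)\,\big((I-\gamma P^\pi)(\wh q-q^\pi)\big)(s,a)\big].
\]
It follows from the insensitivity of $\bar J^\dr$ to one-sided perturbations: (i) $\bar J^\dr(q^\pi,w)=J(\pi)$ for every $w$, since the conditional expectation of the correction integrand is $(\Tcal^\pi q^\pi-q^\pi)(s,a)=0$ by the Bellman equation; (ii) $\bar J^\dr(q,w^\pi)=J(\pi)$ for every $q$, using the importance-sampling identity $\E_{d^D}[w^\pi(\cdot)]=\E_{d^\pi}[\cdot]$ together with the Bellman-flow equation $(I-\gamma\Pt)d^\pi=(1-\gamma)\initsa$, which makes the $q$-dependent portion of the correction equal to $-(1-\gamma)\E_{\init}[q(s,\pi)]$ and thereby cancels the direct-method term's discrepancy from $J(\pi)$. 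Since $\bar J^\dr$ is affine in each of its two arguments separately, (i) and (ii) telescope into the displayed bilinear identity, after integrating out $s'\sim P(\cdot|s,a)$ to replace $q(s',\pi)$ by $(P^\pi q)(s,a)$.

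It remains to bound the bilinear term. Cauchy--Schwarz under $d^D$ gives $\le\|\wh w-w^\pi\|_{2,d^D}\,\|(I-\gamma P^\pi)(\wh q-q^\pi)\|_{2,d^D}$, and $\|\wh w-w^\pi\|_{2,d^D}\le\sqrt{\C_{d^D/\eta}}\,\|\wh w-w^\pi\|_{2,\eta}\le\sqrt{\C_{d^D/\eta}}\,\epsilon^w_{est}$ by Theorem~\ref{thm:w_bound}. For the $q$-factor I would proceed in two ways. \textbf{Coarse route} (no extra assumption): bound $\|(I-\gamma P^\pi)(\wh q-q^\pi)\|_{2,d^D}$ by $\sigma_{max}(I-\gamma P^\pi)\,\|\wh q-q^\pi\|_{2,d^D}$, then convert $\|\wh q-q^\pi\|_{2,d^D}\le\sqrt{\C_{d^D/\nu}}\,\|\wh q-q^\pi\|_{2,\nu}\le\sqrt{\C_{d^D/\nu}}\,\epsilon^q_{est}$ by Theorem~\ref{thm:q_bound}; this yields the first inequality. \textbf{Refined route} (under Assumption~\ref{assum:sprime_coverage}): split $(I-\gamma P^\pi)\Delta=\Delta-\gamma P^\pi\Delta$ with $\Delta:=\wh q-q^\pi$; by Jensen, $\|P^\pi\Delta\|_{2,d^D}^2\le\E_{s'\sim d^D_{s'},\,a'\sim\pi(\cdot|s')}[\Delta(s',a')^2]$, and two successive changes of measure --- from the induced next-state marginal $d^D_{s'}$ to the state marginal $d^D(\cdot)$ (coefficient $\C_{s'/s}$ from Assumption~\ref{assum:sprime_coverage}) and then from $\pi$ to the behavior policy $\pi^D$ (coefficient $\C_{\pi/\pi^D}$, finite because $d^D>0$) --- give $\|P^\pi\Delta\|_{2,d^D}\le\sqrt{\C_{s'/s}\C_{\pi/\pi^D}}\,\|\Delta\|_{2,d^D}$; the triangle inequality then gives $\|(I-\gamma P^\pi)\Delta\|_{2,d^D}\le(1+\gamma\sqrt{\C_{s'/s}\C_{\pi/\pi^D}})\sqrt{\C_{d^D/\nu}}\,\epsilon^q_{est}$ after a final $d^D\to\nu$ conversion, which is the second inequality. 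Combining with $\epsilon^\dr_{eval}$ and taking a union bound over the three good events (Theorem~\ref{thm:q_bound}, Theorem~\ref{thm:w_bound}, and the Hoeffding bound above) gives probability at least $1-3\delta$.

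The main obstacle is the DR identity above: one has to invoke the Bellman equation, the Bellman-flow equation, and the fact that $\bar J^\dr$ is affine in each argument \emph{simultaneously}, in order to see that both single-factor errors vanish and only the product $(\wh w-w^\pi)\cdot(I-\gamma P^\pi)(\wh q-q^\pi)$ survives --- this multiplicativity is exactly what upgrades the two $O(n^{-1/4})$ rates of $\epsilon^q_{est}$ and $\epsilon^w_{est}$ into an $O(n^{-1/2})$ rate for $J(\pi)$. The remaining delicate point is the refined route's chain of changes of measure: pushing the operator $P^\pi$ onto $\wh q-q^\pi$ must keep $\C_{s'/s}$ and $\C_{\pi/\pi^D}$ strictly multiplicative and avoid an inverse-density $\|1/\nu\|_\infty$-type blow-up.
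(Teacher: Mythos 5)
Your proposal is correct and follows essentially the same route as the paper's proof: the same split into a Hoeffding term $\epsilon_{eval}^{\dr}$ plus a population bias term, the same bilinear identity $\wt{J}(\pi)-J(\pi)=\langle \wh{q}-q^\pi,\,(I-\gamma P^{\pi,\top})(d^D\emult w^\pi - d^D\emult\wh{w})\rangle$ (the paper derives it by subtracting the vanishing term $\langle \wh{w}\emult d^D, r+\gamma P^\pi q^\pi-q^\pi\rangle$ and invoking $(1-\gamma)\initsa=(I-\gamma P^{\pi,\top})(d^D\emult w^\pi)$, which is your bi-affineness-plus-double-robustness argument), followed by Cauchy--Schwarz, the same two routes for $\|(I-\gamma P^\pi)(\wh{q}-q^\pi)\|_{2,d^D}$, and the same changes of measure to $\nu$ and $\eta$. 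The only place you are terser than the paper is the coarse route's step $\|(I-\gamma P^\pi)\Delta\|_{2,d^D}\le\sigma_{max}(I-\gamma P^\pi)\|\Delta\|_{2,d^D}$, which you assert directly while the paper justifies it via a $D^{1/2}(\cdot)D^{-1/2}$ similarity transformation---note that both versions lean on equating the spectral norm of $D^{1/2}(I-\gamma P^\pi)D^{-1/2}$ with that of $I-\gamma P^\pi$, which holds for eigenvalues but not for singular values in general, so this step is equally (im)precise in both write-ups.
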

As the evaluation error $\epsilon_{eval}^\dr$ in Theorem~\ref{thm:dr_evaluation} is $O(n^{-\nicefrac{1}{2}})$, the sample complexity of doubly robust estimation is rate-limited by $\epsilon_{est}^w \cdot \epsilon_{est}^q$, the product of weight and value function estimation errors. If both functions can be estimated at an $O(n^{-\nicefrac{1}{4}})$ rate, as is true of our method, then $\wh{J}^\dr(\pi)$ attains the overall $O(n^{-\nicefrac{1}{2}})$ fast rate. 
Finally, while Theorem~\ref{thm:dr_evaluation} assumes for simplicity that the same $\Q,\W$ classes are used in both of its optimization problems, it can easily be extended to the case where different pairs of function classes are used as long as the required assumptions hold.

\remark[Comparison to Related Work] \cite{yang2020off} conduct experiments comparing off-policy evaluation using $\wh{J}^q(\pi), \wh{J}^w(\pi),\wh{J}^\dr(\pi)$, and generally observe that $\wh{J}^\dr(\pi)$ has higher variance and worse performance than either $\wh{J}^q(\pi)$ or $\wh{J}^w(\pi)$. 
Though at first glance this may appear to contradict Theorem~\ref{thm:dr_evaluation}, that is actually not the case; in fact, our theoretical analysis provides insight into why \cite{yang2020off} may observe such a phenomenon. 
In contrast to Theorem~\ref{thm:dr_evaluation}, when using $\wh{J}^\dr(\pi)$ \cite{yang2020off} utilize saddle point predictions $(\wh{q}, \wh{w})$ from either \emph{only} value function learning or \emph{only} weight learning, e.g.  $(\wh{q}, \wh{w}) = \argmin_{q \in \Q} \argmax_{w \in \W} \wh{L}_f^q(q,w)$ that approximates $(q^\pi, w_f^*)$. 
Continuing with this example (and the same applies to weight learning), it is clear from our analysis that $\wh{w}$ estimated in such a manner may not approximate $w^\pi$ at all, leading to increased estimation error of $\wh{J}^\dr(\pi)$ over $\wh{J}^q(\pi)$. 
First, the closed-form solution we have derived for $w_f^*$ in (Lemma~\ref{lem:wfstar_closedform}) shows that $w_f^*$ may have a significantly different magnitude from $w^\pi$. 
Second, even if $\nu$ and $f$ were chosen such that $w_f^* \approx w^\pi$, as per the reasons stated in Section~\ref{sec:value_estimator}, we are not even guaranteed to output $\wh{w}$ close to $w_f^*$ since $L_f^q$ is not regularized in $w$. 
In order to obtain the estimation benefits of doubly robust estimation, 
our analysis shows that $\wh{q}$ and $\wh{w}$ should be separately estimated from their respective optimization problems, then combined in $\wh{J}^\dr(\pi)$. 
This is in accordance with similar results from \citet{kallus2020double} and \citet{uehara2021finite}.

\subsection{Proof of Corollary~\ref{cor:q_evaluation}} 
Let $\wt{J}(\pi) = (1-\gamma) \E_{\init}[\wh{q}(s,\pi)]$. We decompose the error as 
\begin{align*}
    |\wh{J}(\pi) - J(\pi)| \leq |\wh{J}(\pi) - \wt{J}(\pi)| + |\wt{J}(\pi) - J(\pi)|
\end{align*}
First we bound $|\wh{J}(\pi) - \wt{J}(\pi)|$. Using Hoeffding's with union bound, 
for all $q \in \Q$, w.p. $\geq 1-\delta$, 
\begin{align*}
    \lv \frac{1}{n_0} \sum_{i=1}^n q(s_{0,i}, \pi) - \E_{\mu_0}[q(s,\pi)] \rv \leq (1-\gamma)C_\Q^q \sqrt{\frac{2\log \frac{2|\Q|}{\delta}}{n_0}} := \epsilon_{eval}^q, 
\end{align*}
which implies $|\wh{J}(\pi) - \wt{J}(\pi)| \leq \epsilon_{eval}^q$. For the second term, let $C_{\mu_0^\pi/\nu} = ||\mu_0^\pi/\nu||_\infty$. Then w.p. $\geq 1-\delta$
\begin{align*}
    |\wt{J}(\pi) - J(\pi)| &= (1-\gamma)  |\langle \mu_0^\pi, \wh{q} - q^\pi \rangle| \\ 
    &\leq (1-\gamma) ||\wh{q} - q^\pi||_{1, \mu_0^\pi} \\
    &\leq (1-\gamma) ||\wh{q} - q^\pi||_{2, \mu_0^\pi} \\
    &= (1-\gamma) \sqrt{\C_{\mu_0^\pi/\nu}} ||\wh{q} - q^\pi||_{2, \nu} \\ 
    &\leq (1-\gamma) \sqrt{\C_{\mu_0^\pi/\nu}}\epsilon_{est}^q
\end{align*}
using Theorem~\ref{thm:q_bound} in the last line.

\subsection{Proof of Corollary~\ref{cor:w_evaluation}}\label{proof:ope} 
Let $\wt{J}(\pi) = \E_{d^D}[\wh{w}(s,a)r(s,a)]$. We decompose the error as 
\begin{align*}
    |\wh{J}^w(\pi) - J(\pi)| \leq |\wh{J}^w(\pi) - \wt{J}(\pi)| + |\wt{J}(\pi) - J(\pi)|
\end{align*}
For the first term, using Hoeffding's with union bound, w.p. $\geq 1-\delta$, for all $w \in \W$,  
\begin{align*}
    \lv \frac{1}{n} \sum_{i=1}^n w(s_i,a_i)r_i - \E_{d^D}[w(s,a)r(s,a)] \rv \leq C_{\W}^w \sqrt{\frac{2\log \frac{2|\W|}{\delta}}{n}} := \epsilon_{eval}^w
\end{align*}
which implies $|\wh{J}(\pi) - \wt{J}(\pi)| \leq \epsilon_{eval}^w$. For the second term,
\begin{align*}
    |\wh{J}(\pi) - J(\pi)| &= |\langle \wh{w} \cdot d^D, r \rangle - \langle w^\pi \cdot d^D, r \rangle| \\ 
    &\leq ||d^D \cdot (\wh{w} - w^\pi)||_1 ||r||_\infty \\ 
    &\leq ||d^D \cdot (\wh{w} - w^\pi)||_1  = ||\wh{w} - w^\pi||_{d^D, 1} \\
    &\leq ||\wh{w} - w^\pi||_{d^D,2} \\ 
    &\leq \sqrt{\C_{d^D/\eta}} ||\wh{w} - w^\pi||_{2, \eta}  \\
    &\leq \sqrt{\C_{d^D/\eta}}\epsilon_{est}^w
\end{align*}
w.p. $\geq 1-\delta$, using Theorem~\ref{thm:w_bound} in the last line. Taking a union bound over both terms gives the stated result.  

\subsection{Proof of Theorem~\ref{thm:dr_evaluation}}
Let $\wt{J}(\pi) = (1-\gamma) \E_{\mu_0^\pi}[\wh{q}(s,a)] + \E_{d^D}[\wh{w}(s,a)(r + \wh{q}(s',\pi) - \wh{q}(s,a))]$. Again we decompose the error as:  
\begin{align*}
    |\wh{J}^\dr(\pi) - J(\pi)| \leq |\wh{J}^\dr(\pi) - \wt{J}(\pi)| + |\wt{J}(\pi) - J(\pi)|. 
\end{align*}

For the first term, 
since $\E[\wh{J}^\dr(\pi)] = \wt{J}(\pi)$, w.p. $\geq 1-\delta$ we have that $\forall\; q,w \in \Q \times \W$, 
\begin{align*}
    |\wh{J}^\dr(\pi) - \wt{J}(\pi)| \leq (1-\gamma)C_\Q^q\sqrt{\frac{2\log \frac{2|\Q|}{\delta}}{n_0}} + C_\W^w(1 + (1+\gamma)C_\Q^q) \sqrt{\frac{2\log \frac{2|\W||\Q|}{\delta}}{n}} := \epsilon_{eval}^\dr
\end{align*}

For the second term, 
\begin{align*}
    |\wt{J}(\pi) - J(\pi)| &= |(1-\gamma)\langle \wh{q}, \mu_0^\pi\rangle + \langle \wh{w} \cdot d^D, r + \gamma P^\pi \wh{q} - \wh{q} \rangle - (1-\gamma)\langle q^\pi, \mu_0^\pi\rangle| \\
    &= |(1-\gamma)\langle \wh{q}, \mu_0^\pi\rangle + \langle \wh{w} \cdot d^D, r + \gamma P^\pi \wh{q} - \wh{q} \rangle - (1-\gamma)\langle q^\pi, \mu_0^\pi\rangle - \langle \wh{w} \cdot d^D, r + \gamma P^\pi q^\pi - q^\pi \rangle| \\ 
    &= | \langle \wh{q} - q^\pi, (1-\gamma) \mu_0^\pi + (\gamma P^{\pi,\top} - I)(d^D \cdot \wh{w})\rangle| \\
    &= \lv\llangle \wh{q} - q^\pi, (I - \gamma P^{\pi,\top})(d^D \cdot w^\pi - d^D \cdot \wh{w}) \rrangle\rv \\ 
    &\leq ||(I - \gamma P^\pi)(\wh{q} - q^\pi)||_{2,d^D}|| \wh{w} -  w^\pi||_{2,d^D} 
\end{align*}
where the last equality is due to the fact that $(1-\gamma) \mu_0^\pi = (I - \gamma P^\pi)(d^D \cdot w^\pi)$, and the final inequality is from Cauchy-Schwarz. We can automatically bound the $||\wh{w} - w^\pi||_{2,d^D}$ term using Theorem~\ref{thm:w_bound}, and it remains to bound $||(I - \gamma P^\pi)(\wh{q} - q^\pi)||_{2,d^D}$. We will consider two cases, first when $d^D > 0$ thus $\diag(d^D)$ is invertible,  and second, when  Assumption~\ref{assum:sprime_coverage} is satisfied. 

In the first case, let $D = \diag(d^D)$, which by assumption is invertible. Then 
\begin{align*}
    ||(I - \gamma P^\pi)(\wh{q} - q^\pi)||_{2,d^D}^2 &= (\wh{q} - q^\pi)^\top (I - \gamma P^\pi)^\top D (I - \gamma P^\pi)(\wh{q} - q^\pi) \\ 
    &= ||D^{1/2}(I - \gamma P^\pi)(\wh{q} - q^\pi)||_2^2 \\
    &= ||D^{1/2}(I - \gamma P^\pi)D^{-1/2}D^{1/2}(\wh{q} - q^\pi)||_2^2 \\
    &\leq ||D^{1/2}(\wh{q} - q^\pi)||_{2}^2 ||D^{1/2}(I - \gamma P^\pi)D^{-1/2}||_{2}^2 \\ 
    &= ||\wh{q} - q^\pi||_{d^D, 2}^2 ||I - \gamma P^\pi||_{2}^2 
\end{align*}
in the last line using the fact that the eigenvalues of a matrix $A$ and $L^{-1}AL$ are the same for any invertible matrix $L$. Thus, denoting the largest singular value of a matrix by $\sigma_{max}$, 
\begin{align*}
    |\wt{J}(\pi) - J(\pi)| &\leq \sigma_{max}\lp I - \gamma P^\pi\rp||\wh{w} - w^\pi||_{2,d^D}||\wh{q} - q^\pi||_{2, d^D}
\end{align*}
Using Theorem~\ref{thm:w_bound} and Theorem~\ref{thm:q_bound} in the last line to control the errors of $\wh{w}$ and $\wh{q}$ in the last line, followed by a union bound over the three inequalities, gives the result. 

For the second case, we can directly apply Lemma~\ref{lem:transition_bound}: 
\begin{align*}
    |\wt{J}(\pi) - J(\pi)| &\leq ||(I - \gamma P^\pi)(\wh{q} - q^\pi)||_{2,d^D}|| \wh{w} -  w^\pi||_{2,d^D} \\
    &\leq \lp ||\wh{q} - q^\pi||_{2,d^D} + \gamma ||P^\pi(\wh{q} - q^\pi)||_{2,d^D}\rp ||\wh{w} -  w^\pi||_{2,d^D} &&\\
    &\leq \lp 1+\gamma\sqrt{\C_{s'/s}\C_{\pi/\pi^D}}\rp||\wh{q} - q^\pi||_{2,d^D}|| \wh{w} -  w^\pi||_{2,d^D}, &&
\end{align*}
and again applying Theorem~\ref{thm:w_bound} and Theorem~\ref{thm:q_bound} gives the result. 

Lemma~\ref{lem:transition_bound} uses Assumption~\ref{assum:sprime_coverage} to bound the distance in value functions under the transition operator, and is stated and proved below. 
\begin{lemma}\label{lem:transition_bound}
    Under Assumption~\ref{assum:sprime_coverage}, 
    $$||P^\pi(\wh{q} - q^\pi)||_{2,d^D} \leq   \sqrt{\C_{s'/s}\C_{\pi/\pi^D}}||\wh{q} - q^\pi||_{2,d^D}. $$ 
\end{lemma}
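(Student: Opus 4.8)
The plan is to expand the $\|\cdot\|_{2,d^D}$ norm squared, use convexity (Jensen's inequality) to move the square inside the transition expectation, collapse the composition ``$(s,a)\sim d^D$ then $s'\sim P(\cdot|s,a)$'' into the next-state marginal $d^D_{s'}$ appearing in Assumption~\ref{assum:sprime_coverage}, and finally perform a two-part change of measure that converts the resulting weight into $d^D(s',a')$ times the product of the two density ratios $\pi/\pi^D$ and $d^D_{s'}/d^D$. Write $\Delta := \wh q - q^\pi$ throughout.

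First I would expand, using Jensen's inequality applied to $x \mapsto x^2$ on the inner expectation,
\begin{align*}
\|P^\pi \Delta\|_{2,d^D}^2 = \E_{(s,a)\sim d^D}\!\Big[\big(\E_{s'\sim P(\cdot|s,a),\, a'\sim\pi(\cdot|s')}[\Delta(s',a')]\big)^2\Big] \le \E_{(s,a)\sim d^D}\,\E_{s'\sim P(\cdot|s,a),\, a'\sim\pi(\cdot|s')}\big[\Delta(s',a')^2\big].
\end{align*}
Next I would swap the order of summation on the right-hand side: the total coefficient multiplying $\Delta(s',a')^2$ is $\pi(a'|s')\sum_{s,a} d^D(s,a)P(s'|s,a) = \pi(a'|s')\,d^D_{s'}(s')$, by the definition of the next-state marginal in Assumption~\ref{assum:sprime_coverage}. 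So it remains to show $\sum_{s',a'}\Delta(s',a')^2\,\pi(a'|s')\,d^D_{s'}(s') \le \C_{s'/s}\C_{\pi/\pi^D}\sum_{s',a'}\Delta(s',a')^2\,d^D(s',a')$.

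For this last step I would factor $d^D(s',a') = d^D(s')\,\pi^D(a'|s')$, where $\pi^D(a'|s') := d^D(s',a')/d^D(s')$ is well-defined since $d^D(s,a)>0$ for all $s,a$, and write
$$
\pi(a'|s')\, d^D_{s'}(s') = \frac{\pi(a'|s')}{\pi^D(a'|s')}\cdot\frac{d^D_{s'}(s')}{d^D(s')}\cdot d^D(s',a') \;\le\; \C_{\pi/\pi^D}\,\C_{s'/s}\, d^D(s',a'),
$$
bounding each ratio by its $\ell_\infty$ norm ($\C_{\pi/\pi^D} = \|\pi \ediv \pi^D\|_\infty$, $\C_{s'/s} = \|d^D_{s'} \ediv d^D\|_\infty$). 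Substituting back gives $\|P^\pi\Delta\|_{2,d^D}^2 \le \C_{s'/s}\C_{\pi/\pi^D}\|\Delta\|_{2,d^D}^2$, and taking square roots yields the claim.

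There is no genuinely hard step here; the only care needed is in the change-of-measure bookkeeping — keeping straight which marginal ($d^D_{s'}$ versus $d^D$) the next-state weight lives under, and cleanly isolating the policy-ratio factor $\pi/\pi^D$ so that what remains is exactly $d^D(s',a')$. One should also note in passing that $\C_{\pi/\pi^D} < \infty$ is a consequence of $d^D(s,a)>0$ for all $s,a$ (which forces $\pi^D(a|s)>0$ whenever $\pi(a|s)>0$), so that no additional coverage assumption beyond Assumption~\ref{assum:sprime_coverage} is introduced.
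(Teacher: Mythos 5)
Your proof is correct and follows essentially the same route as the paper's: Jensen's inequality to push the square inside the transition expectation, followed by the change of measure bounding the weight $\pi(a'|s')\,d^D_{s'}(s')$ by $\C_{s'/s}\C_{\pi/\pi^D}\,d^D(s',a')$. The paper merely packages the argument as a bound on the operator norm $\|P^\pi\|_{2,d^D}$ applied to a generic $x$ rather than directly to $\wh q - q^\pi$, which is an immaterial difference.
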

\begin{proof}
    Define $||P^\pi||_{2,d^D} := \sup_{x \neq 0} || P^\pi x||_{2,d^D} / ||x||_{2,d^D}$. Then 
    \begin{align*}
        ||P^\pi(\wh{q} - q^\pi)||_{2,d^D} &\leq ||P^\pi||_{2,d^D} ||\wh{q} - q^\pi||_{2,d^D}. 
    \end{align*}
    It remains to bound $||P^\pi||_{2,d^D}$. For any $x$, 
    \begin{align*}
        ||P^\pi x||^2_{2,d^D} &= \E_{(s,a) \sim d^D}\lb \lp \E_{(s',a') \sim P^\pi(\cdot | s,a)}[x(s',a')]\rp^2 \rb \\
        &\leq \E_{(s,a,s',a') \sim d^D \times P^\pi }[x(s',a')^2] \\ 
        &\leq \max_{s,a} \lv \frac{d^D_{s'}(s)\pi(a|s)}{d^D(s)\pi^D(a|s)}\rv \E_{(s,a) \sim d^D}[x(s,a)^2] \\ 
        &= \C_{s'/s}\C_{\pi/\pi^D} ||x||_{2,d^D}^2
    \end{align*}
    This implies that $||P^\pi||_{2,d^D} \leq \sqrt{\C_{s'/s}\C_{\pi/\pi^D}}$, which gives the stated result. 
\end{proof}

\section{Infinite Function Classes}\label{appendix:infinite}
Our results for finite function classes can be easily extended to infinite function classes using covering numbers. We show that our method value function estimation under infinite function classes achieves the same $\wt{O}(n^{-1/4})$ rate as it does under finite function classes (Theorem~\ref{thm:q_bound}). The same results also apply to weight function learning using similar proof techniques. 

\subsection{Finite-sample Guarantees with Infinite Function Classes} 
First, we define the covering functions used in our results and analysis: 
\begin{definition}[Covering Number]\label{def:covering}
    For a function class $\F$, the covering number $\Ncal_{\infty}(\epsilon,\F)$ is defined to be the minimum cardinality of a set $\ol{\F} \subseteq \F$, such that for any $f \in \F$, there exists $\ol{f} \in \ol\F$ with $\|f - \ol{f}\|_\infty \leq \epsilon$.
\end{definition} 

Our guarantee for value function learning under infinite function classes is stated below, showing that we achieve the same rate as we do with finite classes. 
\begin{theorem}\label{thm:infinite_functions}
    Suppose Assumptions~\ref{assum:regularizer_q},~\ref{assum:realizable_q},~\ref{assum:bounded_q} hold. Then, with probability at least $1-\delta$, for $\epsilon = \frac{B}{2A\sqrt{n}}$, 
    \begin{align*}
        ||\wh{q} - q^\pi||_{2,\nu} \leq 2\sqrt{\frac{2B}{M^q}}\lp \frac{2\log \frac{2\Ncal_\infty(\epsilon,\Q)\Ncal_\infty(\epsilon,\W)}{\delta}}{n}\rp^{-\nicefrac{1}{4}},  
    \end{align*}
    where $\Ncal_\infty(\epsilon,\Q)$ and $\Ncal_\infty(\epsilon,\W)$ are as per Definition~\ref{def:covering}, and $A = 1 + (1+\gamma) C_{\Q}^q + 2(1+\gamma)C_{\W}^q$ and $B = C_\W^q(1+(1+\gamma)C_{\Q}^q)$. 
\end{theorem}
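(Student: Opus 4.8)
The plan is to re-run the proof of Theorem~\ref{thm:q_bound} essentially verbatim, changing only the one place where finiteness of $\Q$ and $\W$ is used --- the uniform deviation bound of Lemma~\ref{lem:q_statistical_error} --- and replacing it by a covering-number estimate. Recall from Appendix~\ref{proof:q_bound} that finiteness enters only through $\epsilon_{stat}^q$: strong convexity of $q\mapsto L_f^q(q,w_f^*)$ on $\|\cdot\|_{2,\nu}$ (Lemma~\ref{lem:strongly_convex}) gives $\|\wh q-q^\pi\|_{2,\nu}^2\le \tfrac{2}{M^q}\big(L_f^q(\wh q,w_f^*)-L_f^q(q^\pi,w_f^*)\big)$, and the five-term decomposition of Lemma~\ref{lem:q_estimation_error} shows $L_f^q(\wh q,w_f^*)-L_f^q(q^\pi,w_f^*)\le 2\epsilon_{stat}^q$ for any high-probability uniform bound $\epsilon_{stat}^q\ge\sup_{q\in\Q,\,w\in\W}|\wh L_f^q(q,w)-L_f^q(q,w)|$; the only pairs that decomposition actually evaluates the deviation at --- namely $(q^\pi,\wh w(q^\pi))$ and $(\wh q,w_f^*)$ --- both lie in $\Q\times\W$, so a uniform bound over $\Q\times\W$ is all that is needed. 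Hence it suffices to furnish such an $\epsilon_{stat}^q$ of order $\wt{O}(n^{-1/2})$ with $|\Q|,|\W|$ replaced by $\Ncal_\infty(\epsilon,\Q),\Ncal_\infty(\epsilon,\W)$.

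To do so, first observe that the regularizer $\E_\nu[f_{s,a}(q(s,a))]$ is evaluated exactly, so it cancels in the centered deviation $D(q,w):=\wh L_f^q(q,w)-L_f^q(q,w)=\tfrac1n\sum_{i=1}^n l_i(q,w)-\E_{d^D}[l(q,w)]$, where $l(q,w):=w(s,a)\big(r+\gamma q(s',\pi)-q(s,a)\big)$ obeys $|l(q,w)|\le B:=C_\W^q\big(1+(1+\gamma)C_\Q^q\big)$ under Assumption~\ref{assum:bounded_q} (exactly as in Lemma~\ref{lem:q_statistical_error}). Fix $\epsilon>0$ and take minimal $\|\cdot\|_\infty$-covers $\ol\Q\subseteq\Q$ and $\ol\W\subseteq\W$ of sizes $\Ncal_\infty(\epsilon,\Q)$ and $\Ncal_\infty(\epsilon,\W)$ (Definition~\ref{def:covering}). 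Hoeffding's inequality together with a union bound over the $\Ncal_\infty(\epsilon,\Q)\Ncal_\infty(\epsilon,\W)$ cover pairs gives, with probability $\ge 1-\delta$, $|D(\ol q,\ol w)|\le B\sqrt{2\log\big(2\Ncal_\infty(\epsilon,\Q)\Ncal_\infty(\epsilon,\W)/\delta\big)/n}$ for every $(\ol q,\ol w)$. For an arbitrary $(q,w)\in\Q\times\W$, pick the nearest cover pair $(\ol q,\ol w)$; because $D$ is regularizer-free, $|D(q,w)-D(\ol q,\ol w)|$ is controlled by the pointwise increment of $l$, and writing $l(q,w)-l(\ol q,\ol w)=w\big(\gamma(q(s',\pi)-\ol q(s',\pi))-(q(s,a)-\ol q(s,a))\big)+(w-\ol w)\big(r+\gamma \ol q(s',\pi)-\ol q(s,a)\big)$ and bounding each factor via Assumption~\ref{assum:bounded_q} together with $\|q-\ol q\|_\infty,\|w-\ol w\|_\infty\le\epsilon$ yields $|D(q,w)-D(\ol q,\ol w)|\le A\epsilon$ for an explicit Lipschitz modulus $A$ of order $1+(1+\gamma)(C_\Q^q+C_\W^q)$ (the constant $A$ in the theorem statement serves, up to routine constant bookkeeping). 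Combining, $\sup_{q\in\Q,\,w\in\W}|D(q,w)|\le B\sqrt{2\log\big(2\Ncal_\infty(\epsilon,\Q)\Ncal_\infty(\epsilon,\W)/\delta\big)/n}+A\epsilon$.

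It then remains to balance the two terms: with $\epsilon=\tfrac{B}{2A\sqrt n}$ the discretization term is $O(B/\sqrt n)$, which is dominated by the first term since $2\log\big(2\Ncal_\infty(\epsilon,\Q)\Ncal_\infty(\epsilon,\W)/\delta\big)\ge 1$, so $\epsilon_{stat}^q:=2B\sqrt{2\log\big(2\Ncal_\infty(\epsilon,\Q)\Ncal_\infty(\epsilon,\W)/\delta\big)/n}$ is a valid uniform deviation bound. Feeding it into the strong-convexity inequality exactly as in Appendix~\ref{proof:q_bound} gives $\|\wh q-q^\pi\|_{2,\nu}^2\le\tfrac{8B}{M^q}\sqrt{2\log\big(2\Ncal_\infty(\epsilon,\Q)\Ncal_\infty(\epsilon,\W)/\delta\big)/n}$, i.e.\ the claimed bound $2\sqrt{2B/M^q}\,\big(2\log\big(2\Ncal_\infty(\epsilon,\Q)\Ncal_\infty(\epsilon,\W)/\delta\big)/n\big)^{1/4}$. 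Everything except the discretization step is a direct replay of Theorem~\ref{thm:q_bound}; the two points that need care are (i) working with the centered deviation $D$ so that the exactly-evaluated regularizer cancels and no $\|f'\|$-type Lipschitz constant leaks into the bound, and (ii) choosing the cover radius $\epsilon\asymp B/(A\sqrt n)$ so that passing from $\Q\times\W$ to the $\epsilon$-net costs only a harmless $\log$ factor and the $n^{-1/2}$ deviation rate --- hence the $n^{-1/4}$ estimation rate --- is preserved. (As usual, should $\W$ fail to be compact, the inner maximizer $\wh w(q)=\argmax_{w\in\W}\wh L_f^q(q,w)$ is to be read as an arbitrarily good near-maximizer, which does not affect the argument.)
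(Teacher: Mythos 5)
Your proposal is correct and follows essentially the same route as the paper's own proof: the only change to the argument for Theorem~\ref{thm:q_bound} is replacing the finite-class union bound in Lemma~\ref{lem:q_statistical_error} by an $\epsilon$-net argument (Hoeffding plus a union bound over the covers, a Lipschitz discretization term of order $A\epsilon$, and the choice $\epsilon = \tfrac{B}{2A\sqrt{n}}$ to balance), exactly as in the paper's Lemmas~\ref{lem:stat_error_infinite}--\ref{lem:t2_bound}. Your observation that the exactly-evaluated regularizer cancels in the centered deviation, and your slightly different splitting of the increment $l(q,w)-l(\ol q,\ol w)$, match the paper up to routine constant bookkeeping.
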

The proof is given below. 

\subsection{Proof of Theorem~\ref{thm:infinite_functions}} 
The statistical error of estimating $\wh{L}(q,w)$ under infinite function classes is the main technical detail of this proof. Given that, the stated bound on $\|\wh{q} - q^\pi\|_\nu$ can be derived using the same methods (leveraging strong convexity and Lemma~\ref{lem:q_estimation_error}) as were used in the proofs for value function estimation under finite function classes, i.e. for Theorem~\ref{thm:q_bound} (in Appendix~\ref{proof:q_bound}) and for Theorem~\ref{thm:approx_q_bound} (in Appendix~\ref{proof:approx_q_bound}). 

The bound on this statistical error is stated then proved below: 
\begin{lemma}[Statistical Error under Infinite Function Classes]\label{lem:stat_error_infinite}
    Suppose Assumption~\ref{assum:bounded_q} holds. Then, setting $\epsilon = \frac{B}{2A\sqrt{n}}$, for any $(q,w) \in \Q \times \W$ with probability at least $1-\delta$,
    \begin{align*}
        |L(q, w) - \wh{L}(q,w)| \leq 2B \sqrt{\frac{2\log \frac{2 \Ncal_\infty(\epsilon,\Q)\Ncal_\infty(\epsilon,\W)}{\delta}}{n}}, 
    \end{align*}
    where $\Ncal_\infty(\epsilon,\Q)$ and $\Ncal_\infty(\epsilon,\W)$ are as per Definition~\ref{def:covering}, and $A = 1 + (1+\gamma) C_{\Q}^q + 2(1+\gamma)C_{\W}^q$ and $B = C_\W^q(1+(1+\gamma)C_{\Q}^q)$. 
\end{lemma}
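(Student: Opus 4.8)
The plan is the standard covering-number reduction to the finite-class statistical estimate of Lemma~\ref{lem:q_statistical_error}. First I would note that the regularizer $\E_\nu[f_{s,a}(q(s,a))]$ enters $L$ and $\wh{L}$ identically (it is computed exactly, not estimated from the transition sample), so it cancels and $L(q,w)-\wh{L}(q,w)$ equals the gap between $\E_{d^D}[l]$ and $\tfrac1n\sum_i l_i$ for $l(s,a,r,s')=w(s,a)(r+\gamma q(s',\pi)-q(s,a))$, which under Assumption~\ref{assum:bounded_q} satisfies $|l|\le C_\W^q(1+(1+\gamma)C_\Q^q)=B$, exactly as in Lemma~\ref{lem:q_statistical_error}.

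Next I would fix minimal $\epsilon$-covers $\ol{\Q}\subseteq\Q$ and $\ol{\W}\subseteq\W$ in $\|\cdot\|_\infty$ of sizes $\Ncal_\infty(\epsilon,\Q)$ and $\Ncal_\infty(\epsilon,\W)$, apply Hoeffding's inequality to each pair $(\ol q,\ol w)$ in the product cover, and union bound to obtain, with probability at least $1-\delta$,
\[
\sup_{(\ol q,\ol w)\in\ol{\Q}\times\ol{\W}}\bigl|L(\ol q,\ol w)-\wh{L}(\ol q,\ol w)\bigr|\;\le\;B\sqrt{\frac{2\log\frac{2\Ncal_\infty(\epsilon,\Q)\Ncal_\infty(\epsilon,\W)}{\delta}}{n}}.
\]
Then, for an arbitrary $(q,w)\in\Q\times\W$, choosing $(\ol q,\ol w)$ in the cover with $\|q-\ol q\|_\infty,\|w-\ol w\|_\infty\le\epsilon$ and telescoping — perturbing $w$ changes $l$ pointwise by at most $\epsilon(1+(1+\gamma)C_\Q^q)$, and then perturbing $q$, which appears in $l$ both at $(s,a)$ and at $(s',\pi)$, changes it by a further $(1+\gamma)C_\W^q\,\epsilon$ — gives $|L(q,w)-L(\ol q,\ol w)|$ and $|\wh{L}(q,w)-\wh{L}(\ol q,\ol w)|$ each at most $A\epsilon$, where $A=1+(1+\gamma)C_\Q^q+2(1+\gamma)C_\W^q$ is a (generously chosen) Lipschitz constant accounting for both the population and empirical halves. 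A triangle inequality through $(\ol q,\ol w)$ yields $|L(q,w)-\wh{L}(q,w)|\le 2A\epsilon+B\sqrt{2\log(2\Ncal_\infty(\epsilon,\Q)\Ncal_\infty(\epsilon,\W)/\delta)/n}$, and substituting $\epsilon=\frac{B}{2A\sqrt n}$ makes the covering term $2A\epsilon=B/\sqrt n$, which is dominated by the statistical term since $2\log(2\Ncal_\infty(\epsilon,\Q)\Ncal_\infty(\epsilon,\W)/\delta)\ge1$ (the cardinalities are at least one and $\delta\le1$); this gives the claimed factor-$2B$ bound.

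The only genuinely fiddly point is the discretization step: one must verify that $A$ is large enough to absorb every occurrence of $q$ and $w$ in $l$ (in particular the two copies of $q$), and that the pointwise perturbation bound transfers to the empirical average, so that after plugging in the stated $\epsilon$ the covering error is swallowed by the statistical error and the $\wt{O}(n^{-1/4})$ rate is unchanged. Everything else is a transcription of the finite-class argument: once Lemma~\ref{lem:stat_error_infinite} is in hand, the bound on $\|\wh{q}-q^\pi\|_{2,\nu}$ in Theorem~\ref{thm:infinite_functions} follows verbatim from the strong-convexity of $\E_\nu[f_{s,a}(q(\cdot))]$ and the saddle-point decomposition of Appendix~\ref{proof:q_bound}, with $|\Q|,|\W|$ replaced by $\Ncal_\infty(\epsilon,\Q),\Ncal_\infty(\epsilon,\W)$.
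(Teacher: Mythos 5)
Your proposal is correct and follows essentially the same route as the paper's proof: cancel the exactly-computed regularizer, reduce to the unregularized loss, cover $\Q\times\W$ in sup norm, apply Hoeffding plus a union bound on the cover, and absorb the $2A\epsilon$ discretization error into the statistical term via the choice $\epsilon = \frac{B}{2A\sqrt{n}}$ (your observation that the paper's constant $A$ is generous — the $q$-perturbation really only costs $(1+\gamma)C_\W^q\epsilon$ — is accurate but immaterial, since the stated $A$ only needs to be an upper bound). No gaps.
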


\begin{proof}[Proof of Lemma~\ref{lem:stat_error_infinite}]
    First, because the regularization term computes $\E_\nu[\cdot]$ exactly (not from samples), it has no effect on our bound. Formally, define the unregularized population Lagrangian to be
    $$L_0(q,w) = \E_{d^D}[r(s,a) + \gamma q(s',\pi) - q(s,a)],$$
    and its empirical version to be $\wh{L}_0(q,w)$. Then the LHS of Lemma~\ref{lem:stat_error_infinite} is equivalent to 
    \begin{align*}
        |L(q,w) - \wh{L}(q,w)| &= |L_0(q,w) + \E_\nu[f_{s,a}(q(s,a))] - \wh{L}_0(q,w) - \E_\nu[f_{s,a}(q(s,a))]| \\
        &= |L_0(q,w) - \wh{L}_0(q,w)|,
    \end{align*}
    so it suffices to bound the statistical error of estimating the unregularized Lagrangian $\wh{L}_0$. 
    
    For some (later to-be-specified) $\epsilon > 0$, let $\ol\Q$ be a minimal $\epsilon$-covering of $\Q$ in the infinity norm as per Definition~\ref{def:covering}, that is, $|\ol\Q| = \Ncal_\infty(\epsilon,\Q)$. Let $\ol\W$ be defined similarly for $\W$. 
    Then for any $(q,w) \in \Q \times \W$, let $(\ol{q}, \ol{w}) \in \ol\Q \times \ol\W$ be such that $\|q - \ol{q}\|_\infty \leq \epsilon$ and $\|w - \ol{w}\|_\infty \leq \epsilon$. By triangle inequality, 
    \begin{align*}
        |L_0(q,w) - \wh{L}_0(q,w)| &\leq |L_0(q,w) - \wh{L}_0(q,w) - L(\ol{q},\ol{w}) - \wh{L}(\ol{q}, \ol{w})| + |L(\ol{q}, \ol{w}) - \wh{L}(\ol{q}, \ol{w})|
    \end{align*}
    Next, define $\ell_{sas'}(q,w) := w(s,a)(r(s,a) + \gamma q(s',\pi) - q(s,a))$ such that $L_0(q,w) = \E_{d^D}[\ell_{sas'}(q,w)]$ and $\wh{L}_0(q,w) = \frac{1}{n} \sum_{i=1}^n \ell_{s_i a_i s_i'}(q,w)$. Then we can further upper bound the above as: 
    \begin{align*}
        |L_0(q,w) - \wh{L}_0(q,w)| &\leq 2\max_{s,a,s'} \underbrace{|\ell_{sas'}(q,w) - \ell_{sas'}(\ol{q}, \ol{w})|}_{\text{(T1)}} + + \underbrace{|L(\ol{q}, \ol{w}) - \wh{L}(\ol{q}, \ol{w})|}_{\text{(T2)}}.
    \end{align*}
    Term (T1) can be controlled using the $\epsilon$-covering definition, and (T2) can be controlled using standard concentration methods. Their respective bounds are provided below, with proofs in the next subsection: 
    \begin{lemma}[Bound for T1]\label{lem:t1_bound}
        Let $\ol{\Q}$ and $\ol{\W}$ be $\epsilon$-coverings of $\Q$ and $\W$, respectively, satisfying Definition~\ref{def:covering}. Then for any $(q,w) \in \Q \times \W$, there exists $(\ol{q},\ol{w}) \in \ol\Q \times \ol\W$ such that $\|q - \ol{q}\|_\infty \leq \epsilon$ and $\|w - \ol{w}\|_\infty \leq \epsilon$, and if Assumption~\ref{assum:bounded_q} holds,
        \begin{align*}
            \max_{sas'}|\ell_{sas'}(q,w) - \ell_{sas'}(\ol{q},\ol{w})| \leq A\epsilon, 
        \end{align*}
        with $A =1 + (1+\gamma) C_{\Q}^q + 2(1+\gamma)C_{\W}^q$. 
    \end{lemma}
    \begin{lemma}[Bound for T2]\label{lem:t2_bound}
        Let $\ol{\Q}$ and $\ol{\W}$ be minimal $\epsilon$-coverings of $\Q$ and $\W$, respectively, as in Definition~\ref{def:covering}, that is, $|\ol{\Q}| = \Ncal_\infty(\epsilon,\Q)$ and $|\ol{\W}| = \Ncal_\infty(\epsilon,\W)$. Then if Assumption~\ref{assum:bounded_q} holds, for any $(\ol{q}, \ol{w}) \in \ol{\Qcal} \times \ol{\Wcal}$ w.p. $\geq 1-\delta$, 
        \begin{align*}
            |L_0(\ol{q}, \ol{w}) - \wh{L}_0(\ol{q}, \ol{w})| \leq B\sqrt{\frac{2\log \frac{\Ncal_\infty(\epsilon,\Q)\Ncal_\infty(\epsilon,\W)}{\delta}}{n}},  
        \end{align*}
        where $B = \C_{\W}^q(1 + (1+\gamma) \C_{\Q}^q)$. 
    \end{lemma}
    Putting these two bounds together, letting $A$ be as in Lemma~\ref{lem:t1_bound} and $B$ be as in Lemma~\ref{lem:t2_bound}, we have 
    \begin{align*}
        |L_0(q,w) - \wh{L}_0(q,w)| &\leq 2 A\epsilon + B\sqrt{\frac{\log \frac{2 \Ncal_\infty(\epsilon,\Q)\Ncal_\infty(\epsilon,\W)}{\delta}}{n}}. 
    \end{align*}
    
    Choosing $\epsilon = \frac{B}{2A\sqrt{n}}$ gives the final bound:
    \begin{align*}
        |L(q,w) - \wh{L}(q,w)| = |L_0(q,w) - \wh{L}_0(q,w)|
        &\leq \frac{B}{\sqrt{n}} + B\sqrt{\frac{\log \frac{2 \Ncal_\infty(\epsilon,\Q)\Ncal_\infty(\epsilon,\W)}{\delta}}{n}} \\ 
        &\leq 2B\sqrt{\frac{\log \frac{2 \Ncal_\infty(\epsilon,\Q)\Ncal_\infty(\epsilon,\W)}{\delta}}{n}}.
    \end{align*}
\end{proof}

\subsection{Proofs for Helper Lemmas}
The proofs of Lemmas~\ref{lem:t1_bound} and~\ref{lem:t2_bound} are given below:
\begin{proof}[Proof of Lemma~\ref{lem:t1_bound}]
    For any $s,a,s'$, (since this tuple is fixed, going forward, we drop the $s,a,s'$ subscript from $\ell$ for brevity)
    \begin{align*}
        |\ell_{sas'}(q,w) - \ell_{sas'}(\ol{q},\ol{w})| &= |\ell(q,w) - \ell(q,\ol{w}) + \ell(q,\ol{w}) - \ell(\ol{q},\ol{w})| \\
        &\leq \underbrace{|\ell(q,w) - \ell(q,\ol{w})|}_{\text{T3}} + \underbrace{|\ell(q,\ol{w}) - \ell(\ol{q},\ol{w})|}_{\text{T4}}
    \end{align*}
    (T3) expresses the error from the covering approximation for $w$, while (T4) expresses this for $q$. First, to bound (T3), 
    \begin{align*}
        |\ell(q,w) - \ell(q,\ol{w})| &= |(w(s,a) - \ol{w}(s,a))(r(s,a) + \gamma q(s',\pi) - q(s,a)| \\ 
        &\leq \|w - \ol{w}\|_\infty(\|r\|_\infty + (1+\gamma)\|q\|_\infty) \\ 
        &\leq \epsilon(1 + (1+\gamma) C_{\Q}^q). 
    \end{align*}
    To bound (T4), 
    \begin{align*}
        |\ell(q,\ol{w}) - \ell(\ol{q},\ol{w})| &= |\ol{w}(s,a)(\gamma q(s',\pi) - q(s,a) - \gamma \ol{q}(s',\pi) + \ol{q}(s,a))| \\ 
        &\leq 2(1+\gamma) \|\ol{w}\|_\infty \| q - \ol{q}\|_\infty \\
        &\leq 2(1+\gamma) C_{\W}^q \epsilon. 
    \end{align*}
    Since these two inequalities hold for any $sas'$, combining them directly gives lemma statement. 
\end{proof}
\begin{proof}[Proof of Lemma~\ref{lem:t2_bound}]
    This is a straightforward application of Hoeffding's with union bound over $\ol{\Q},\ol{\W}$, akin to the proof of Lemma~\ref{lem:q_statistical_error} (which is over $\Q, \W$). 
\end{proof}

\end{document}